\documentclass[10pt,twocolumn,letterpaper]{article}

\usepackage[pagenumbers]{wacv} % To force page numbers, e.g. for an arXiv version

\usepackage{amsthm}
\usepackage{multirow}

\newif\ifsupp
\supptrue     % <- true: supplementary only.  false: main paper only.
\AtEndPreamble{
    \theoremstyle{plain}
    \newtheorem{theorem}{Theorem}
    \newtheorem{lemma}[theorem]{Lemma}
    \newtheorem{corollary}[theorem]{Corollary}

    \theoremstyle{definition}
    \newtheorem{definition}[theorem]{Definition}

    \theoremstyle{remark}
    \newtheorem{remark}[theorem]{Remark}

    \AtBeginEnvironment{lemma}{\crefalias{theorem}{lemma}}
    \AtBeginEnvironment{corollary}{\crefalias{theorem}{corollary}}
    \AtBeginEnvironment{definition}{\crefalias{theorem}{definition}}
    \AtBeginEnvironment{remark}{\crefalias{theorem}{remark}}

    \crefname{theorem}{Theorem}{Theorems}
    \crefname{lemma}{Lemma}{Lemmas}
    \crefname{corollary}{Corollary}{Corollaries}
    \crefname{definition}{Definition}{Definitions}
    \crefname{remark}{Remark}{Remarks}
    \Crefname{theorem}{Theorem}{Theorems}
    \Crefname{lemma}{Lemma}{Lemmas}
    \Crefname{corollary}{Corollary}{Corollaries}
    \Crefname{definition}{Definition}{Definitions}
    \Crefname{remark}{Remark}{Remarks}
}

\AtEndPreamble{
  \newcommand{\suppref}[2]{\ifcsname r@#1\endcsname\cref{#1}\else #2\fi}
}

\definecolor{wacvblue}{rgb}{0.21,0.49,0.74}
\usepackage[pagebackref,breaklinks,colorlinks,allcolors=wacvblue]{hyperref}

\def\wacvPaperID{1160} % *** Enter the WACV Paper ID here
\def\confName{WACV}
\def\confYear{2027}

\title{Right In-Place (RiP) Convolution: A Simple, General, and Near-Optimal Strategy
for Memory-Efficient CNN Inference}
\author{Opegbemi Matthias Busoye \quad Tolulope Matthew Busoye \quad Eghonghon-aye Eigbe\\
PowerLabs Technologies, Lagos, Nigeria\\
{\tt\small \{matthias, matthew, eghonghon\}@powerlabstech.com}
}

\begin{document}
\maketitle
\begin{abstract}
Activation memory, not compute, limits CNN inference on constrained hardware such as microcontrollers. Direct in-place convolution removes the dual-buffer cost, but the memory-optimal formulation of Gural and Murmann~\cite{pmlr-v97-gural19a} assumes valid padding, unit stride, unit dilation, and odd square kernels, and needs a non-sequential traversal costing $2\times$ inference time in transposes. We identify two regimes in which their published closed form does not hold: (1) an under-allocation of exactly $(k-1)C_{in} \bmod (C_{out}-C_{in})$ scalars, active on every convolutional layer of their own deployed network and manifesting as a silent corruption of still-live input; (2) an unbounded overestimate, up to $2{,}432\times$, once the critical leg leaves the output grid. We correct both and generalize to arbitrary stride, dilation, padding, and rectangular kernels. We then propose Right In-Place (RiP) convolution, a bit-identical operation in which every layer reads its input right-aligned in a shared workspace and writes its output left-aligned from index zero. The debt is piecewise affine in the output pixel index, so evaluating its breakpoints in $O(1)$ yields the minimum safe gap without enumerating the output grid, with row-major access preserved. Across $10{,}000$ random layers RiP produced no corruption, and across 84 convolutional layers from 25 architectures it matches the herringbone workspace exactly on 58 and within 5\% on 81, using 24.8\% less memory than dual buffering on average. Written into TinyEngine's~\cite{lin2020mcunettinydeeplearning} kernels and deployed to a Raspberry Pi Pico 1 and Pico 2, it cuts peak activation memory across eleven MCUNet models by 12.5 to 33.3\% at unchanged cycle counts and bit-identical outputs, raising the number of models that fit the Pico 1's 256\,KB SRAM from six to nine.
% \keywords{In-place convolution \and Memory-efficient inference \and
% Microcontrollers \and Edge AI \and TinyML}
\end{abstract}
    
\section{Introduction}
\label{sec:intro}

Inference is moving from the cloud to the edge, where running locally keeps
user data on the device, tolerates intermittent connectivity, and fits a
single-Joule power envelope. The memory hierarchy is what constrains it: a
modern GPU has tens of gigabytes of DRAM, a mid-range microcontroller (MCU)
such as the RP2040 has 256\,KB\footnote{We report 256\,KB, not the RP2040
full 264\,KB, since the last two 4\,KB banks (SCRATCH\_X/SCRATCH\_Y) are
reserved by the Pico SDK's default linker script for per-core stacks and are
not part of the contiguous RAM region available to the application.} of SRAM,
and the low-end 8-bit class still shipped by the billion provides 2--16\,KB.

That makes deployment a memory problem before it is a compute problem, and one
question decides whether a network runs well: does the working set fit in the
fast memory tier, or must it spill to a slower and more energy-expensive
one~\cite{Horowitz201411CE}? Weights are by now well served, with
pruning, quantization, and bit-packing reaching compression above
$50\times$~\cite{busoye2026dmc}. Activations are not. They must be materialized
during inference, and for the early layers of a typical CNN, the input feature
map alone can exceed the budget. Microcontrollers make this hardest and motivate
our analysis, but nothing in it is specific to them: the same accounting governs
any device whose working set competes for a fast memory tier.

Direct in-place convolution, overwriting stale input activations with new outputs as computation proceeds, needs no auxiliary buffer, no cross-layer scheduling, and no hardware modification. It works because convolution sits between two extremes. In-place execution is routine for an elementwise operator such as ReLU, where each output depends on one input, and impossible for a dense layer $y = Wx$, where nothing goes stale until the whole output exists. Specifically, a convolution output depends on a bounded neighborhood of $k_h k_w C_{in}$ inputs, so what is needed is not a second buffer but a \emph{gap}: a bounded region of slack memory between the write frontier and the next live input. Determining that gap is the problem this paper solves.

Gural and Murmann~\cite{pmlr-v97-gural19a} prove that a \emph{herringbone} traversal minimizes activation memory for direct convolution, fitting an MNIST classifier into 2\,KB of SRAM. Two gaps remain. Their analysis assumes valid padding, unit stride, unit dilation, and odd square kernels. Modern CNNs violate every one of these assumptions: strided downsampling~\cite{liu2022convnet2020s}, dilated convolutions~\cite{chen2018encoderdecoderatrousseparableconvolution}, non-square kernels~\cite{szegedy2015rethinkinginceptionarchitecturecomputer}, and same-padding are staples of ResNet~\cite{residual_7780459}, MobileNet~\cite{howard2017mobilenetsefficientconvolutionalneural}, EfficientNet~\cite{tan2020efficientnetrethinkingmodelscaling}, and MCUNet~\cite{lin2020mcunettinydeeplearning}. Also, the geometric lemma at its heart does not generalize to anisotropic receptive fields. Moreover, optimality demands a non-sequential traversal, forcing an in-place transpose at every row--column switch and a final inverse permutation, which the authors report costs $2\times$ inference time.

\paragraph{Contributions.} We identify two regimes in which the published
closed form fails, one under-allocating and one overestimating without bounds,
and correct both. We generalize the analysis to arbitrary stride, dilation,
padding, rectangular kernels, and uneven output grids. We then propose
\emph{Right In-Place} (RiP) convolution, a row-major layout whose minimum safe
gap is obtained by breakpoint evaluation, and validate it on $10{,}000$
synthetic layers, 84 layers from 25 architectures, and nine and eleven MCUNet models
deployed to a Raspberry Pi Pico 1 and Pico 2 respectively.

\begin{figure*}[!ht]
    \centering
    \includegraphics[width=1\textwidth]{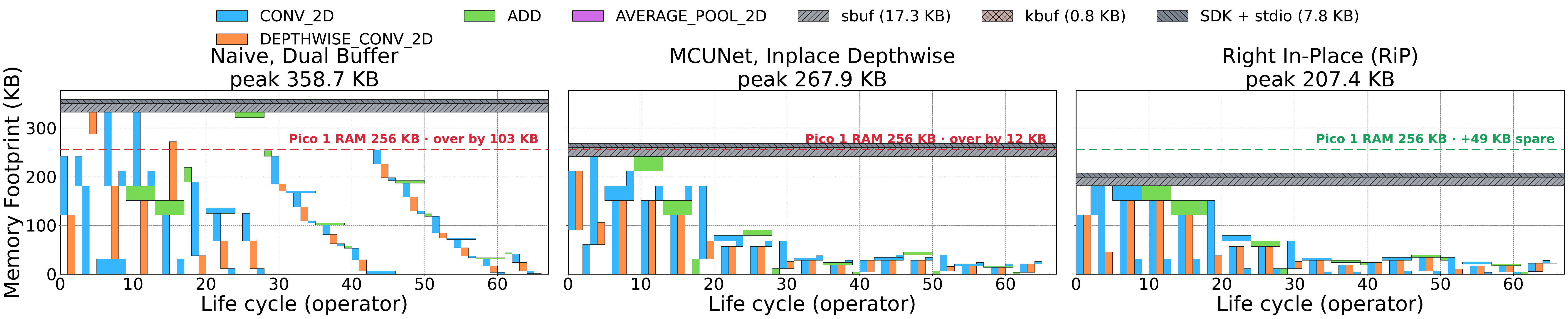}
    \caption{Workspace layout on a Pico 1 for MCUNet-in3 under naive dual buffering, TinyEngine's scheduler, and RiP. All three are placed by the same MILP memory allocator (\cref{sec:execution}). On the first two it reaches the same peak as TinyEngine's first-fit memory allocator~\cite{lin2020mcunettinydeeplearning}, so the allocator alone does not account for the saving; under in-place convolution first-fit fragments the arena and does not allow the full utilization of the extra memory, so the MILP is what lets RiP's gap be reclaimed rather than stranded.}
    \label{fig:mcuin3_memory_layout}
\end{figure*}

\section{Related Work}
\label{sec:related}

Work on activation memory for kilobyte-scale devices divides along the axis of
memory reuse that each method exploits. \cref{tab:related_axes} places our contribution
(a fine-grained \emph{spatial} in-place scheme for general channel-mixing
convolutions) against the rest.

\paragraph{Spatial in-place convolution.}
Gural \& Murmann~\cite{pmlr-v97-gural19a} first showed that direct convolution
can overwrite stale input activations with new outputs, and proved that a
\emph{herringbone} traversal of the output grid, alternating rows and columns
so that edge pixels are reclaimed as early as possible, achieves the minimum
activation memory under valid padding, unit stride, unit dilation, and odd
square kernels. The result is memory-optimal but costs a non-sequential output
order, an in-place transpose at every row-to-column switch, and a final inverse
permutation; their Arduino case study reports $2\times$ inference overhead
($819\,\mathrm{s}$ against $396\,\mathrm{s}$). Their concurrently introduced
\emph{Replace} strategy runs row-major at naive speed, but is only analyzed for 
the same restricted setting and fragments freed memory at row ends. Neither has
been extended to arbitrary stride, dilation, padding, or rectangular kernels,
and few subsequent methods have revisited the layout problem itself. Zhang et
al.~\cite{zhang2018highperformancezeromemoryoverhead} reorder loops for
``zero-memory-overhead'' direct convolution, but measure overhead \emph{beyond}
the storage of the input and output activations, so a full output buffer is
still allocated. RiP extends the Gural \& Murmann analysis to arbitrary
parameters and removes the transpose overhead, using a fragmentation-free
single-gap layout with an $O(1)$ closed-form gap.

The closest confusable work is buffer replacement. Sakr et
al.~\cite{sakr2021memory} extend CMSIS-NN with a single arena sized to the
largest pair of adjacent layer activations so that every layer draws its input
and output from the same reused space, cutting whole-network activation memory
by roughly $27.2\%$. However, within a layer, the convolution is still dual-buffered; the
granularity is the whole tensor, and the decision is made between layers so a
layer's own input and output coexist while it runs. RiP overlaps those two in
one buffer sized by the worst instant, and the two compose: our per-layer bound
sets the size of the very tensors such an arena must pack.

\paragraph{Other axes.}
Channel in-place exploits the fact that depthwise convolutions do not mix
channels, so the input activation of channel $\gamma$ may be overwritten by the
output of channel $\gamma{+}1$. TinyEngine~\cite{lin2020mcunettinydeeplearning,
Lin_2023} measures a $1.6\times$ peak reduction on depthwise layers and
TinyAD~\cite{tinyad_10093068} formalizes the same reschedule with a transit
buffer. Neither can apply to the $3{\times}3$ and pointwise $1{\times}1$
convolutions that dominate modern activation memory, which is exactly the case
RiP targets. Cross-layer methods attack the peak from outside the layer:
MCUNetV2~\cite{lin2024mcunetv2memoryefficientpatchbasedinference} computes a
spatial patch across several layers before advancing, cutting MobileNetV2's
peak by up to $8\times$ at the cost of recomputation, and fused depthwise
tiling~\cite{stahl2023fuseddepthwisetilingmemory} and line-buffer
pipelining~\cite{kang2022aocstreamallonchipcnnaccelerator} reach the same end
by other means. All still pay the full
$H_pW_pC_{in} + H_{out}W_{out}C_{out}$ inside any single bottleneck layer; RiP
attacks that residual, and the patch itself can be executed in our layout. In
silicon, RAMAN~\cite{krishna2023ramanreconfigurablesparsetinyml} overlays the
input and output activation buffers on one on-chip region for up to $50\%$
storage reduction --- the coarse, whole-buffer analogue of our idea, tied to a
specific dataflow and requiring custom hardware.

\textbf{Memory planning.} A parallel line treats activation memory as
tensor-lifetime allocation. TFLite Micro packs tensors into a fixed arena by
first fit~\cite{david2021tensorflowlitemicroembedded}; iterative graph
optimization~\cite{zhong2023memoryawareschedulingcomplexwired} cuts the peak by
up to 26.4\% against it, and MoteNN~\cite{motenn_10.1145/3649329.3655922} by up to 80\% against prior
coarse-grained schedulers. These answer ``given a set of tensor lifetimes, how
tightly can we pack them?''
% A parallel line treats activation memory as tensor-lifetime allocation. TFLite
% Micro First fit allocation~\cite{david2021tensorflowlitemicroembedded}, iterative graph
% optimization~\cite{zhong2023memoryawareschedulingcomplexwired}, and
% MoteNN~\cite{motenn} pack tensors into a fixed budget
% given a computation schedule, reporting up to $26.4\%$ and up to $80\%$ peak
% reduction respectively over topology-based scheduling. 
% These answer
% ``\emph{given} a set of tensor lifetimes, how tightly can we pack them?'' 
RiP
answers the lower-level question of what the smallest workspace is under which
a single convolution can execute at all, so a tighter per-layer bound shrinks
the tensors any planner must pack. Weight
quantization~\cite{han2016deepcompressioncompressingdeep,
nagel2020downadaptiveroundingposttraining,
hubara2020improvingposttrainingneural},
pruning~\cite{han2015learningweightsconnectionsefficient,
luo2017thinetfilterlevelpruning, he2017channelpruningacceleratingdeep},
architecture search~\cite{cai2019proxylessnasdirectneuralarchitecture,
cai2020onceforalltrainnetworkspecialize}, bit-packing~\cite{busoye2026dmc},
and fused-layer accelerators~\cite{Alwani2016FusedlayerCA} all reduce total
model footprint or cross-layer traffic without altering the intra-layer
activation peak, and are complementary throughout.

\begin{table*}[t]
\centering
\small
\caption{Memory-reduction axes for CNN inference on tiny devices. ``General'' = applies to standard channel-mixing convolutions; ``Sequential'' = preserves row-major output order without reordering; ``Bound'' = derives a closed-form minimum workspace for the layer.}
\label{tab:related_axes}
\begin{tabular}{@{}lccccl@{}}
\toprule
\textbf{Method} & \textbf{Axis} & \textbf{General} & \textbf{Sequential} & \textbf{Bound} & \textbf{Restrictions} \\
\midrule
Herringbone~\cite{pmlr-v97-gural19a}      & spatial in-place & \checkmark & $\times$   & \checkmark & valid, $s{=}1$, $d{=}1$, odd $k$ square \\
Replace~\cite{pmlr-v97-gural19a}          & spatial in-place & \checkmark & \checkmark & $\times$   & valid, $s{=}1$, $d{=}1$, odd $k$ square \\
In-place DW~\cite{lin2020mcunettinydeeplearning}     & channel in-place & $\times$   & \checkmark & --         & depthwise only \\
MCUNetV2~\cite{lin2024mcunetv2memoryefficientpatchbasedinference}          & cross-layer patch& \checkmark & --         & --         & recomputation overhead \\
RAMAN~\cite{krishna2023ramanreconfigurablesparsetinyml}            & hardware overlay & \checkmark & --         & $\times$   & requires custom accelerator \\
CMSIS-NN Replace~\cite{sakr2021memory} & inter-layer arena & \checkmark & \checkmark & $\times$   & whole-tensor granularity \\
MoteNN~\cite{motenn_10.1145/3649329.3655922}          & tensor packing   & \checkmark & --         & --         & orthogonal to per-layer bound \\
\textbf{RiP (ours)}       & spatial in-place & \checkmark & \checkmark & \checkmark & --- \\
\bottomrule
\end{tabular}
\end{table*}

\section{Generalized Herringbone Debt}
\label{sec:herringbone}

The herringbone traversal (as in~\cite{pmlr-v97-gural19a}) processes output pixels in an alternating
row--column order (\cref{fig:herringbone_squ}), reclaiming edge input pixels as
they go stale. Under the restricted setting, a leg of length $x$ contributes net
debt $D(x) = x(C_{out}-C_{in}) - (k-1)C_{in}$, which increases with $x$. Legs
are processed in decreasing length, so the running sum rises while legs are long
enough to have a positive debt and falls once they are not; the peak sits where the
sign changes, at the critical point $x^* = (k-1)C_{in}/(C_{out}-C_{in})$. The
as-published closed form~\cite[Eq.~2]{pmlr-v97-gural19a} is

\begin{equation}
\label{eq:gural_eq2}
\begin{split}
D_{\text{hb}}^{\mathrm{G\&M}} = {}&\big(H_{out}W_{out} - \lfloor x^* \rfloor^{2}\big) C_{out} \\
                                  &- \big(H_{in}W_{in} - (\lfloor x^* \rfloor + k - 1)^{2} - k\big) C_{in}.
\end{split}
\end{equation}

\begin{figure}[htbp]
    \centering
    \begin{subfigure}[h]{0.48\linewidth}
        \centering
        \includegraphics[height=3.5cm]{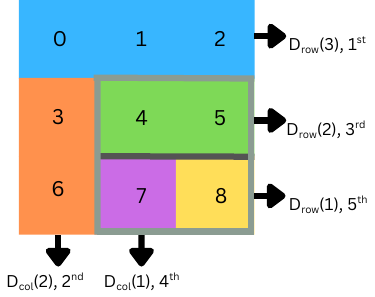}
        \caption{Herringbone order}
        \label{fig:herringbone_squ}
    \end{subfigure}
    \hfill
    \begin{subfigure}[h]{0.48\linewidth}
        \centering
        \includegraphics[height=3.5cm]{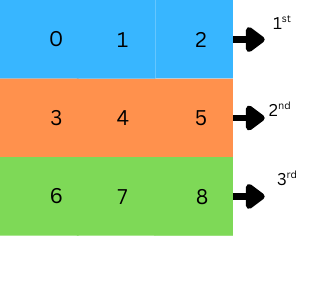}
        \caption{Row-major in-place order}
        \label{fig:in-place-conv-order}
    \end{subfigure}
    \caption{Convolution traversal order.}
    \label{fig:convolution_order}
\end{figure}

\cref{sec:related} noted that the memory-optimal herringbone has never been
extended beyond valid padding, unit stride, unit dilation, and odd square
kernels. In working out that extension we found two independent regimes in which
\cref{eq:gural_eq2} does not hold. This section identifies both limitations
(\cref{sec:limitations}), states a corrected closed form on square output grids
(\cref{sec:corrected}), and summarizes the generalization to arbitrary
convolution parameters (\cref{sec:hb_generalization}).

\subsection{Two Limitations of \cref{eq:gural_eq2}}
\label{sec:limitations}

\paragraph{Limitation 1: the peak is not attained at an integer leg.}
\label{par:limitation1}
\cref{eq:gural_eq2} evaluates the remaining square of unprocessed output at
$\lfloor x^* \rfloor$, which is correct only when $x^*$ is an integer. Take
$k=3$, $C_{in}=3$ (\cref{fig:herringbone_squ} shows the output grid). At $C_{out}=6$ we have
$x^*=2$, the square of side $\lfloor x^* \rfloor$ falls exactly between
processing $D_{col}(2)$ and $D_{row}(2)$, and since $D(x^*)=0$, the formula
evaluates the peak correctly. At $C_{out}=8$ we have $x^*=1.2$, and the peak now sits
between $D_{row}(2)$ and $D_{col}(1)$: the square of side
$\lfloor x^* \rfloor$ is $1\times1$ (only pixel 8), but the unprocessed output at that moment
is the $\lfloor x^* \rfloor \times \lceil x^* \rceil$ rectangle, $1\times2$ (pixels 7 and 8).
The formula therefore evaluates a peak that omits one leg of output.

The shortfall is exactly $(k-1)C_{in} \bmod (C_{out}-C_{in})$, derived in
\suppref{lem:flooring_error}{the supplementary material}\footnote{Proofs, the generalized derivation, the validation protocol and the
per-layer results are in the supplementary material.}. An implementation allocating
$D_{hb}^{\mathrm{G\&M}}$ scalars therefore lets the write frontier advance into
input an unprocessed output still needs, overwriting that many scalars. Two
things keep this silent: the write lands inside the
workspace, so it raises no runtime fault and the count is small (one or two scalars on the layers of
\cref{tab:limitations} against low-bit activations). However, it does not stay small,
being bounded by $C_{out}-C_{in}$: a $16\to64$, $k=3$ layer loses 32. All three
convolutional layers of the network deployed in~\cite{pmlr-v97-gural19a} have
non-integer $x^*$ (\cref{tab:limitations}).

\paragraph{Limitation 2: unbounded overestimate outside the grid.}
\cref{eq:gural_eq2} implicitly assumes the square of side
$\lfloor x^* \rfloor$ fits inside the output grid, that is
$\lfloor x^* \rfloor \le \min(H_{out}, W_{out})-1$. That assumption fails
whenever $C_{out}-C_{in}$ is small relative to $(k-1)C_{in}$: a
near-channel-preserving layer whose kernel is large relative to its output
grid. The quadratic terms are then evaluated outside the grid and return peak
debts larger than the layer's whole activation footprint, by up to
$2{,}432\times$ on the configurations of \cref{tab:limitations}.

\begin{table}[t]
\centering
\caption{The two limitations in the as-published herringbone formula
(\cref{eq:gural_eq2}). \textbf{Truth} is the minimum corruption-free gap, found
by binary search against the simulator; \textbf{Err.} is G\&M minus Truth, so
negative entries are under-allocations. \cref{cor:corrected_hb} reproduces
Truth on every row. $H{=}W$ is the input spatial size.}
\label{tab:limitations}
\scriptsize
\setlength{\tabcolsep}{8pt}
\begin{tabular}{@{}rrrrrrrr@{}}
\toprule
$C_{in}$ & $C_{out}$ & $k$ & $H{=}W$ & $x^*$ & G\&M & Truth & Err. \\
\midrule
\multicolumn{8}{@{}l}{\emph{Limitation 1: G\&M's deployed MNIST network, conv\_0--conv\_2}}\\
1 & 5 & 3 & 14 & 0.5 & 531 & 533 & $-2$ \\
5 & 8 & 3 & 12 & 3.3 & 148 & 149 & $-1$ \\
8 & 11 & 3 & 10 & 5.3 & 45 & 46 & $-1$ \\
\addlinespace
\multicolumn{8}{@{}l}{\emph{Limitation 2: critical leg exceeds the output grid}}\\
3 & 4 & 5 & 8 & 12 & 79 & 7 & $+72$ \\
8 & 9 & 7 & 9 & 48 & 2{,}081 & 11 & $+2{,}070$ \\
16 & 17 & 9 & 12 & 128 & 15{,}520 & 20 & $+15{,}500$ \\
16 & 17 & 15 & 18 & 224 & 48{,}640 & 20 & $+48{,}620$ \\
\bottomrule
\end{tabular}
\end{table}

\subsection{Corrected Closed Form}
\label{sec:corrected}

Limitation~1 is repaired by the partial-leg remainder $(\rho - \alpha_{hb}
\lfloor x^* \rfloor) = (k-1)C_{in} \bmod (C_{out}-C_{in})$, which is the debt sum leaves unpaired at its lower bound; Limitation~2, by clamping the critical leg to the grid it is supposed to
lie in. Both are proved in \suppref{app:corrected_hb_proof}{the supplementary material}.

\begin{corollary}[Corrected restricted form]
\label{cor:corrected_hb}
Under the restrictions of~\cite{pmlr-v97-gural19a}, write
$\alpha_{hb} \equiv C_{out}-C_{in}$, $\rho \equiv (k-1)C_{in}$, and
\begin{equation}
\hat{x} \;=\; \min\!\big(\lfloor x^* \rfloor,\; W_{out}-1\big),
\qquad x^* = \rho/\alpha_{hb}.
\label{eq:correction_limit2}
\end{equation}
If $\alpha_{hb} \le 0$ then $D_{hb} = C_{out}$. Otherwise
\begin{equation}
\begin{split}
D_{hb} &=  C_{out}\big(H_{out}W_{out} - \hat{x}^2\big)  \\
            & - C_{in}(H_{in}W_{in} - (\hat{x}+k-1)^2 - k)\\
            & + (\rho - \alpha_{hb}\hat{x}).
\end{split}
\label{eq:corrected_hb}
\end{equation}
\end{corollary}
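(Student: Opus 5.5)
We will prove \cref{cor:corrected_hb} by explicitly tracking the running debt of the herringbone traversal, that is, the number of scalars by which the write frontier must lead the oldest live input, and then maximizing it over the sequence of legs. We work on the square output grid of side $n=W_{out}=H_{out}$, with $H_{in}=n+k-1$ under valid padding and unit stride and dilation.

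\textbf{Step 1: describe the traversal state.} After peeling rows and columns down to an unprocessed output rectangle of size $a\times b$ with $a,b\in\{m,m+1\}$, the written output is $C_{out}(n^2-ab)$. The input whose receptive fields touch no unprocessed output pixel is stale. Its size is $C_{in}\big(H_{in}W_{in}-(a+k-1)(b+k-1)\big)$, apart from a boundary correction at the current leg end, which is where the $-k$ term in \cref{eq:gural_eq2} comes from.

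\textbf{Step 2: write the debt after each leg and find its maximum.} The debt after a leg is the output written minus the input freed. A leg of length $x$ changes the debt by $D(x)=x\alpha_{hb}-\rho$. Legs are processed in nonincreasing length (two legs of each length), so the running sum is unimodal. It peaks just before the first leg with $D(x)<0$, that is, after the last leg of length $\lceil x^*\rceil$, or at $x^*$ itself when $x^*$ is an integer. At that instant the unprocessed region is $\lfloor x^*\rfloor\times\lceil x^*\rceil$, not a $\lfloor x^*\rfloor$ square. Expanding the rectangle gives the square expression plus one extra leg of length $\hat x=\lfloor x^*\rfloor$. That leg contributes $-C_{out}\hat x$ on the output side and $+C_{in}(\hat x+k-1)-C_{in}(k-1)$ on the input side. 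The net correction is therefore $\rho-\alpha_{hb}\hat x$. By the division algorithm this equals $\rho \bmod \alpha_{hb}$, since $\hat x=\lfloor \rho/\alpha_{hb}\rfloor$. It also vanishes when $x^*$ is an integer, consistent with the $C_{out}=6$ example. We will cite \suppref{lem:flooring_error}{the supplementary lemma} for this remainder identity.

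\textbf{Step 3: clamp to the grid.} When $\lfloor x^*\rfloor\ge n$, every leg has positive debt. The maximum is then attained at the shortest legs, which have length $1$ because the traversal cannot produce legs longer than the grid. Substituting $\hat x=n-1$ into the same bookkeeping gives the stated formula, and the remainder term $\rho-\alpha_{hb}(n-1)$ is exactly the excess of $\rho$ over what the final legs repay. We must check that the partial-leg accounting at the grid boundary still holds with this substitution, by verifying the last two legs separately.

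\textbf{Step 4: the case $\alpha_{hb}\le 0$.} Every leg then frees at least as much input as it writes, so the peak occurs inside the first pixel. There, $C_{out}$ outputs must be held before any input becomes stale, which gives $D_{hb}=C_{out}$.

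\textbf{Main obstacle.} The hardest part is the exact boundary term in Step 1: the $-k$ and the precise instant within a leg at which input pixels go stale. An off-by-one there would shift the constant. We would first redo the $k=3$, $C_{in}=3$ examples and the rows of \cref{tab:limitations} by hand to fix the accounting convention, and only then carry out the general expansion.
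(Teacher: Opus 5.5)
\textbf{Gap in Step~3.} The clamp argument in Step~3 is wrong, and that step carries the whole Limitation~2 half of the corollary. For $\alpha_{hb}>0$ you have $D(x)=\alpha_{hb}(x-x^*)$. So when $\lfloor x^*\rfloor\ge n$, every traversed leg has $x\le n\le x^*$ and hence $D(x)\le 0$. The debt is non-positive, not positive.

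The running debt therefore never rises after the first leg. The peak falls at the end of the first, unpaired row leg of length $n$, which is the longest leg, not the shortest.

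Your own bookkeeping shows that $\hat x=n-1$ does not follow from your reasoning. A peak after the legs of length $1$ would leave the empty $0\times 1$ rectangle, and Step~1 would then return the $\hat x=0$ value. On the $C_{in}=3$, $C_{out}=4$, $k=5$, $H=W=8$ row of \cref{tab:limitations}, that value is $-53$ against the true $7$. So the substitution $\hat x=n-1$ is read off the statement, not derived.

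The repair fits your framework:
\begin{itemize}
\item After the first leg the unprocessed region is $(n-1)\times n$. This is the rectangle $\hat x\times(\hat x+1)$ with $\hat x=n-1$.
\item Your rectangle-to-square expansion then gives \cref{eq:corrected_hb} with correction $-D(n-1)=\rho-\alpha_{hb}(n-1)$. This is no longer $\rho\bmod\alpha_{hb}$, and the resulting value is $D(n)+kC_{in}$.
\item This matches the paper's route through \cref{lem:concavity}: the peak follows the legs of length $\min(\lfloor x^*\rfloor+1,W_{out})$, and $\hat x$ is one leg shorter.
\end{itemize}
The leg that needs separate care is that single unpaired leg of length $n$, which your ``two legs of each length'' overlooks. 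The last two legs are not the issue.

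\textbf{Smaller repairs.} In Step~2 the extra leg contributes $+C_{in}(\hat x+k-1)$ on the input side, since $(\hat x+k-1)(\hat x+k)-(\hat x+k-1)^2=\hat x+k-1$. The additional $-C_{in}(k-1)$ you wrote would make the net $-\alpha_{hb}\hat x$, not $\rho-\alpha_{hb}\hat x$.

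The $-k$ term you defer is settled by reading each state at the instant the last pixel of the just-finished leg is written. That pixel's receptive field has exactly $k$ input pixels outside the field of the unprocessed rectangle; this is the trailing $kC_{in}$ of \cref{lem:leg_pairing}. For $\alpha_{hb}>0$ the debt rises by $\alpha_{hb}$ per pixel within a leg, so these leg-end instants are the only candidates for the maximum.

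\textbf{Comparison with the paper.} With these fixes, your Steps~1, 2 and~4 are a sound alternative to the paper's argument. The paper telescopes leg sums into $\Phi$ (\cref{lem:closed_form_equivalence}) and then removes the spurious $D(\lfloor x^*\rfloor)$ (\cref{rem:unpaired_lower,lem:flooring_error}). You instead evaluate the state expression directly on the true $\hat x\times(\hat x+1)$ rectangle. That makes the origin of the correction term more transparent.
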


It is a corollary in the literal sense: specializing the generalized theorem of
\cref{sec:hb_generalization} to square output, valid padding, unit
stride and unit dilation returns \cref{eq:corrected_hb}, and our implementation has
no separate restricted path. \suppref{app:corrected_hb_proof}{The supplementary material} nonetheless proves
it directly from the restricted accounting, since that is the setting in which
\cref{eq:gural_eq2} was stated and the comparison has to be made there.

\subsection{Generalization and the Restricted Case}
\label{sec:hb_generalization}

Lifting the restrictions of~\cite{pmlr-v97-gural19a} breaks the row--column
symmetry. The traversal starts on the longer output axis and alternates, each
major leg pairing with a shorter minor leg, and closes on a final unpaired
major leg. \suppref{app:generalized_hb}{The supplementary material} develops this in full: it defines the
traversal, proves it visits every output pixel exactly once, and derives the
three leg types, whose debts are linear in position within a leg and quadratic
in leg length across legs. The peak is therefore a maximum over a constant
number of candidates and is evaluated in $O(1)$.

We use the generalized form only as a baseline. It gives the exact peak debt of
the generalized \emph{traversal}, not optimality among all traversals: the
isoperimetric argument of~\cite[Lemma~3.1]{pmlr-v97-gural19a} assumes an
isotropic square kernel and does not extend to anisotropic receptive fields.
What it does establish is that the comparison in \cref{sec:real_layers} exists
at all, since 43 of the 84 layers we measure fall outside the parameter range
in which \cref{eq:gural_eq2} is defined. A simulator that executes the
traversal in an allocation-tracked workspace confirms both forms: the
generalized form on all $10{,}000$ synthetic configurations of
\cref{sec:synthetic} and on all 84 real layers, and \cref{cor:corrected_hb} on
every row of \cref{tab:limitations}.

\section{Right In-Place Convolution}
\label{sec:right_align}
 
\cref{sec:herringbone} corrected the herringbone peak debt but kept its non-sequential traversal, and with it the transpose overhead. This section presents Right In-Place (RiP) convolution: a row-major\footnote{Throughout \cref{sec:right_align}, ``row-major`` refers to traversal order in the traditional raster sense, distinct from the ``major/minor axis`` of \cref{sec:herringbone}, which denotes the longer and shorter output dimension.} (\cref{fig:in-place-conv-order}) single-gap layout whose minimum safe gap follows from a piecewise affine debt function. The gap is solved in $O(1)$ by evaluating that function at nine breakpoints, no transpose and no reorder are required, and the derivation places no restriction on stride, dilation, padding, or kernel shape.

\subsection{Workspace Layout and Safety}
\label{sec:right_align_layout}

We allocate one contiguous workspace:

\begin{equation}
\underbrace{[\,0,\ldots,0\,]}_{\Delta\text{ slack memory}}
\;\Big|\;
\underbrace{X[0,0,0],\;\ldots,\;X[H_p{-}1,\,W_p{-}1,\,C_{in}{-}1]}_{\text{padded input, right-aligned}}
\label{eq:layout}
\end{equation}

holding $\Delta$ scalars of slack memory followed by the padded input. Input pixels are stored \emph{right} aligned in the workspace, offset by $\Delta$ and output pixels are written left-aligned from index~$0$ in row-major order, so the input pixel $(r,c)$ sits at $\Delta + (r W_p + c)C_{in}$ and the output pixel $(r,c)$ at $(r W_{out} + c)C_{out}$. 

Safety requires every output write to end before the \emph{next live input (NLI)} begins in the workspace, that being the lowest workspace index still required by an unprocessed output:

\begin{equation}
\mathrm{NLI}(r,c) =
\begin{cases}
\begin{aligned}[t]
&\Delta + \big(r s_h W_p + (c{+}1)s_w\big) C_{in} \\
&\quad \text{if } (c{+}1) < W_{out},
\end{aligned} \\[8pt]
\begin{aligned}[t]
&\Delta + (r{+}1)s_h W_p\, C_{in} \\
&\quad \text{if } (c{+}1) \ge W_{out} \text{ and } (r{+}1) < H_{out},
\end{aligned} \\[8pt]
\begin{aligned}[t]
& + \infty \\
&\quad \text{otherwise (last pixel).}
\end{aligned}
\end{cases}
\label{eq:nli}
\end{equation}

The stride-aware factor $(c{+}1)s_w$ rather than $c{+}1$ is the key correction: after output pixel $c$ is written with $s_w > 1$, input columns $cs_w$ through $(c{+}1)s_w - 1$ are already stale, so RiP frees memory earlier than the unit-stride analysis of~\cite{pmlr-v97-gural19a} permits. The second branch applies the same accounting to a row end, where the wider stale region is a full input row; the third marks the last output pixel, after which no input is live.
 
The kernel may read its taps in either of two orders, and the choice changes what the write frontier must clear. The receptive field of an output pixel may be gathered into a temporary buffer first, after which the output no longer depends on the workspace copy; or each tap may be read from its workspace address directly into a register, which requires the input to survive intact until the pixel's last channel is written~\cite{pmlr-v97-gural19a}. Write

\begin{equation}
\sigma =
\begin{cases}
0        & \text{receptive field gathered first,} \\
C_{out}  & \text{taps read from the workspace,}
\end{cases}
\label{eq:sigma}
\end{equation}
 
for the headroom the second order demands. It is not geometric debt: it is what production order costs, since an output pixel must reach its final slot before the inputs it overwrites can be declared stale.

\begin{definition}[Total debt]
\label{def:total_debt}
The total debt after writing output pixel $(r,c)$, before releasing the next stale inputs, is

\begin{equation}
\mathcal{D}(r,c) \;=\;
\mathrm{out\_end}(r,c) - \big(\mathrm{NLI}(r,c) - \Delta\big) + \sigma,
\label{eq:delta_rc}
\end{equation}

where $\mathrm{out\_end}(r,c) = \big(r W_{out} + (c{+}1)\big)C_{out}$ is the first address past the pixel just written. A gap of $\Delta$ scalars is safe at $(r,c)$ exactly when $\Delta \ge \mathcal{D}(r,c)$.
\end{definition}
 
\begin{lemma}[Piecewise affine debt]
\label{lem:piecewise_affine_debt}
Under the layout of \cref{eq:layout} and row-major traversal, $\mathcal{D}$ is affine in $(r,c)$ on each branch of \cref{eq:nli}:
 
\begin{equation}
\label{eq:debt}
\mathcal{D}(r,c) =
\begin{cases}
\begin{aligned}[t]
&\beta\,r + \alpha\,(c{+}1) + \sigma \\
&\quad \text{if } (c{+}1) < W_{\mathrm{out}},
\end{aligned} \\[8pt]
\begin{aligned}[t]
&\beta\,(r{+}1) + \sigma \\
&\quad \text{if } (c{+}1) \ge W_{\mathrm{out}} \text{ and } (r{+}1) < H_{\mathrm{out}},
\end{aligned} \\[8pt]
\begin{aligned}[t]
&\sigma \\
&\quad \text{otherwise,}
\end{aligned}
\end{cases}
\end{equation}
 
where
$\beta \equiv C_{\mathrm{out}} W_{\mathrm{out}} - s_h W_p C_{\mathrm{in}}$ is
the net cost of advancing one output row and
$\alpha \equiv C_{\mathrm{out}} - s_w C_{\mathrm{in}}$ that of advancing one
column. Either may be negative, meaning the step frees more input than the
output it writes consumes.
\end{lemma}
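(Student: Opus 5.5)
The proof is a direct substitution of the three branches of \cref{eq:nli} into \cref{def:total_debt}, followed by collecting terms. In each branch I will subtract the offset $\Delta$ from $\mathrm{NLI}(r,c)$ first, so that it cancels against the $-\Delta$ in \cref{eq:delta_rc}. What remains is a difference of two expressions, each linear in $r$ and $c$, plus the constant $\sigma$. That already shows $\mathcal{D}$ is affine on each branch; the remaining work is to identify the coefficients with $\beta$ and $\alpha$.

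\textbf{First branch}, $(c{+}1)<W_{out}$. We have $\mathrm{NLI}(r,c)-\Delta=(r s_h W_p+(c{+}1)s_w)C_{in}$ and $\mathrm{out\_end}(r,c)=(rW_{out}+(c{+}1))C_{out}$. Grouping the $r$-terms gives $r(C_{out}W_{out}-s_hW_pC_{in})=\beta r$. Grouping the $(c{+}1)$-terms gives $(c{+}1)(C_{out}-s_wC_{in})=\alpha(c{+}1)$. Adding $\sigma$ yields the first line of \cref{eq:debt}.

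\textbf{Second branch}, $(c{+}1)\ge W_{out}$ and $(r{+}1)<H_{out}$. Since $c\le W_{out}-1$, this branch forces $c=W_{out}-1$, so $\mathrm{out\_end}(r,c)=(r{+}1)W_{out}C_{out}$. Also $\mathrm{NLI}(r,c)-\Delta=(r{+}1)s_hW_pC_{in}$. The difference is $(r{+}1)\beta$, and adding $\sigma$ gives the second line. Because $c$ is pinned to $W_{out}-1$, this expression is also affine in $(r,c)$ in the trivial sense.

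\textbf{Third branch}, the last output pixel; this is the step that needs care and the one I expect to be the obstacle. Taken literally, \cref{eq:nli} sets $\mathrm{NLI}=+\infty$, so \cref{eq:delta_rc} would give $-\infty$. That is not the stated $\sigma$. I will argue that at the last pixel no input is required by any \emph{unprocessed} output, so geometric debt vanishes: the term $\mathrm{out\_end}-(\mathrm{NLI}-\Delta)$ contributes no positive constraint and should be read as $0$. The only requirement left is production-order headroom. When taps are read from the workspace ($\sigma=C_{out}$), the final pixel's own receptive field must survive until its last channel is written. When the receptive field was gathered first ($\sigma=0$), nothing at all is required. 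Hence $\mathcal{D}=\sigma$ on this branch.

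I will make this convention explicit, namely that the geometric term is taken as $\max(0,\cdot)$ when no live input remains. I will also check that it agrees with the safety criterion $\Delta\ge\mathcal{D}(r,c)$: with $\Delta\ge\sigma$, the final write cannot overrun inputs still being read. The lemma's closing remark that $\alpha$ or $\beta$ may be negative needs no proof. It is simply the observation that $s_wC_{in}>C_{out}$ (respectively $s_hW_pC_{in}>C_{out}W_{out}$) is allowed by the parameter ranges.
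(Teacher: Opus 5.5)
Your proposal is correct and follows the paper's proof: you substitute the three branches of the next-live-input formula into the total-debt definition, collect the $r$- and $(c{+}1)$-coefficients into $\beta$ and $\alpha$, and use $c = W_{out}-1$ on the second branch. The paper handles the last pixel with ``nothing is live and only $\sigma$ remains,'' which is exactly the convention you make explicit (reading the geometric term as $0$ rather than substituting $+\infty$ literally), so your extra care there clarifies the definition rather than departing from the paper's argument.
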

 
\begin{proof}
Substitute \cref{eq:nli} into \cref{eq:delta_rc}. On the first branch,
$\big(r W_{out} + (c{+}1)\big)C_{out} - \big(r s_h W_p + (c{+}1)s_w\big)C_{in}
+ \sigma$ collects to $\beta r + \alpha(c{+}1) + \sigma$. On the second,
$c{+}1 = W_{out}$, so $\mathrm{out\_end} = (r{+}1)W_{out}C_{out}$ and the
subtracted term is $(r{+}1)s_hW_pC_{in}$, giving $\beta(r{+}1) + \sigma$. On
the third nothing is live and only $\sigma$ remains.\qed
\end{proof}

\subsection{The Minimum Gap}
\label{sec:right_align_gap}
 
\begin{theorem}[Minimum sequential gap]
\label{thm:gap}
Let
\begin{equation}
\begin{split}
R_s = \{0,\; H_{out}{-}2,\; H_{out}{-}1\},
C_s = \{0,\; W_{out}{-}2,\; W_{out}{-}1\},
\qquad
\label{eq:breakpoints}
\end{split}
\end{equation}

clipped to non-negative values. The minimum gap admitting no corruption is

\begin{equation}
\Delta^{*} = \max\Big(
  \max_{(r,c)\,\in\, R_s \times C_s} \mathcal{D}(r,c),\;\;
  N_{out} - H_pW_pC_{in}
\Big),
\label{eq:delta_star}
\end{equation}
with $N_{out} = H_{out}W_{out}C_{out}$, and the total workspace is
$\mathcal{W}^{*} = \Delta^{*} + H_pW_pC_{in}$. The result holds for any kernel
shape $k_h \times k_w$, strides $(s_h,s_w) \ge 1$, dilations
$(d_h,d_w) \ge 1$, padding $(p_h,p_w) \ge 0$, and channel ratio
$C_{out}/C_{in} > 0$.
\end{theorem}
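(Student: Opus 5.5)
The proof splits into a reduction, by Definition~\ref{def:total_debt}, to computing $\max_{(r,c)}\mathcal{D}(r,c)$ over the whole output grid, and then an extremal argument using the piecewise affine form of Lemma~\ref{lem:piecewise_affine_debt}.

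\emph{Safety reduces to the grid maximum.} The workspace is corruption-free exactly when every output write ends before the NLI, so a gap $\Delta$ is safe iff $\Delta \ge \mathcal{D}(r,c)$ for all $(r,c)$. It must also hold the whole output once the input is dead, which requires $\Delta + H_pW_pC_{in} \ge N_{out}$. This is the second term in \cref{eq:delta_star}. It is needed separately because the third branch of \cref{eq:nli} sets $\mathrm{NLI}=+\infty$, so the last pixel's debt collapses to $\sigma$ and no longer records the final output extent. Hence $\Delta^*=\max\big(\max_{(r,c)}\mathcal{D}(r,c),\,N_{out}-H_pW_pC_{in}\big)$, and it remains to show the grid maximum is attained on $R_s\times C_s$.

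\emph{Interior branch, $c\le W_{out}-2$ and $r\le H_{out}-1$.} Here $\mathcal{D}=\beta r+\alpha(c+1)+\sigma$, which is separable and affine, and the domain is a product of integer intervals. For fixed $r$ the maximum over $c$ is attained at an endpoint, $c=0$ or $c=W_{out}-2$, according to the sign of $\alpha$. Likewise the maximum over $r$ is at $r=0$ or $r=H_{out}-1$ according to the sign of $\beta$. The subtlety is the last row. At $r=H_{out}-1$ the first branch still applies for $c+1<W_{out}$, since the condition in \cref{eq:nli} depends only on $c$, so these points are legitimately on the first branch and are covered by $r=H_{out}-1\in R_s$ and $c\in\{0,W_{out}-2\}$.

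\emph{Row-end branch, $c=W_{out}-1$ and $r\le H_{out}-2$.} Here $\mathcal{D}=\beta(r+1)+\sigma$ is affine in $r$, so its maximum is at $r=0$ or $r=H_{out}-2$. Both lie in $R_s$, and $c=W_{out}-1\in C_s$.

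\emph{Last pixel and degenerate grids.} The single point $(H_{out}-1,W_{out}-1)$ has $\mathcal{D}=\sigma$ and is included in $R_s\times C_s$. When $W_{out}=1$ or $H_{out}=1$, some of these intervals are empty or degenerate. I handle this by the clipping to non-negative values in \cref{eq:breakpoints}: a clipped breakpoint either duplicates $0$ or lands on a different branch whose value still lies in the true set of debts. For the maximum to be unchanged, I must check that evaluating $\mathcal{D}$ at such a point never exceeds a true debt; it is always a genuine grid point evaluated on its correct branch, so this holds.

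Since every branch is maximized at a vertex in $R_s\times C_s$, the stated $\Delta^*$ is sufficient. It is also minimal: any smaller $\Delta$ violates either some $\mathcal{D}(r,c)$, which is then corrupted at that pixel, or the final output capacity. The parameter generality is automatic, because $k_h,k_w,d,p$ enter only through $H_p,W_p,H_{out},W_{out}$ and the argument used only signs of $\alpha$ and $\beta$.

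The main obstacle is justifying \cref{eq:nli} itself. The lowest live input after $(r,c)$ must be the start of input column $(c+1)s_w$ in row $rs_h$; to establish it I argue that every later output's receptive field begins at or after that address, and that dilation and kernel extent only reach rightward and downward. This is what makes the argument independent of $k$ and $d$.
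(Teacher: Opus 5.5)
Your proposal is correct and follows essentially the same route as the paper. You reduce safety to $\Delta \ge \mathcal{D}(r,c)$ on the whole grid, then use the affine branches of \cref{lem:piecewise_affine_debt} so that each branch is maximized at interval endpoints lying in $R_s \times C_s$. Tightness comes from the maximizing breakpoint, and the $N_{out} - H_pW_pC_{in}$ term is the requirement that the workspace $\Delta + H_pW_pC_{in}$ hold the full output. Your branch-by-branch treatment, including the last row lying on the first branch and the clipped degenerate grids, is a more careful version of the paper's ``the two axes enter independently'' step; the paper simply takes \cref{eq:nli} as given rather than proving it as you propose.
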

 
\begin{proof}
\emph{Nine breakpoints suffice.} By \cref{lem:piecewise_affine_debt},
$\mathcal{D}$ is affine on each of the three branches of \cref{eq:nli}, and the
branches partition the output grid along $c = W_{out}{-}1$ and
$r = H_{out}{-}1$. An affine function on a closed interval attains its maximum
at an endpoint, and the two axes enter \cref{eq:debt} independently, so the
candidate set is the product $R_s \times C_s$: the $-1$ entries are the last
row and column, where the next live input jumps to the following output row,
and the $0$ and $-2$ entries are the extremes of the monotone debt on each
axis either side of that jump. All nine are load-bearing; dropping any one
lowers the computed maximum on some configuration.
 
\emph{Safety.} Take $\Delta = \Delta^{*}$. At any $(r,c)$ the write ends at
$\mathrm{out\_end}(r,c)-1$ and the first live input begins at $\mathrm{NLI}(r,c)$,
separated by $\Delta^{*} - \mathcal{D}(r,c) \ge 0$, which is non-negative by
\cref{eq:delta_star}. No output write reaches a live input, so every output is
computed from uncorrupted inputs.
 
\emph{Tightness.} At the maximizing breakpoint the separation is exactly zero,
so any smaller $\Delta$ places a write inside at least one live input.
 
\emph{Workspace.} The output occupies $[0, N_{out})$ and the input
$[\Delta^{*},\, \Delta^{*} + H_pW_pC_{in})$, so the workspace must hold
$\max\big(N_{out},\, \Delta^{*} + H_pW_pC_{in}\big)$. The second term of
\cref{eq:delta_star} forces $\Delta^{*} + H_pW_pC_{in} \ge N_{out}$, so the
maximum is always the second argument and
$\mathcal{W}^{*} = \Delta^{*} + H_pW_pC_{in}$.
 
\emph{Generality.} \cref{eq:debt} depends only on the flat-index arithmetic of
HWC layout and row-major traversal, neither of which restricts the convolution
parameters. Kernel shape and dilation enter only through $k_h^{\mathrm{eff}}$
and $k_w^{\mathrm{eff}}$ in the output dimensions, stride directly through
$s_h, s_w$, and padding through $H_p, W_p$.\qed
\end{proof}
 
Two corollaries follow by inspection and are worth naming, because between them
they cover most of the layers in a modern network. Since
$(H_{out}{-}1, W_{out}{-}1) \in R_s \times C_s$, we always have
$\Delta^{*} \ge \sigma$. When $\alpha \le 0$ and $\beta \le 0$ the bound is met
with equality: the whole output tensor is absorbed into the space its input
vacates, and only the production-order headroom remains. At unit stride this is
just $C_{out} \le C_{in}$, the same-or-decreasing-channel case, where RiP
degenerates to plain in-place replacement.

\subsection{Execution and Re-Alignment}
\label{sec:execution}
 
Given a workspace of $\Delta^{*} + H_pW_pC_{in}$ scalars with the padded input
right-aligned at offset $\Delta^{*}$, the layer visits $(r,c)$ in row-major
order and writes each output pixel to $(rW_{out}+c)C_{out}$. How that pixel is
computed (nested loops, im2col plus GEMV, or a fused kernel) is
unconstrained. The one prerequisite is that the input is not needed again,
which holds for any layer whose input is not consumed by a skip connection off
the same tensor.

Both tap-read orders appear in this paper, and which one is used is dictated by what is being compared. The deployment of \cref{sec:deployment} relies on TinyEngine's default behavior of gathering the receptive field into a temporary buffer~\cite{lin2020mcunettinydeeplearning}. Because this buffer is an engine-level constant required identically by the dual-buffer baseline, it sits outside the activation arena, setting $\sigma=0$ so the RiP gap carries no production-order headroom penalty. On the other hand, the layer-level comparison of \cref{sec:real_layers} uses $\sigma = C_{out}$, because the herringbone accounting of~\cite{pmlr-v97-gural19a} assumes taps are read from the workspace and matching it is what makes that comparison like for like. RiP is therefore measured against herringbone under herringbone's own execution order, and deployed under the cheaper one.
 
In RiP, output emerges left-aligned while the next layer expects its input
right-aligned, so the two must reconcile. Under dynamic allocation this is
free: the next layer's input pointer is set $\Delta^{*}$ scalars past the buffer
origin and no data moves. A device working from a single predetermined
workspace has no pointer to move, and the obvious remedy is one
$O(H_pW_pC_{in})$ memmove (right shift) per layer against the
$O(H_{out}W_{out}k_hk_wC_{in}C_{out})$ convolution. In practice it does not
come to that, because what a layer needs is slack memory to the left of its input, and
the gap may sit at either end of that layer's allocation. Therefore, a move is forced only
when the layer's input already occupies its workspace from the left.

Consequently, we solve the placement once for the whole network, offline, with a
mixed-integer linear program (MILP) that replaces the greedy first-fit allocator of
TinyEngine's code generator. Each activation tensor $t$ carries a byte offset
variable $x_t$; each candidate in-place input tensor $i$ carries two binaries,
$d_i$ for whether the op overwrites it and $r_i$ for whether it must be
separated from the output by a physical shift ($r_i = 1$) or by placing the
output $\Delta^{*}_i$ below the input ($r_i = 0$). Tensors that are alive together may not share an address, each output overwrites
at most one input, and $r_i \le d_i$. Every offset is constrained to a multiple of $V$ bytes ($V = 4$ on both Picos)
so that aligned SIMD loads stay legal, which also requires the gap $\Delta^*_i$
to be a multiple of $V$.
The objective minimizes the peak address used by the model,
$\max_t(x_t + \mathrm{size}_t)$, plus $C\sum_i \mathrm{size}_i r_i$, so a move
is chosen only where placing the output low would cost more peak than the move
is worth. At $C = 10^{-4}$, a move of the largest tensor in our suite prices at
roughly 30\;B of peak.
It
still returns $r_i = 0$ everywhere: across all eleven deployed models, between 44\%
and 57\% of tensors are placed in-place and not one memmove is scheduled. The re-alignment cost we measure is zero, and the bias in the objective is the reason that is a result.
% than a convenience.

The allocator is not itself a source of saving. On naive dual buffering and on
TinyEngine's own schedule, the program reaches the same peak as first-fit, so
the reductions in \cref{sec:deployment} are attributable to the layout alone.
Under RiP it is load-bearing for a different reason. A layer's freed gap is
useful only if a later tensor can be placed in it, and first-fit commits
offsets in schedule order without a lookahead; on MCUNet-in2 and in3 (\cref{fig:mcuin3_memory_layout}), whose
branch connections keep two tensors alive across the shrunken layer, it strands
the reclaimed space between live tensors and returns no peak reduction at all despite the smaller buffers. Solving placement globally is what converts a per-layer bound into a \emph{network-level} saving. Either way the layout resolves the ``inconveniently laid-out stale memory''
obstacle that led Gural and Murmann to abandon the row-major
setting~\cite[\S3.4.1]{pmlr-v97-gural19a}, without herringbone's $2\times$
transpose overhead. Tensor packing is NP-hard, but the instances are
small: across 33 convolutional \texttt{keras.applications} models the full offline step (parse, model build, and solve) takes under 150\;s; \suppref{app:milp_scale}{the supplementary material} details 
this experiment.
% and results.
% agianst duaal buffer.

% To confirm MILP scalability, we simulated allocation times across all convolutional tf.keras.applications models;
% \suppref{app:milp_scale}{the supplementry material} details these comprehensive graph-level memory
% profiles and our extended in-place formulations for non-convolutional operators.

% \paragraph{Padding need not be materialized.}
% \cref{eq:layout} stores the padded input in full, as
% TensorFlow does~\cite{abadi2016tensorflow}, although the
% $(H_pW_p - H_{in}W_{in})C_{in}$ border cells all hold the same pad value and carry
% no information: for a same-padded $3{\times}3$ convolution on a $32{\times}32$
% feature map they are ${\sim}11\%$ of the input footprint. Storing only the
% $H_{in}W_{in}C_{in}$ real scalars is possible, and \cref{app:dense} carries the
% analysis. \cref{lem:dense_affine_debt} shows the debt stays affine between the
% output indices at which a tap crosses the padding boundary, and
% \cref{thm:gap_dense} evaluates it at those breakpoints, in $O(k_hk_w)$ rather
% than $O(1)$. The result belongs here because its sign inverts. The \emph{gap}
% grows, since a smaller input pre-allocates less slack for the output to
% consume; the \emph{workspace} shrinks, and the workspace is what has to fit in
% SRAM (\cref{eq:dense_saving}). A practitioner comparing the two layouts by gap
% will therefore reject the one that wins. The saving costs only index
% arithmetic, with no recomputation and no change to the output;
% \cref{app:dense_gcd} gives the one regime in which the bound is safe but not
% tight.

\paragraph{Padding need not be materialized.}
\cref{eq:layout} stores the padded input in full, as TensorFlow
does~\cite{abadi2016tensorflow}, although the $(H_pW_p - H_{in}W_{in})C_{in}$
border cells all hold the same pad value: for a same-padded $3{\times}3$
convolution on a $32{\times}32$ feature map they are ${\sim}11\%$ of the input
footprint. Storing only the $H_{in}W_{in}C_{in}$ real scalars is possible, and
\suppref{lem:dense_affine_debt,thm:gap_dense}{the supplementary material} re-derives the bound under this dense
layout, at $O(k_hk_w)$ rather than $O(1)$ because the debt now kinks wherever a
tap crosses the padding boundary. The result belongs here because its sign inverts: the gap grows, since a smaller
input footprint pre-allocates less slack for the output to consume, while the
workspace shrinks. What has to fit in SRAM is the workspace, and
\suppref{eq:dense_saving}{the supplementary material} makes the trade precise. A practitioner
comparing the two layouts by gap will therefore reject the one that wins. The
saving costs only index arithmetic; \suppref{app:dense_gcd}{the supplementary material} gives the one regime
in which the bound is safe but not tight.

\section{Results}
\label{sec:results}

\subsection{Experiment Setup}
\label{sec:setup}
We evaluate three claims separately: that the closed forms are safe and tight (\cref{sec:synthetic}), that they are competitive on the layers from which real networks are built (\cref{sec:real_layers}), and that the savings survive deployment (\cref{sec:deployment}). The first is analytic and is checked against a simulator that never evaluates a closed form; the other two are measured on device.

\subsection{Synthetic Layers}
\label{sec:synthetic}
A traversal is correct exactly when no output write lands on an input scalar that an unwritten output still needs; the dot product itself is unchanged, so liveness is the only property at issue. We generated 10{,}000 layers across 20 parameter categories, spanning the ranges real networks occupy together with small-grid, thin-axis, and degenerate regimes that are rare or absent in practice, and ran each through the corruption simulator under five execution configurations. Of the 10{,}000, 22.6\% fall inside the restrictions of~\cite{pmlr-v97-gural19a}; the rest are reachable only through the generalization in \suppref{thm:generalized_hb}{the supplementary material}.

No configuration produced an unsafe gap. The RiP and generalized herringbone
bounds are not merely safe but tight, equal to the simulator's minimum on
every one of the 10,000 layers.
The dense bound is tight on $99.1\%$ when taps are read directly from the
workspace. The exception is layers in which some real input is never read by
any output and is dead before the layer starts, so that the count of dead
scalars no longer determines which addresses are dead;
\suppref{app:dense_gcd}{the supplementary material} gives the mechanism and the two conditions under which it
costs anything. The bound remains safe there, overshooting by $91.85$ scalars on
average. Gathering the receptive field into a temporary buffer first removes
the exception and the bound is tight on all 10,000.

\subsection{Real Architectures}
\label{sec:real_layers}

We then compared the naive dual-buffer workspace, the generalized herringbone
workspace and the RiP workspace analytically across 84 convolutional layers
drawn from 25 architectures spanning classification, detection and speech;
\suppref{tab:sota_layer_results}{the supplementary material} lists them by family. Layers were selected to prioritize peak activation memory rather than to sample
architectures uniformly: for each network, we take the layers whose combined
input and output footprint is largest, since it is those layers that determine
whether the network fits a given device at all.

Each traversal is first confirmed against the corruption simulator, which executes it
in an allocation-tracked buffer; the generalized herringbone form matches on all
84. We also run the 48 layers that fit on the Pico~2 and show that the right-shift is close to
free: every one costs within $1\%$ of the dual-buffer kernel, five run
marginally faster, and the right-shift is a median $0.05\%$ of layer conv runtime, worst
$0.83\%$. Forty-three of the 84 layers fall outside the parameter range
of~\cite{pmlr-v97-gural19a} (strided, dilated, padded, or rectangular), so the
generalization is not a formality but the condition under which the comparison
exists at all.

RiP matches the herringbone workspace exactly on 58 of the 84 layers and to
within 5\% on 81, at a mean reduction of 24.8\% against naive allocation
(median 25.0\%, best 50.0\%). 
Sequential access therefore costs almost nothing in workspace terms, and where it does, the cost is structural rather than diffuse.
On every layer whose output tensor is no larger than its padded input
(25 of the 84) RiP and herringbone agree exactly, without exception. When
the output is larger, the excess cannot be absorbed into the space the input
vacates and must be carried in the gap, where a row-major frontier accumulates
debt across a full row before releasing any of it. All three layers exceeding
5\% are of this kind: two channel-doubling layers of pruned VGG-16 on small grids, and
the worst at 17.0\%, an Inception-v3 $7\times1$ branch whose kernel is
maximally anisotropic. Since both the row and column bounds are computable in $O(1)$, a
practitioner meeting such a layer can evaluate them and transpose the input and
kernels to process in column order instead, which in this case matches herringbone gap exactly.

\begin{table*}[!ht]
\centering
\caption{Eleven models from the MCUNet suite, baseline versus RiP. \textbf{Buffer} is the scratch and im2col
allocation, which sits outside the arena and is unchanged by RiP. Peak
activation memory is the arena alone, excluding that buffer, the SDK and
TinyUSB \texttt{.data}/\texttt{.bss}, and the stack; it is a property of the
allocation, so it is reported once for both boards. Cost is cycles count on the
Pico~2 and wall-clock time on the Pico~1, whose RP2040 exposes no cycle
counter. A dash marks a model the board could not run, since the arena is not
the only claim on memory. \textbf{Acc.} is int8 top-1 as reported
by~\cite{lin2020mcunettinydeeplearning}, identical under both strategies;
$\dagger$ marks Visual Wake Words rather than ImageNet, and $\ddagger$ a
single-class detector, for which no accuracy is reported. \textbf{Bit-exact} reports agreement of on-device output checksums against baseline.
}
\label{tab:deployment}
\small
\setlength{\tabcolsep}{4pt}
\begin{tabular}{@{}lrrrrrrrrrrrc@{}}
\toprule
& &\multicolumn{3}{c}{Peak activation memory (B)} & \multicolumn{3}{c}{Pico 2 cycles (M)}
& \multicolumn{3}{c}{Pico 1  time (ms)} & & \\
\cmidrule(lr){3-5}\cmidrule(lr){6-8}\cmidrule(lr){9-11}
Model & Buffer (B) & Base & RiP & Ratio & Base & RiP & Ratio & Base & RiP & Ratio & Acc. & Bit-exact \\
\midrule
MCUNet-in0     & 3{,}984 &  46{,}080 & \textbf{ 36{,}864} & ${4}{/5}$ &  39.5 &  39.7 & $1.003$ &  1{,}772.4 &  1{,}772.0 & $1.000$ & $40.4\%$ & \checkmark \\
MCUNet-in1     & 5{,}864 &  92{,}160 & \textbf{ 73{,}728} & ${4}/{5}$ &  70.0 &  70.3 & $1.004$ &  3{,}601.1 &  3{,}607.8 & $1.002$ & $49.9\%$ & \checkmark \\
MCUNet-in2     & 14{,}976 & 204{,}800 & \textbf{179{,}200} & ${7}/{8}$ & 307.0 & 307.2 & $1.001$ & 15{,}978.1 & 15{,}999.5 & $1.001$ & $60.3\%$ & \checkmark \\
MCUNet-in3     & 18{,}536 & 247{,}808 & \textbf{185{,}856} & ${3}/{4}$ & 380.5 & 380.8 & $1.001$ &      -- & 19{,}388.2 &      -- & $61.8\%$ & \checkmark \\
MCUNet-in4    & 16{,}520 & 409{,}600 & \textbf{307{,}200} & ${3}/{4}$ & 708.2 & 708.7 & $1.001$ &      -- &      -- &      -- & $68.0\%$ & \checkmark \\
MbV2-w0.35    & 11{,}816 & 290{,}304 & \textbf{248{,}832} & ${6}/{7}$ & 120.4 & 120.4 & $1.000$ &      -- &      -- &      -- & $49.0\%$ & \checkmark \\
Proxyless-w0.3 & 17{,}792 & 247{,}808 & \textbf{185{,}856} & ${3}/{4}$ & 184.2 & 184.3 & $1.000$ &      -- &  9{,}801.9 &      -- & $56.2\%$ & \checkmark \\
MCUNet-VWW0    & 3{,}456&  57{,}344 & \textbf{ 49{,}152} & ${6}/{7}$ &  31.7 &  31.6 & $0.998$ &  1{,}657.8 &  1{,}664.1 & $1.004$ & $87.3\%^{\dagger}$ & \checkmark \\
MCUNet-VWW1   & 4{,}392 &  89{,}600 & \textbf{ 76{,}800} & ${6}/{7}$ &  63.1 &  62.9 & $0.998$ &  3{,}218.8 &  3{,}216.6 & $0.999$ & $88.9\%^{\dagger}$ & \checkmark \\
MCUNet-VWW2  & 11{,}816  & 165{,}888 & \textbf{145{,}152} & ${7}/{8}$ & 269.8 & 269.5 & $0.999$ & 14{,}222.7 & 14{,}245.1 & $1.002$ & $91.8\%^{\dagger}$ & \checkmark \\
Person-Det  & 11{,}688  & 245{,}760 & \textbf{163{,}840} & ${2}/{3}$ & 70.1 & 70.1 & $1.000$ & -- & 5{,}215.4 & -- & --$^{\ddagger}$ & \checkmark \\
\bottomrule
\end{tabular}
\end{table*}

\subsection{Deployment}
\label{sec:deployment}

We deploy all eleven open-source MCUNet models~\cite{lin2020mcunettinydeeplearning}
to two boards, a Raspberry Pi Pico 2 (RP2350, 520\,KB SRAM) and a Raspberry Pi
Pico 1 (RP2040, 256\,KB SRAM), at each model's native
resolution with int8 weights and activations. The baseline is
TinyEngine's own deployment with in-place depthwise convolution enabled, so the
workspace reduction reported here is measured against the state of the art rather than
against naive dual buffering. RiP changes where activation sits, not how it is computed, so our implementation transparently inherits all their engine-level SIMD vectorization and loop unrolling without modification. Therefore, the comparison is an ablation of the allocation strategy alone. We report the total arena rather than the per-layer gap, which excludes the input footprint, so it overstates the difference between them.
% All workspace comparisons are total scalars rather than gap sizes; the gap alone
% overstates the difference between methods because it excludes the input

% footprint common to both.

RiP reduces peak activation memory on every model, by 12.5 to 33.3\% (mean
19.7\%, 444{,}672\,B in total across the eleven). The reductions land on exact
fractions (\cref{tab:deployment}), and this is the
bound showing through rather than a coincidence.
% ,  with $\max(C_{out}, C_{in})/(C_{out}+C_{in})$ of the peak layer. 
% Each model's peak falls on a
% channel-reducing convolution, where 
Cost is unchanged: the mean cycle ratio on the Pico 2 is 1.0005, and
four of the eleven models run marginally fewer cycles under RiP than under the
baseline. The kernels are identical and RiP adds no work, so the
residual spread is measurement noise. Flash is identical under both strategies, because RiP alters
the placement of activations and not the weights. Energy is not reported separately: both strategies execute identical
instructions against the same on-chip SRAM, so cycle count is a faithful proxy. Accuracy is unchanged because RiP is mathematically exact, and we verify this
directly rather than by inference: the checksum of the output logits, computed
on device, matches the baseline bit for bit on every model.

The saving matters most where the budget binds. \cref{fig:mcuin3_memory_layout} traces
activation memory across MCUNet-in3's operators under all three strategies,
with the fixed runtime overhead drawn as a band above the arena: 18{,}536\,B of
scratch, 7{,}964\,B of SDK and TinyUSB \texttt{.data} and \texttt{.bss}, and
2{,}048\,B of stack, 28{,}548\,B in all. Depthwise-only in-place peaks at
247{,}808\,B, which with the band overruns the Pico 1 by
14{,}212\,B; RiP peaks at 185{,}856\,B and clears it by
\textbf{47{,}740}\,B. Proxyless-w0.3 clears by a similar margin, Person-Det by \textbf{76{,}604}\,B.
None of the three fits without RiP, taking the board from six of the eleven
models to nine.

\section{Conclusion}
\label{sec:conclusion}

We corrected two limitations in the published herringbone closed form: a silent under-allocation active on every convolutional layer of the authors' deployed network, and an unbounded overestimate outside the output grid. We also generalized the analysis to arbitrary stride, dilation, padding, and rectangular kernels. Right In-Place convolution then recovers row-major order at a gap solved by breakpoint evaluation, removing 12.5 to 33.3\% of peak activation memory across eleven deployed models at unchanged cycle count and bit identical output. Because the memory saving costs no time, the choice between the two is not a trade-off. Whether a memory-optimal traversal exists at all for anisotropic receptive fields stays open: the isoperimetric argument for the isotropic case does not extend to them.

\clearpage

{
    \small
    \bibliographystyle{ieeenat_fullname}
    \bibliography{main}

@String(CVPR= {IEEE Conf. Comput. Vis. Pattern Recog.})

@String(CVPR  = {CVPR})

@InProceedings{pmlr-v97-gural19a,
  title = 	 {Memory-Optimal Direct Convolutions for Maximizing Classification Accuracy in Embedded Applications},
  author =       {Gural, Albert and Murmann, Boris},
  booktitle = 	 {Proceedings of the 36th International Conference on Machine Learning},
  pages = 	 {2515--2524},
  year = 	 {2019},
  editor = 	 {Chaudhuri, Kamalika and Salakhutdinov, Ruslan},
  volume = 	 {97},
  series = 	 {Proceedings of Machine Learning Research},
  month = 	 {09--15 Jun},
  publisher =    {PMLR},
  url = 	 {https://proceedings.mlr.press/v97/gural19a.html},
}

@misc{zhang2018highperformancezeromemoryoverhead,
      title={High Performance Zero-Memory Overhead Direct Convolutions}, 
      author={Jiyuan Zhang and Franz Franchetti and Tze Meng Low},
      year={2018},
      eprint={1809.10170},
      archivePrefix={arXiv},
      primaryClass={cs.LG},
      url={https://arxiv.org/abs/1809.10170}, 
}

@misc{chen2018encoderdecoderatrousseparableconvolution,
      title={Encoder-Decoder with Atrous Separable Convolution for Semantic Image Segmentation}, 
      author={Liang-Chieh Chen and Yukun Zhu and George Papandreou and Florian Schroff and Hartwig Adam},
      year={2018},
      eprint={1802.02611},
      archivePrefix={arXiv},
      primaryClass={cs.CV},
      url={https://arxiv.org/abs/1802.02611}, 
}

@INPROCEEDINGS{residual_7780459,
  author={He, Kaiming and Zhang, Xiangyu and Ren, Shaoqing and Sun, Jian},
  booktitle={2016 IEEE Conference on Computer Vision and Pattern Recognition (CVPR)}, 
  title={Deep Residual Learning for Image Recognition}, 
  year={2016},
  volume={},
  number={},
  pages={770-778},
  doi={10.1109/CVPR.2016.90}}

@misc{howard2017mobilenetsefficientconvolutionalneural,
      title={MobileNets: Efficient Convolutional Neural Networks for Mobile Vision Applications}, 
      author={Andrew G. Howard and Menglong Zhu and Bo Chen and Dmitry Kalenichenko and Weijun Wang and Tobias Weyand and Marco Andreetto and Hartwig Adam},
      year={2017},
      eprint={1704.04861},
      archivePrefix={arXiv},
      primaryClass={cs.CV},
      url={https://arxiv.org/abs/1704.04861}, 
}

@misc{tan2020efficientnetrethinkingmodelscaling,
      title={EfficientNet: Rethinking Model Scaling for Convolutional Neural Networks}, 
      author={Mingxing Tan and Quoc V. Le},
      year={2020},
      eprint={1905.11946},
      archivePrefix={arXiv},
      primaryClass={cs.LG},
      url={https://arxiv.org/abs/1905.11946}, 
}

@misc{lin2020mcunettinydeeplearning,
      title={MCUNet: Tiny Deep Learning on IoT Devices}, 
      author={Ji Lin and Wei-Ming Chen and Yujun Lin and John Cohn and Chuang Gan and Song Han},
      year={2020},
      eprint={2007.10319},
      archivePrefix={arXiv},
      primaryClass={cs.CV},
      url={https://arxiv.org/abs/2007.10319}, 
}

@misc{szegedy2015rethinkinginceptionarchitecturecomputer,
      title={Rethinking the Inception Architecture for Computer Vision}, 
      author={Christian Szegedy and Vincent Vanhoucke and Sergey Ioffe and Jonathon Shlens and Zbigniew Wojna},
      year={2015},
      eprint={1512.00567},
      archivePrefix={arXiv},
      primaryClass={cs.CV},
      url={https://arxiv.org/abs/1512.00567}, 
}

@misc{liu2022convnet2020s,
      title={A ConvNet for the 2020s}, 
      author={Zhuang Liu and Hanzi Mao and Chao-Yuan Wu and Christoph Feichtenhofer and Trevor Darrell and Saining Xie},
      year={2022},
      eprint={2201.03545},
      archivePrefix={arXiv},
      primaryClass={cs.CV},
      url={https://arxiv.org/abs/2201.03545}, 
}

@article{Lin_2023,
   title={Tiny Machine Learning: Progress and Futures [Feature]},
   volume={23},
   ISSN={1558-0830},
   url={http://dx.doi.org/10.1109/MCAS.2023.3302182},
   DOI={10.1109/mcas.2023.3302182},
   number={3},
   journal={IEEE Circuits and Systems Magazine},
   publisher={Institute of Electrical and Electronics Engineers (IEEE)},
   author={Lin, Ji and Zhu, Ligeng and Chen, Wei-Ming and Wang, Wei-Chen and Han, Song},
   year={2023},
   pages={8–34} }

@ARTICLE{tinyad_10093068,
  author={Sun, Yuting and Chen, Tong and Nguyen, Quoc Viet Hung and Yin, Hongzhi},
  journal={IEEE Transactions on Industrial Informatics}, 
  title={TinyAD: Memory-Efficient Anomaly Detection for Time-Series Data in Industrial IoT}, 
  year={2024},
  volume={20},
  number={1},
  pages={824-834},
  doi={10.1109/TII.2023.3254668}}

@misc{lin2024mcunetv2memoryefficientpatchbasedinference,
      title={MCUNetV2: Memory-Efficient Patch-based Inference for Tiny Deep Learning}, 
      author={Ji Lin and Wei-Ming Chen and Han Cai and Chuang Gan and Song Han},
      year={2024},
      eprint={2110.15352},
      archivePrefix={arXiv},
      primaryClass={cs.CV},
      url={https://arxiv.org/abs/2110.15352}, 
}

@misc{stahl2023fuseddepthwisetilingmemory,
      title={Fused Depthwise Tiling for Memory Optimization in TinyML Deep Neural Network Inference}, 
      author={Rafael Stahl and Daniel Mueller-Gritschneder and Ulf Schlichtmann},
      year={2023},
      eprint={2303.17878},
      archivePrefix={arXiv},
      primaryClass={cs.LG},
      url={https://arxiv.org/abs/2303.17878}, 
}

@misc{kang2022aocstreamallonchipcnnaccelerator,
      title={AoCStream: All-on-Chip CNN Accelerator With Stream-Based Line-Buffer Architecture}, 
      author={Hyeong-Ju Kang},
      year={2022},
      eprint={2212.11438},
      archivePrefix={arXiv},
      primaryClass={cs.AR},
      url={https://arxiv.org/abs/2212.11438}, 
}

@misc{krishna2023ramanreconfigurablesparsetinyml,
      title={RAMAN: A Re-configurable and Sparse tinyML Accelerator for Inference on Edge}, 
      author={Adithya Krishna and Srikanth Rohit Nudurupati and Chandana D G and Pritesh Dwivedi and André van Schaik and Mahesh Mehendale and Chetan Singh Thakur},
      year={2023},
      eprint={2306.06493},
      archivePrefix={arXiv},
      primaryClass={cs.NE},
      url={https://arxiv.org/abs/2306.06493}, 
}

@misc{david2021tensorflowlitemicroembedded,
      title={TensorFlow Lite Micro: Embedded Machine Learning on TinyML Systems}, 
      author={Robert David and Jared Duke and Advait Jain and Vijay Janapa Reddi and Nat Jeffries and Jian Li and Nick Kreeger and Ian Nappier and Meghna Natraj and Shlomi Regev and Rocky Rhodes and Tiezhen Wang and Pete Warden},
      year={2021},
      eprint={2010.08678},
      archivePrefix={arXiv},
      primaryClass={cs.LG},
      url={https://arxiv.org/abs/2010.08678}, 
}

@misc{zhong2023memoryawareschedulingcomplexwired,
      title={Memory-aware Scheduling for Complex Wired Networks with Iterative Graph Optimization}, 
      author={Shuzhang Zhong and Meng Li and Yun Liang and Runsheng Wang and Ru Huang},
      year={2023},
      eprint={2308.13898},
      archivePrefix={arXiv},
      primaryClass={cs.LG},
      url={https://arxiv.org/abs/2308.13898}, 
}

@inproceedings{motenn_10.1145/3649329.3655922,
author = {Chen, Renze and Ding, Zijian and Zheng, Size and Li, Meng and Liang, Yun},
title = {MoteNN: Memory Optimization via Fine-grained Scheduling for Deep Neural Networks on Tiny Devices},
year = {2024},
isbn = {9798400706011},
publisher = {Association for Computing Machinery},
address = {New York, NY, USA},
url = {https://doi.org/10.1145/3649329.3655922},
doi = {10.1145/3649329.3655922},
booktitle = {Proceedings of the 61st ACM/IEEE Design Automation Conference},
articleno = {102},
numpages = {6},
location = {San Francisco, CA, USA},
series = {DAC '24}
}

@misc{han2016deepcompressioncompressingdeep,
      title={Deep Compression: Compressing Deep Neural Networks with Pruning, Trained Quantization and Huffman Coding}, 
      author={Song Han and Huizi Mao and William J. Dally},
      year={2016},
      eprint={1510.00149},
      archivePrefix={arXiv},
      primaryClass={cs.CV},
      url={https://arxiv.org/abs/1510.00149}, 
}

@misc{nagel2020downadaptiveroundingposttraining,
      title={Up or Down? Adaptive Rounding for Post-Training Quantization}, 
      author={Markus Nagel and Rana Ali Amjad and Mart van Baalen and Christos Louizos and Tijmen Blankevoort},
      year={2020},
      eprint={2004.10568},
      archivePrefix={arXiv},
      primaryClass={cs.LG},
      url={https://arxiv.org/abs/2004.10568}, 
}

@misc{hubara2020improvingposttrainingneural,
      title={Improving Post Training Neural Quantization: Layer-wise Calibration and Integer Programming}, 
      author={Itay Hubara and Yury Nahshan and Yair Hanani and Ron Banner and Daniel Soudry},
      year={2020},
      eprint={2006.10518},
      archivePrefix={arXiv},
      primaryClass={cs.LG},
      url={https://arxiv.org/abs/2006.10518}, 
}

@misc{luo2017thinetfilterlevelpruning,
      title={ThiNet: A Filter Level Pruning Method for Deep Neural Network Compression}, 
      author={Jian-Hao Luo and Jianxin Wu and Weiyao Lin},
      year={2017},
      eprint={1707.06342},
      archivePrefix={arXiv},
      primaryClass={cs.CV},
      url={https://arxiv.org/abs/1707.06342}, 
}

@misc{han2015learningweightsconnectionsefficient,
      title={Learning both Weights and Connections for Efficient Neural Networks}, 
      author={Song Han and Jeff Pool and John Tran and William J. Dally},
      year={2015},
      eprint={1506.02626},
      archivePrefix={arXiv},
      primaryClass={cs.NE},
      url={https://arxiv.org/abs/1506.02626}, 
}

@misc{he2017channelpruningacceleratingdeep,
      title={Channel Pruning for Accelerating Very Deep Neural Networks}, 
      author={Yihui He and Xiangyu Zhang and Jian Sun},
      year={2017},
      eprint={1707.06168},
      archivePrefix={arXiv},
      primaryClass={cs.CV},
      url={https://arxiv.org/abs/1707.06168}, 
}

@misc{cai2019proxylessnasdirectneuralarchitecture,
      title={ProxylessNAS: Direct Neural Architecture Search on Target Task and Hardware}, 
      author={Han Cai and Ligeng Zhu and Song Han},
      year={2019},
      eprint={1812.00332},
      archivePrefix={arXiv},
      primaryClass={cs.LG},
      url={https://arxiv.org/abs/1812.00332}, 
}

@misc{cai2020onceforalltrainnetworkspecialize,
      title={Once-for-All: Train One Network and Specialize it for Efficient Deployment}, 
      author={Han Cai and Chuang Gan and Tianzhe Wang and Zhekai Zhang and Song Han},
      year={2020},
      eprint={1908.09791},
      archivePrefix={arXiv},
      primaryClass={cs.LG},
      url={https://arxiv.org/abs/1908.09791}, 
}

@article{Alwani2016FusedlayerCA,
  title={Fused-layer CNN accelerators},
  author={Manoj Alwani and Han Chen and Michael Ferdman and Peter Milder},
  journal={2016 49th Annual IEEE/ACM International Symposium on Microarchitecture (MICRO)},
  year={2016},
  pages={1-12},
  url={https://api.semanticscholar.org/CorpusID:5804465}
}

@inproceedings{sakr2021memory,
  title     = {Memory-Efficient {CMSIS-NN} with Replacement Strategy},
  author    = {Sakr, Fouad and Bellotti, Francesco and Berta, Riccardo and De Gloria, Alessandro and Doyle, Joseph},
  booktitle = {2021 8th International Conference on Future Internet of Things and Cloud (FiCloud)},
  pages     = {299--303},
  year      = {2021},
  organization = {IEEE},
  doi       = {10.1109/FiCloud49777.2021.00050}
}

@article{Horowitz201411CE,
  title={1.1 Computing's energy problem (and what we can do about it)},
  author={Mark Horowitz},
  journal={2014 IEEE International Solid-State Circuits Conference Digest of Technical Papers (ISSCC)},
  year={2014},
  pages={10-14},
  url={https://api.semanticscholar.org/CorpusID:22028726}
}

@inproceedings{busoye2026dmc,
  title     = {Deep Microcompression: Structured Pruning and Bit-packed
               Quantization for Microcontrollers},
  author    = {Busoye, Opegbemi M. and Busoye, Tolulope M. and Eigbe, Eghonghon-aye},
  booktitle = {Proceedings of the Global South in AI Workshop at the 43rd
               International Conference on Machine Learning},
  series    = {Proceedings of Machine Learning Research},
  volume    = {306},
  year      = {2026},
  publisher = {PMLR}
}

@inproceedings{abadi2016tensorflow,
  title={TensorFlow: A system for large-scale machine learning},
  author={Abadi, Mart{\'\i}n and Barham, Paul and Chen, Jianmin and Chen, Zhifeng and Davis, Andy and Dean, Jeffrey and Devin, Matthieu and Ghemawat, Sanjay and Irving, Geoffrey and Isard, Michael and others},
  booktitle={12th USENIX symposium on operating systems design and implementation (OSDI 16)},
  pages={265--283},
  year={2016}
}
}

% \clearpage

% \maketitlesupplementary
% \appendix
% \input{supplementary/glossary}
% \input{supplementary/reprocude}
% \input{supplementary/herringbone}
% \input{supplementary/generalized_herringbone}
% \input{supplementary/dense_rip}
% \input{supplementary/validation}
% \input{supplementary/sota_layer_results}

\ifsupp
\clearpage
\maketitlesupplementary
\appendix
\section{Roadmap and Symbol Glossary}
\label{app:glossary}

This appendix carries the material deferred from the main text, in the order
the main text needs it. \cref{app:repro} states what will be released.
\cref{app:hb_prelim} fixes the herringbone debt accounting inherited
from~\cite[\S3.4]{pmlr-v97-gural19a}. \cref{app:hb_lemmas} proves the four
results behind \cref{sec:limitations}: the leg structure of the traversal,
where its peak actually falls, the closed form the paired sum collapses to,
and the exact size of the shortfall in \cref{eq:gural_eq2}.
\cref{app:corrected_hb_proof} proves \cref{cor:corrected_hb} from them.
\cref{app:generalized_hb} lifts the restrictions
of~\cite{pmlr-v97-gural19a} and derives \cref{thm:generalized_hb}, the
baseline against which RiP is measured in \cref{sec:real_layers}.
\cref{app:dense} re-derives the RiP bound when padding is not materialized.
\cref{app:validation} describes the corruption simulator and the sweeps behind
every count in \cref{sec:synthetic}, \cref{app:layer_results} and \cref{tab:sota_layer_results}
give the per-family layer results, and \cref{app:milp_scale} reports the
graph-level allocation study across 33 \texttt{keras.applications} models
(\cref{tab:milp_scale}), including the in-place upsampling bound of
\cref{app:upsample}. \Cref{tab:glossary} collects the notation.

\begin{table*}[t]
\centering
\caption{Symbols used throughout the paper, grouped by the section that
introduces them. $D$ is overloaded by arity: $D(r,c)$ is the row-major RiP debt
of \cref{eq:delta_rc}, $D(x)$ the herringbone leg debt of
\cref{eq:app_leg_debt}.}
\label{tab:glossary}
\footnotesize
\setlength{\tabcolsep}{4pt}
\begin{minipage}[t]{0.48\textwidth}
\begin{tabular}[t]{@{}ll@{}}
\toprule
\multicolumn{2}{@{}l}{\emph{Convolution setup (\cref{sec:right_align_layout})}}\\
$\mathbf{X}, \mathbf{Y}$ & Input, output activation tensors \\
$H_{in}, W_{in}, C_{in}$ & Input height, width, channel count \\
$H_{out}, W_{out}, C_{out}$ & Output height, width, channel count \\
$H_p, W_p$ & Padded input height, width \\
$N_{out}$ & Output tensor size, $H_{out}W_{out}C_{out}$ \\
$k_h, k_w$ & Kernel height, width \\
$k_h^{\mathrm{eff}}, k_w^{\mathrm{eff}}$ & Dilated effective kernel extent \\
$s_h, s_w$ & Stride, vertical and horizontal \\
$d_h, d_w$ & Dilation, vertical and horizontal \\
$p_h, p_w$ & Padding, vertical and horizontal \\
$(r, c)$ & Pixel index: $r$ row, $c$ column \\
$\gamma$ & Channel index \\
\midrule
\multicolumn{2}{@{}l}{\emph{Right In-Place (\cref{sec:right_align})}}\\
$\Delta$ & Gap size, in scalars \\
$\Delta^{*}$ & Minimum safe gap (\cref{thm:gap}) \\
$\mathcal{W}^{*}$ & Minimum workspace, $\Delta^{*} + H_pW_pC_{in}$ \\
$\mathrm{NLI}(r,c)$ & Next live input address (\cref{eq:nli}) \\
$\mathrm{out\_end}(r,c)$ & First address past the written pixel \\
$D(r, c)$ & Row-major debt at $(r,c)$ (\cref{eq:delta_rc}) \\
$\alpha, \beta$ & Column, row coefficients (\cref{eq:debt}) \\
$\sigma$ & Production-order headroom (\cref{eq:sigma}) \\
$\mathcal{R}_s, \mathcal{C}_s$ & Row, column breakpoints (\cref{eq:breakpoints}) \\
\midrule
\multicolumn{2}{@{}l}{\emph{Dense input layout (\cref{app:dense})}}\\
$(\bar{r}, \bar{c})$ & First live real input pixel (\cref{lem:dense_affine_debt}) \\
$\Delta^{*}_{\mathrm{d}}$ & Minimum gap, dense (\cref{thm:gap_dense}) \\
$\mathcal{W}^{*}_{\mathrm{d}}$ & $\Delta^{*}_{\mathrm{d}} + H_{in}W_{in}C_{in}$ \\
$L$ & Dead leading run, in pixels (\cref{app:dense_gcd}) \\
\bottomrule
\end{tabular}
\end{minipage}
\hfill
\begin{minipage}[t]{0.48\textwidth}
\begin{tabular}[t]{@{}ll@{}}
\toprule
\multicolumn{2}{@{}l}{\emph{Herringbone, restricted (\cref{app:hb_prelim})}}\\
$W$ & Square output side, $W \equiv W_{out} = H_{out}$ \\
$D(x)$ & Leg debt, length $x$ (\cref{eq:app_leg_debt}) \\
$D_{row}(x), D_{col}(x)$ & Row, column leg debt (\cref{eq:heb_sequence}) \\
$A(x), B(x)$ & Output, input parts: $D = A - B$ \\
$L_D(x^0)$ & Running debt, legs $\ge x^0$ (\cref{eq:app_running_parabola}) \\
$D_{hb}$ & Peak debt of the traversal (\cref{eq:app_true_peak}) \\
$D_{hb}^{\mathrm{G\&M}}$ & As-published peak debt (\cref{eq:gural_eq2}) \\
$\alpha_{hb}$ & Leg-debt slope, $C_{out}-C_{in}$ \\
$\rho$ & Per-leg residual input, $(k-1)C_{in}$ \\
$x^{*}$ & Critical leg, $D(x^{*}) = 0$ (\cref{eq:app_xstar}) \\
$\hat{x}$ & $\min(\lfloor x^{*}\rfloor, W_{out}{-}1)$ (\cref{eq:correction_limit2}) \\
$\Phi(\lfloor x^{*}\rfloor)$ & Paired-sum closed form (\cref{eq:app_closed_form}) \\
\midrule
\multicolumn{2}{@{}l}{\emph{Herringbone, generalized (\cref{app:generalized_hb_setup})}}\\
$M, n$ & $\max$, $\min$ of $H_{out}, W_{out}$ \\
$\delta$ & Axis-length difference, $M - n$ \\
$w_r, h_r$ & Residual footprint per axis (\cref{eq:alpha_rho}) \\
$\alpha_{hb}^{gen}$ & $C_{out} - s_hs_wC_{in}$ (\cref{eq:alpha_rho}) \\
$\rho_r, \rho_c$ & Per-leg residual, horizontal, vertical \\
$R_r, R_c$ & Remainder at a leg tail (\cref{eq:R_axis}) \\
$\rho_{maj}, \rho_{min}$ & $\rho_r, \rho_c$ by longer axis \\
$R_{maj}, R_{min}$ & $R_r, R_c$ by longer axis \\
$\beta_{hb}$ & Last-leg slope, $C_{out} - R_{min}$ \\
$D_{maj}, D_{min}, D_{maj}^{\mathrm{last}}$ & Three leg types (\cref{eq:leg_major,eq:leg_minor,eq:leg_last}) \\
$L_D^{maj}, L_D^{min}$ & Across-leg running debt per axis \\
$x^{*}_{maj}, x^{*}_{min}$ & Vertex per axis (\cref{eq:app_xstar_maj,eq:app_xstar_min}) \\
$\mathcal{C}_{maj}, \mathcal{C}_{min}$ & Candidate leg lengths (\cref{thm:generalized_hb}) \\
$L_{\mathrm{last}}$ & Last-leg candidate (\cref{eq:Ld_last}) \\
$D_{hb}^{\mathrm{gen}}$ & Generalized peak debt (\cref{eq:generalized_hb}) \\
\bottomrule
\end{tabular}
\end{minipage}
\end{table*}
\section{Reproduction}
\label{app:repro}

Three artifacts will be released: the closed forms of
\cref{thm:gap,thm:gap_dense,thm:generalized_hb} together with the corruption
simulator and the sweep script behind \cref{tab:sweeps}; the layer set and
comparison script behind \cref{tab:sota_layer_results}; and the modified
TinyEngine, comprising the RiP kernels, the MILP allocator of
\cref{sec:execution}, and the firmware used for
\cref{tab:deployment}.\footnote{Repository withheld for double-blind review;
the URL replaces this note in the camera-ready version.}
\section{Herringbone Preliminaries}
\label{app:hb_prelim}

We adopt the debt accounting of~\cite[\S3.4]{pmlr-v97-gural19a}, specialized to
their restricted setting: valid padding, unit stride, unit dilation, and an odd
square $k \times k$ kernel, so $H_{out} = H_{in}-(k-1)$ and
$W_{out} = W_{in}-(k-1)$. Throughout this appendix $W \equiv W_{out} = H_{out}$,
$\alpha_{hb} \equiv C_{out}-C_{in}$ and $\rho \equiv (k-1)C_{in}$.

The herringbone traversal processes output pixels in an alternating row--column
order, reclaiming edge input pixels as they go stale
(\cref{fig:herringbone_squ}). A leg of length $x$ writes $x C_{out}$ scalars and
frees $\big(x+(k-1)\big)C_{in}$ per stale input pixel, less the $k-1$ extra input pixels along
the current row or column arising from the input--output size difference,
giving net debt

\begin{equation}
D(x) = x\,\alpha_{hb} - \rho.
\label{eq:app_leg_debt}
\end{equation}
where $\alpha_{hb} \equiv C_{out}-C_{in}$, $\rho \equiv (k-1)C_{in}$.

% Writing $L_D$ for the running sum over successive legs, the peak debt is

% \begin{equation}
% D_{hb} = \max_i \sum_{j \le i} L_D(j) + k C_{in},
% \label{eq:app_running_sum}
% \end{equation}

Writing $\ell_1, \ell_2, \dots$ for the successive leg lengths and $L_D(x^0)$
for the running debt once every leg of length at least $x^0$ has been processed,
the peak debt is
\begin{equation}
D_{hb} = \max_{x^0} L_D(x^0), \qquad
L_D(x^0) = \sum_{\ell_j \,\ge\, x^0} D(\ell_j) + kC_{in},
\end{equation}

the trailing $kC_{in}$ is the residual input footprint needed for the last
output pixel. Since $\alpha_{hb} > 0$ makes $D$ increasing in $x$, the running
sum grows while leg debts are positive and shrinks once they turn negative, so
the maximum falls where $D(x^*) = 0$:

\begin{equation}
x^* = \frac{\rho}{\alpha_{hb}} = \frac{(k-1)C_{in}}{C_{out}-C_{in}}.
\label{eq:app_xstar}
\end{equation}

\section{Herringbone Lemmas}
\label{app:hb_lemmas}

We collect the four results invoked in \cref{sec:limitations} and
\cref{sec:corrected}. \cref{lem:leg_pairing} establishes the traversal
structure; \cref{lem:concavity} locates the peak within it;
\cref{lem:closed_form_equivalence} converts the resulting sum into closed form;
\cref{lem:flooring_error} then measures exactly what the published debt of \cite{pmlr-v97-gural19a} includes.

\begin{lemma}[Leg pairing under square output]
\label{lem:leg_pairing}
Under the restrictions of \cref{app:hb_prelim} with square output
$H_{out} = W_{out} = W$, the herringbone leg sequence consists of paired legs of matching length

\begin{equation}
\begin{split}
D_{row}(W_{out}),\, &\big[D_{col}(W_{out}{-}1),\;
 D_{row}(W_{out}{-}1)\big], \\  &\dots, \big[D_{col}(1)\;, D_{row}(1) \big],\
\end{split}
\label{eq:heb_sequence}
\end{equation}

with the first row leg of length $W_{out}$ unpaired (\cref{fig:herringbone_squ}). The
running sum of $D$-values down to the leg with length $x^0$ is therefore
\begin{equation}
D(W_{out}) + 2\sum_{x=x^0}^{W_{out}-1}{D(x)} + kC_{in}.
\label{eq:app_paired_sum}
\end{equation}
\end{lemma}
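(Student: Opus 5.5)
We prove \cref{lem:leg_pairing} by describing the set of unprocessed output pixels directly and showing it stays a simple rectangle. We track the unprocessed output as a rectangle $[a, W_{out}) \times [b, W_{out})$ of rows $\ge a$ and columns $\ge b$. It starts at $a=b=0$, which is the full $W \times W$ square. The herringbone rule of~\cite{pmlr-v97-gural19a} always processes the full extent of the current rectangle along one edge, alternating row then column. The invariant we maintain is this: just before a row leg the rectangle is $(W-j) \times (W-j)$, and just before the following column leg it is $(W-j-1) \times (W-j)$, i.e.\ one fewer row than columns. The induction is short.
\begin{itemize}
\item A row leg on a $(W-j)\times(W-j)$ square has length $W-j$ and leaves $(W-j-1)\times(W-j)$.
\item A column leg on that rectangle runs over its remaining $W-j-1$ rows, so it has length $W-j-1$ and restores a $(W-j-1)\times(W-j-1)$ square.
\end{itemize}
The first leg is therefore the unpaired row of length $W$. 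After it, each length $x=W-1,\dots,1$ appears exactly twice, as a column leg followed by a row leg, which gives \cref{eq:heb_sequence}. The count closes the argument: $W + 2\sum_{x=1}^{W-1}x = W^2$, so every output pixel is visited exactly once and the sequence ends with the $1\times1$ row leg.

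The running sum then follows from the accounting of \cref{app:hb_prelim}. Each leg of length $x$ contributes the same net debt $D(x)$ of \cref{eq:app_leg_debt}, whether it is a row or a column leg. This is the point that needs justification. By symmetry of the square kernel and the valid-padding geometry, a column leg of length $x$ frees a column of $x+(k-1)$ input pixels less the same $k-1$ overlap as a row leg. The unprocessed input region has the same shape, offset by $k-1$ in each dimension, as the unprocessed output rectangle. So the stale boundary strip has length $x+k-1$ regardless of orientation.

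We sum $D$ over all legs of length at least $x^0$. Legs are ordered by decreasing length, so this set is exactly a prefix of the sequence ending after the row leg of length $x^0$. That prefix consists of the leading $D(W_{out})$ plus two copies of each $D(x)$ for $x^0 \le x \le W_{out}-1$. Adding the residual $kC_{in}$ gives \cref{eq:app_paired_sum}.

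The main obstacle is the orientation-independence claim for $D$. The quantity freed by a leg depends on which input pixels become stale, and that set is determined by the traversal's current rectangle rather than by the leg alone. We handle it by stating and checking, within the same induction, a second invariant: the live input region is always the unprocessed output rectangle enlarged by $k-1$ in both dimensions. One point needs care. At the moment of the partial pair, after a row leg and before its column leg, the rectangle is non-square. There we verify directly that the next column leg still frees $x+k-1$ input pixels, so that pairing the legs as $[D_{col}(x), D_{row}(x)]$ is legitimate.
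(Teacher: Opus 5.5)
Your proof is correct and takes the same route as the paper's: legs alternate row and column, and each column leg matches the following row leg in length. The prefix of legs of length at least $x^0$ therefore contains $D(W_{out})$ once and each $D(x)$ with $x^0\le x\le W_{out}-1$ twice. Your rectangle induction, the $W^2$ pixel count, and the live-input check make explicit what the paper either asserts or inherits from its preliminaries.

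One minor overstatement: your claim that every leg frees a strip of $x+k-1$ input pixels fails at the terminal $1\times1$ row leg, which releases the entire remaining $k\times k$ input block. This does not affect the lemma, because the lemma concerns the per-leg values $D(x)$ fixed in the preliminaries rather than the physical release count of the final leg.
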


\begin{proof}

The traversal alternates (row-leg, column-leg, row-leg, column-leg $\dots$, row-leg) processing edges in decreasing length after the first sweep of the outer row. Under $H_{out} = W_{out}$, each $i$th column-leg and the $(i{+}1)$th row leg have the same length for $i \in [1, W_{out}{-}1]$, so summing over legs of length at least $x^0$ counts each length twice, except for the first row leg, which has a length of $W_{out}$.
\end{proof}

\begin{lemma}[Where the peak debt occurs]
\label{lem:concavity}
Let $x^*$ be as in \cref{eq:app_xstar}. Under $\alpha_{hb}>0$ the running sum
$L_D$ is a strictly concave downward parabola with vertex
$x = \tfrac{1}{2} + \rho/\alpha_{hb}$, and the peak debt of the
traversal is attained after both legs of length $\lfloor x^* \rfloor + 1$ have been processed:
\begin{equation}
D_{hb} = D(W_{out}) + 2\!\!\sum_{x=\lfloor x^* \rfloor + 1}^{W_{out}-1}\!\! D(x) + kC_{in}.
\label{eq:app_true_peak}
\end{equation}
\end{lemma}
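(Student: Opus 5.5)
We treat the running debt as a function of the threshold leg length $x^0$, using the paired form of \cref{lem:leg_pairing}. Write $L_D(x^0) = D(W_{out}) + 2\sum_{x=x^0}^{W_{out}-1} D(x) + kC_{in}$ for integer $1 \le x^0 \le W_{out}-1$. The sum has a closed form. Substituting $D(x) = x\alpha_{hb} - \rho$ gives $\sum_{x=x^0}^{W-1} x = \tfrac{1}{2}\big(W(W-1) - x^0(x^0-1)\big)$ and $\sum_{x=x^0}^{W-1} 1 = W - x^0$. Hence
\[
L_D(x^0) = C - \alpha_{hb}\, x^0(x^0-1) + 2\rho\, x^0,
\]
where $C$ collects every term independent of $x^0$. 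As a function of real $x^0$ this is a downward parabola whenever $\alpha_{hb}>0$. Its derivative is $-\alpha_{hb}(2x^0-1) + 2\rho$, which vanishes at $x^0 = \tfrac12 + \rho/\alpha_{hb} = \tfrac12 + x^*$. This establishes the concavity and vertex claims.

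Next we locate the integer maximizer. The first difference gives this directly: $L_D(x^0) - L_D(x^0+1) = 2D(x^0)$, since moving the threshold down by one adds both legs of length $x^0$. Therefore $L_D(x^0) \ge L_D(x^0+1)$ exactly when $D(x^0) \ge 0$, that is, when $x^0 \ge x^*$. Going down through the thresholds, the sum therefore keeps increasing while $x^0 > x^*$ and stops increasing once $x^0 \le x^*$. The largest integer with $x^0 > x^*$ is $\lfloor x^*\rfloor + 1$, so the maximum over integer thresholds is attained at $x^0 = \lfloor x^*\rfloor + 1$. This is the paired form stated in \cref{eq:app_true_peak}. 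When $x^*$ is an integer we have $D(x^*)=0$, and $x^0 = x^*$ ties with it, so the value does not change. This tie is why the formula of \cref{eq:gural_eq2} happens to agree in that case.

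The main obstacle is justifying that the peak is taken only at leg boundaries where both legs of a pair are complete, and never at some intermediate instant. There are two sub-cases. First, within a single leg the debt is monotone in position: each pixel writes $C_{out}$ and frees input at a constant per-pixel rate. So the extremum over a leg lies at one of its endpoints, and it suffices to consider running sums after whole legs. Second, we must compare against the instant between the column leg and the row leg of the same length $\lfloor x^*\rfloor+1$, and between the two legs of length $\lfloor x^*\rfloor$. For the first comparison, stopping after only one leg of length $\lfloor x^*\rfloor+1$ gives a value smaller by $D(\lfloor x^*\rfloor+1)$, which is positive. For the second, adding a single leg of length $\lfloor x^*\rfloor$ contributes $D(\lfloor x^*\rfloor) \le 0$. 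Both half-pair instants are therefore dominated by the full-pair value.

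Finally, we treat the boundary cases. If $\lfloor x^*\rfloor + 1 > W_{out}-1$, the sum in \cref{eq:app_true_peak} is empty and the peak is attained after the unpaired first leg. We read the statement as holding with the empty-sum convention, which is consistent with the clamp $\hat{x}$ used later in \cref{cor:corrected_hb}. We also need $\lfloor x^*\rfloor + 1 \ge 1$, which always holds, so the lower limit is valid.
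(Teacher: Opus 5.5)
Your proof is correct and follows the paper's route. You derive the same closed-form parabola with vertex $x^*+\tfrac12$, and your first difference $L_D(x^0)-L_D(x^0+1)=2D(x^0)$ plays the role of the paper's nearest-integer-to-the-vertex argument and its sign bracket on $D(\lfloor x^*\rfloor+1)$ and $D(\lfloor x^*\rfloor)$. Your empty-sum case $\lfloor x^*\rfloor\ge W_{out}-1$ is the paper's clamp.

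Two small points. First, $\lfloor x^*\rfloor+1$ is the \emph{smallest}, not the largest, integer exceeding $x^*$. Second, the within-leg part of your ``main obstacle'' paragraph is not needed, since $D_{hb}$ is defined as $\max_{x^0}L_D(x^0)$. Its premise of a constant per-pixel release rate is also slightly off, because each leg releases its extra $\rho$ at its tail. That tail release makes each leg's intra-leg peak exceed its post-leg value by the same $kC_{in}$, which is exactly the constant already in $L_D$, so your conclusion stands.
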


\begin{proof}
Substituting \cref{eq:app_leg_debt} into $L_D(x^0) = D(W_{out}) + 2\sum_{x=x^0}^{W_{out}-1}
D(x) + kC_{in}$, the debt if the traversal stops after both legs of length
$x^0$, collects to~\footnote{Under the generalized herringbone traversal, this is a special case of \cref{eq:Ld_minor} with $y=1$, just about to process the first pixel of the next column; where $\rho_{maj} = \rho_{min} = \rho$}

\begin{equation}
\begin{split}
L_D(x^0) = &-\alpha_{hb}(x^0)^2 + (\alpha_{hb} + 2\rho)\,x^0 \\
          & + \big(\alpha_{hb}W_{out}^2 - 2\rho W_{out} - \rho + kC_{in}\big).
\label{eq:app_running_parabola}
\end{split}
\end{equation}

The second derivative is $-2\alpha_{hb} < 0$, and $L_D'(x^0) = 0$ at
$x^0 = \tfrac{1}{2} + \rho/\alpha_{hb} = x^* + \tfrac{1}{2}$: the running
sum peaks half a leg past the length at which the leg debt changes sign. A
concave parabola is maximized over the integers at the integer nearest its
vertex, and that integer is $\lfloor x^* \rfloor + 1$;
the clamp to $W_{out}$ covers the case $x^* \ge W_{out}$, in which no traversed leg has
positive debt: the running sum never grows. Therefore, the peak is reached after the first
leg alone and the sum in \cref{eq:app_true_peak} is empty, leaving
$D_{hb} = D(W_{out}) + kC_{in}$.

When $x^*$ is an integer the nearest integer is
tied between $x^*$ and $x^*+1$, and either gives the same value since
$D(x^*) = 0$. When $x^*$ is not an integer and with the clamp inactive, $\lfloor x^* \rfloor$ + 1 is the shortest leg with positive debt and
$\lfloor x^* \rfloor$, the longest with negative debt:

\begin{equation}
D(\lfloor x^* \rfloor + 1) > 0 > D(\lfloor x^* \rfloor).
\label{eq:app_bracket}
\end{equation}
By \cref{lem:leg_pairing} each leg of length $x$ is followed by its
partner, so the peak includes both legs of length $\lfloor x^* \rfloor + 1$ and stops there.

% What the published form cannot express is not this maximum but the pairing.
% By \cref{rem:unpaired_lower}, $\Phi(x^0) = L_D(x^0{+}1) + D(x^0)$: evaluating
% the telescoped identity at any integer leaves $D(x^0)$ unpaired at the lower
% bound. No choice of $x^0$ removes that term, which is the root of
% Limitation~1.
\qed
\end{proof}

\begin{lemma}[Closed-form equivalence of peak debt]
\label{lem:closed_form_equivalence}
G\&M~\cite{pmlr-v97-gural19a} evaluate the debt at the square of side $\lfloor x^* \rfloor$, so their expression accounts for the legs from $W_{out}$ down to the first leg of length $\lfloor x^* \rfloor$. For the integer $\lfloor x^* \rfloor\in [1, W_{out}]$ that accounting is $\Phi(\lfloor x^* \rfloor)$, where

\begin{equation}
\begin{split}
\Phi(\lfloor x^* \rfloor) &\;\equiv\;
C_{out}\big(W_{out}^2 - {\lfloor x^* \rfloor}^2\big) \\
&- C_{in}\Big(W_{in}^2 - ({\lfloor x^* \rfloor}{+}k{-}1)^2 - k\Big)
\end{split}
\label{eq:app_closed_form}
\end{equation}
\end{lemma}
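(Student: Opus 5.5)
The plan is to make the phrase ``the legs from $W_{out}$ down to the first leg of length $\lfloor x^*\rfloor$'' precise using \cref{lem:leg_pairing}, and then evaluate the resulting running sum in closed form. Write $m \equiv \lfloor x^* \rfloor$ and $W \equiv W_{out}$. By \cref{eq:heb_sequence}, the legs concerned are the unpaired row leg $D_{row}(W)$, both legs of every length $x \in [m{+}1, W{-}1]$, and the column leg $D_{col}(m)$. The accounting to evaluate is therefore
\[
S(m) \;=\; D(W) + 2\sum_{x=m+1}^{W-1} D(x) + D(m) + kC_{in}.
\]
Geometrically, this is the debt at the instant the unprocessed output is the $m\times m$ square. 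That is exactly where \cref{eq:gural_eq2} evaluates it.

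Next I split each leg debt into its output and input parts, $D(x) = A(x) - B(x)$. From \cref{eq:app_leg_debt}, $A(x) = xC_{out}$ and $B(x) = (x+k-1)C_{in}$, because $x\alpha_{hb} - \rho = xC_{out} - (x+k-1)C_{in}$. The two parts are then summed separately.

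Both sums reduce to one elementary identity. For integers $0 \le a < b$,
\[
b + 2\sum_{y=a+1}^{b-1} y + a \;=\; b^2 - a^2 .
\]
It follows from $2\sum_{y=a+1}^{b-1} y = (b-1)b - a(a+1)$. In geometric terms, the pixels of a $b\times b$ square not in its $a\times a$ corner are peeled off by one leg of length $b$, paired legs of each intermediate length, and one leg of length $a$.

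Applying the identity with $y = x$, $a = m$, $b = W$ gives an output total of $C_{out}(W^2 - m^2)$. Applying it with the shifted variable $y = x + k - 1$, $a = m+k-1$, $b = W+k-1$ gives an input total of $C_{in}\big((W+k-1)^2 - (m+k-1)^2\big)$. Under valid padding with unit stride, $W_{in} = W + k - 1$. Hence
\[
S(m) = C_{out}(W^2 - m^2) - C_{in}\big(W_{in}^2 - (m+k-1)^2\big) + kC_{in},
\]
and the last two terms collect into $-C_{in}\big(W_{in}^2 - (m+k-1)^2 - k\big)$. This is exactly $\Phi(m)$ in \cref{eq:app_closed_form}.

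I expect no real algebraic obstacle. The delicate step is the first one: pinning down exactly which legs are counted, so that the lower bound $m$ enters once (via the column leg) and not twice. Getting this right is what makes the output total $W^2 - m^2$ rather than $W^2 - m(m-1)$ or similar. It is also the point the lemma exists to expose, since comparing $S(m)$ with \cref{eq:app_true_peak} leaves exactly one unpaired $D(m)$, which \cref{lem:flooring_error} will measure.

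I would also check the endpoints of the range $m \in [1, W]$. At $m = W$ the interval of paired lengths is empty and the identity should be read as $b = a$, giving zero legs and $\Phi(W) = kC_{in} - C_{in}(W_{in}^2 - W_{in}^2) = kC_{in}$. That is the residual footprint alone, and I would state it as the degenerate convention. I would verify $m = 1$ directly against \cref{eq:heb_sequence}, where the last counted leg is $D_{col}(1)$.
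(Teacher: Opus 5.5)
Your proposal is correct and follows essentially the paper's argument. The paper writes the same accounting as the paired sum $\sum_{x=\lfloor x^*\rfloor}^{W_{out}-1}[D(x{+}1)+D(x)]$, which expands to your $S(m)$, splits $D=A-B$, reduces each part to a difference of squares, then substitutes $W_{in}=W_{out}+k-1$ and absorbs the trailing $kC_{in}$. Your shifted identity is a compact repackaging of those two summations, and the paired form makes $m=W_{out}$ a genuinely empty sum rather than a convention.
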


\begin{proof}
Group the legs into adjacent pairs up to the first leg of length $\lfloor x^* \rfloor$. We can write the accounting as
$\sum_{x=\lfloor x^* \rfloor}^{W_{out}-1}\big[D(x{+}1)+D(x)\big]$ which counts each length in
$(\lfloor x^* \rfloor, W_{out})$ twice and each endpoint, $W_{out}$ and $\lfloor x^* \rfloor$, once. Split $D$
into its output and input parts, $D(x) = A(x) - B(x)$ with $A(x) = C_{out}x$
and $B(x) = C_{in}(x{+}k{-}1)$, and sum each.

The output part contributes $A(x{+}1) + A(x) = (2x{+}1)C_{out}$, so
\begin{equation}
\label{eq:pair_sum_A}
\begin{split}
\sum_{x=\lfloor x^* \rfloor}^{W_{out}-1} &\big[A(x{+}1)+A(x)\big] \\
&= C_{out}\Big[2\!\!\sum_{x=\lfloor x^* \rfloor}^{W_{out}-1}\!\! x
   \;+\; (W_{out} - \lfloor x^* \rfloor)\Big] \\
&= C_{out}(W_{out} - \lfloor x^* \rfloor)
   \big[(W_{out} - 1 + \lfloor x^* \rfloor) + 1\big] \\
&= C_{out}\big(W_{out}^{2} - \lfloor x^* \rfloor^{2}\big).
\end{split}
\end{equation}

The input part contributes $B(x{+}1)+B(x) = (2x + 2k - 1)C_{in}$, so

\begin{equation}
\label{eq:pair_sum_B}
\begin{split}
\sum_{x=\lfloor x^* \rfloor}^{W_{out}-1} &\big[B(x{+}1)+B(x)\big] \\
&= C_{in}\Big[2\!\!\sum_{x=\lfloor x^* \rfloor}^{W_{out}-1}\!\! x
   \;+\; (2k{-}1)(W_{out} - \lfloor x^* \rfloor)\Big] \\
&= C_{in}(W_{out} - \lfloor x^* \rfloor)
   \big[(W_{out} - 1 + \lfloor x^* \rfloor) + (2k{-}1)\big] \\
&= C_{in}\big((W_{out}{+}k{-}1)^2 - (\lfloor x^* \rfloor{+}k{-}1)^{2}\big).
\end{split}
\end{equation}

the last step by difference of squares, since
$W_{out} - \lfloor x^* \rfloor = (W_{out}{+}k{-}1) - (\lfloor x^* \rfloor{+}k{-}1)$ and
$W_{out} - 1 + \lfloor x^* \rfloor + 2k - 1 = (W_{out}{+}k{-}1) + (\lfloor x^* \rfloor{+}k{-}1)$.

Subtracting \cref{eq:pair_sum_B} from \cref{eq:pair_sum_A}, substituting
$W_{in} = W_{out} + k - 1$, and adding  the trailing $+kC_{in}$ for the preceding output of
\cref{eq:app_paired_sum} into the $C_{in}$ bracket, which changes its sign,
gives \cref{eq:app_closed_form}.\qed
\end{proof}

\begin{remark}
\label{rem:unpaired_lower}
Expanding \cref{eq:app_paired_sum} shows both endpoints contribute \emph{once}:
\[
\sum_{x=\lfloor x^* \rfloor}^{W_{out}-1}\big[D(x{+}1)+D(x)\big]
= D(W_{out}) + 2\!\!\sum_{x=\lfloor x^* \rfloor+1}^{W_{out}-1}\!\! D(x) + D(\lfloor x^* \rfloor).
\]
The single count of $D(W_{out})$ is genuine: that leg occurs once in the
traversal. The single count of $D(\lfloor x^* \rfloor)$ is not, and it places $\Phi$ exactly
halfway between two consecutive attainable states,
$\Phi(\lfloor x^* \rfloor) = \tfrac{1}{2}\big[L_D(\lfloor x^* \rfloor{+}1) + L_D(\lfloor x^* \rfloor)\big]$: the debt therefore includes an extra negative debt which reduces the computed debt. No choice of
$x$ removes that term, which is what fixes the coefficient in
\cref{lem:flooring_error}. The published choice $\lfloor x^* \rfloor$ is correct for
its own parameterization; what it misses is the pairing, not the location of
the peak.
\end{remark}

\begin{lemma}[Exact flooring underestimate]
\label{lem:flooring_error}
When $x^*$ is non-integer and $\lfloor x^* \rfloor \le W_{out}-1$, the as-published herringbone
formula (\cref{eq:gural_eq2}) underestimates the true peak debt by exactly
\begin{equation}
D_{hb} - D_{hb}^{\mathrm{G\&M}} = (k-1)C_{in} \bmod (C_{out}-C_{in}).
\label{eq:flooring_gap}
\end{equation}
\end{lemma}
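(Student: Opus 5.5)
The plan is to express both quantities as sums of leg debts over the same paired-leg structure, so that their difference reduces to a single unpaired leg. Write $m \equiv \lfloor x^* \rfloor$. The hypothesis $m \le W_{out}-1$ keeps the clamp in \cref{lem:concavity} inactive. Non-integrality of $x^*$ puts us in the strict bracket \cref{eq:app_bracket}. \cref{lem:concavity} then gives the true peak as
\[
D_{hb} = D(W_{out}) + 2\sum_{x=m+1}^{W_{out}-1} D(x) + kC_{in}.
\]

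Next I identify $D_{hb}^{\mathrm{G\&M}}$ with $\Phi(m)$ of \cref{lem:closed_form_equivalence}. On a square grid under the restricted setting, $H_{out}W_{out} = W_{out}^2$ and $H_{in}W_{in} = W_{in}^2$. With these substitutions \cref{eq:gural_eq2} is literally \cref{eq:app_closed_form} at $m$. By the proof of \cref{lem:closed_form_equivalence}, $\Phi(m)$ is the paired sum $\sum_{x=m}^{W_{out}-1}[D(x{+}1)+D(x)] + kC_{in}$. By \cref{rem:unpaired_lower}, this expands to
\[
D(W_{out}) + 2\sum_{x=m+1}^{W_{out}-1} D(x) + D(m) + kC_{in}.
\]
I would check explicitly that the trailing $kC_{in}$ enters both expressions identically. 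In \cref{lem:closed_form_equivalence} it is folded into the $C_{in}$ bracket as the $-k$ term, which is also the $-k$ term in \cref{eq:gural_eq2}.

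Subtracting the two expansions, all common terms cancel and
\[
D_{hb} - D_{hb}^{\mathrm{G\&M}} = -D(m) = \rho - \alpha_{hb}\, m .
\]
Since $\alpha_{hb} > 0$ (implicit in $x^*$ being finite and positive) and $m = \lfloor \rho/\alpha_{hb} \rfloor$, the quantity $\rho - \alpha_{hb}\lfloor \rho/\alpha_{hb}\rfloor$ is by definition $\rho \bmod \alpha_{hb} = (k-1)C_{in} \bmod (C_{out}-C_{in})$. This is strictly positive exactly because $x^*$ is non-integral, consistent with the sign in \cref{eq:app_bracket}.

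The main obstacle is the bookkeeping at the edges of the range, not the cancellation. First, \cref{lem:closed_form_equivalence} is stated for $m \in [1, W_{out}]$, but $m = 0$ can occur (e.g.\ the first row of \cref{tab:limitations}, $x^* = 0.5$). For that case I would recheck the telescoping in \cref{eq:pair_sum_A,eq:pair_sum_B} with lower limit $0$. The algebra is polynomial and holds verbatim, with $D(0) = -\rho$ giving a difference of $\rho = \rho \bmod \alpha_{hb}$ when $\rho < \alpha_{hb}$. Second, I must confirm that the sum in \cref{lem:concavity} starting at $m+1 \le W_{out}$ is well defined, with an empty sum when $m = W_{out}-1$. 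In that case both expressions still differ only by $D(m)$.
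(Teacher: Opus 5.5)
Your proposal is correct and follows the paper's proof essentially step for step. You identify \cref{eq:gural_eq2} with $\Phi(\lfloor x^*\rfloor)$, expand it via \cref{rem:unpaired_lower}, subtract \cref{eq:app_true_peak}, and read off $-D(\lfloor x^*\rfloor)=\rho-\alpha_{hb}\lfloor x^*\rfloor=\rho \bmod \alpha_{hb}$. Your explicit re-check of the telescoping sums at $\lfloor x^*\rfloor=0$, a case \cref{lem:closed_form_equivalence} does not formally cover, usefully tightens the paper's one-line remark that no special case arises there.
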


\begin{proof}
\cref{eq:gural_eq2} is $\Phi(\lfloor x^*\rfloor) = D_{hb}^{\mathrm{G\&M}}$. By
\cref{rem:unpaired_lower},
\[
\Phi(\lfloor x^* \rfloor)= D(W_{out}) + 2\!\!\sum_{x=\lfloor x^* \rfloor+1}^{W_{out}-1}\!\! D(x) + D(\lfloor x^* \rfloor) + kC_{in} ,
\]
which by \cref{eq:app_true_peak} is $D_{hb} + D(\lfloor x^* \rfloor)$, it contains one additional
term. Since $D(\lfloor x^* \rfloor) < 0$ by
\cref{eq:app_bracket}, the published form falls short:
\[
D_{hb} - D_{hb}^{\mathrm{G\&M}} = -D(\lfloor x^* \rfloor) = \rho - \alpha_{hb}\lfloor x^* \rfloor.
\]
With $\lfloor x^* \rfloor = \lfloor \rho/\alpha_{hb}\rfloor$, the definition of the modular
remainder gives $\rho - \lfloor \rho/\alpha_{hb}\rfloor\alpha_{hb}
= \rho \bmod \alpha_{hb}$, which is \cref{eq:flooring_gap}.

When $\lfloor x^*\rfloor \ge W_{out}$ no traversed leg has
positive debt, the clamp sets $\hat{x} = W_{out}-1$, and the sum in
\cref{eq:app_true_peak} is empty. When $\rho < \alpha_{hb}$, we have
$\lfloor x^* \rfloor = 0$ and the sum runs from $1$, which is a genuine leg; no special
case arises. The arithmetic is unchanged, and this
is the regime of the first convolutional layer in
\cref{tab:limitations}.\qed
\end{proof}

\subsection{Proof of \cref{cor:corrected_hb}}
\label{app:corrected_hb_proof}

\begin{proof}
If $\alpha_{hb} \le 0$ then $D(x) \le 0$ for all $x \ge 0$, so the running sum
never grows and the peak is the write-before-clear headroom $C_{out}$ alone.

% For $\alpha_{hb} > 0$, by \cref{lem:flooring_error},
% $D_{hb} = \Phi(\lfloor x^* \rfloor) - D(\lfloor x^* \rfloor) = \Phi(\lfloor x^* \rfloor) + (\rho - \alpha_{hb}\lfloor x^* \rfloor)$. Writing $\Phi(\lfloor x^*\rfloor)$ out in full is:

% \begin{equation*}
% \begin{split}
% D_{hb} &= C_{out}\big(H_{out}W_{out} - \hat{x}^2\big) \\
%        &\quad - C_{in}\big(H_{in}W_{in} - (\hat{x}+k-1)^2 - k\big) \\
%        &\quad + (\rho - \alpha_{hb}\hat{x}).
% \end{split}
% \end{equation*}

% The clamp $\hat{x} = \min\!\big(\lfloor x^* \rfloor,\; W_{out}-1\big)$ follows from \cref{lem:concavity}.\qed

For $\alpha_{hb} > 0$, \cref{lem:flooring_error} gives
$D_{hb} = \Phi(\lfloor x^* \rfloor) - D(\lfloor x^* \rfloor)$, which is
$\Phi(\lfloor x^* \rfloor) + (\rho - \alpha_{hb}\lfloor x^* \rfloor)$.
By \cref{lem:concavity} the peak is attained after both legs of length
$\min(\lfloor x^* \rfloor + 1,\, W_{out})$, so the evaluation point is one leg
shorter:
\[
\hat{x} \;\equiv\; \min(\lfloor x^* \rfloor + 1,\, W_{out}) - 1
        \;=\; \min(\lfloor x^* \rfloor,\, W_{out} - 1).
\]
Writing $\Phi$ out in full from \cref{eq:app_closed_form} at $\hat{x}$, with
$H_{out} = W_{out}$ and $H_{in} = W_{in}$, gives \cref{eq:corrected_hb}:
\begin{equation*}
\begin{split}
D_{hb} &= C_{out}\big(H_{out}W_{out} - \hat{x}^2\big) \\
       &\quad - C_{in}\big(H_{in}W_{in} - (\hat{x}+k-1)^2 - k\big) \\
       &\quad + (\rho - \alpha_{hb}\hat{x}).
\end{split}
\end{equation*}

\qed
\end{proof}

\section{Generalized Herringbone: Full Derivation}
\label{app:generalized_hb}

\cref{app:hb_prelim} assumed valid padding, unit stride, unit dilation, an odd
square kernel, and square output. Lifting these breaks the row--column
symmetry, changing both the per-leg debt and the leg sequence. We state the
generalized debt equation, then locate its peak.

\subsection{Axis-Agnostic Setup}
\label{app:generalized_hb_setup}

Let $M = \max(H_{out}, W_{out})$, $n = \min(H_{out}, W_{out})$, and
$\delta = M - n$. Let $w_r = W_p - s_w(W_{out}-1)$ and
$h_r = H_p - s_h(H_{out}-1)$ be the residual input footprints on each axis.
Define the leg-debt slope and the per-leg residual inputs

\begin{equation}
\begin{aligned}
\alpha_{hb}^{gen} &\equiv C_{out} - s_h s_w C_{in}, \\ \qquad
\rho_r &= s_h C_{in}(w_r - s_w), \;
\rho_c = s_w C_{in}(h_r - s_h),
\end{aligned}
\label{eq:alpha_rho}
\end{equation}

together with the remainder each leg releases at its tail,

\begin{equation}
\begin{split}
R_r &= w_r s_h C_{in} = \rho_r + s_hs_wC_{in}, \\ \qquad
R_c &= h_r s_w C_{in} = \rho_c + s_hs_wC_{in},
\end{split}
\label{eq:R_axis}
\end{equation}

where the $r$ subscript names a horizontal leg and $c$ a vertical one. Both
pairs are then relabeled by which axis is longer: when $W_{out} \ge H_{out}$
the major axis is horizontal and
$(\rho_{maj}, \rho_{min}, R_{maj}, R_{min}) = (\rho_r, \rho_c, R_r, R_c)$;
otherwise the assignment is reversed. Write
$\beta_{hb} \equiv C_{out} - R_{min}$ for the last-leg slope. Under the
restrictions of \cref{app:hb_prelim} these collapse to
$\alpha_{hb}^{gen} = C_{out}-C_{in}$ and $\rho_{maj} = \rho_{min} = \rho$, recovering
\cref{eq:app_leg_debt}.

\paragraph{Traversal for uneven output grids.}
When $H_{out}\neq W_{out}$ the traversal begins along the major axis and
interleaves the two axes thereafter, alternating a major-axis leg with a
shorter minor-axis leg and always ending on the major axis
(\cref{fig:combined_herringbone}):

\begin{equation}
\begin{aligned}
&\underbrace{\mathrm{maj}(M),\;\mathrm{min}(n{-}1)}_{\text{pair 1}},\;
\underbrace{\mathrm{maj}(M{-}1),\;\mathrm{min}(n{-}2)}_{\text{pair 2}},\;\ldots, \\
&\underbrace{\mathrm{maj}(M{-}n{+}2),\;\mathrm{min}(1)}_{\text{pair }n{-}1},\;
\mathrm{maj}(M{-}n{+}1).
\end{aligned}
\label{eq:uneven_sequence}
\end{equation}

Here $\mathrm{maj}$ and $\mathrm{min}$ denote a leg along the major and minor
axis: $\mathrm{maj}=\mathrm{row}$, $\mathrm{min}=\mathrm{col}$ when the major
axis is horizontal ($W_{out}\ge H_{out}$, \cref{fig:row_major}), and the reverse
when it is vertical ($H_{out}> W_{out}$, \cref{fig:col_major}). For the
$5\times 3$ output of \cref{fig:col_major} ($M=5$, $n=3$, $\delta=2$) the
sequence is $\mathrm{col}(5),\ \mathrm{row}(2),\ \mathrm{col}(4),\
\mathrm{row}(1),\ \mathrm{col}(3)$.

The extended debt sequence $L_D$ concatenates the per-step debts in this order,
and the peak debt is

\begin{equation}
D_{hb}^{\mathrm{gen}} = \max_i \sum_{j=1}^{i} L_D(j) + R,
\label{eq:gen_running_sum}
\end{equation}

where $R$ is $R_r$ or $R_c$ according to the axis of the leg containing step
$j$. \cref{lem:traversal_complete} confirms that the sequence visits every
output pixel exactly once, so the peak is taken over a genuine traversal of the
full grid.

\begin{figure}[htbp]
    \centering
    \begin{subfigure}[b]{0.48\textwidth}
        \centering
        \includegraphics[width=\linewidth]{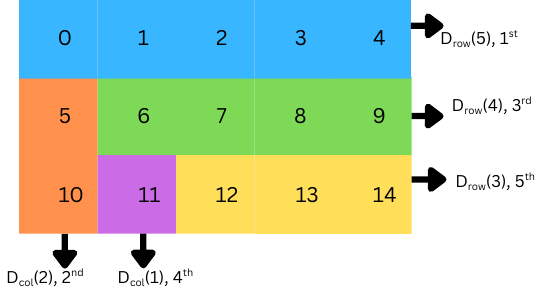}
        \caption{Row-major herringbone order}
        \label{fig:row_major}
    \end{subfigure}
    \hfill
    \begin{subfigure}[b]{0.4\textwidth}
        \centering
        \includegraphics[width=\linewidth]{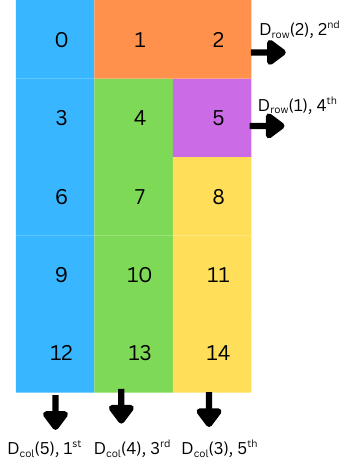}
        \caption{Column-major herringbone order}
        \label{fig:col_major}
    \end{subfigure}
    \caption{Herringbone traversal order on uneven output grids.}
    \label{fig:combined_herringbone}
\end{figure}

\begin{lemma}[Traversal completeness]
\label{lem:traversal_complete}
The herringbone traversal of an $M \times n$ output grid visits every output
pixel exactly once; equivalently, the leg lengths sum to $Mn$.
\end{lemma}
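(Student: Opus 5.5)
Two things have to be shown: the leg lengths in \cref{eq:uneven_sequence} sum to $Mn$, and the legs are pairwise disjoint and cover the grid. The plan is to do the arithmetic first and then prove coverage by induction on $n$, peeling one outer row and one outer column at a time.

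\emph{Counting.} Pair $i$, for $i=1,\dots,n-1$, contributes $(M-i+1)+(n-i)$. The final unpaired major leg contributes $M-n+1$. Summing,
\[
\sum_{i=1}^{n-1}\big(M+n+1-2i\big) + (M-n+1) = (n-1)(M+n+1) - n(n-1) + M - n + 1 ,
\]
which simplifies to $(n-1)(M+1) + M - n + 1 = Mn$. This settles the equivalent form of the statement, provided we also know that no pixel is visited twice: with $Mn$ total visits, disjointness forces the visits to be exactly a bijection onto the grid.

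\emph{Disjointness and coverage.} Maintain the invariant that before pair $i$ starts, the unvisited pixels form an axis-aligned rectangle of size $(M-i+1)\times(n-i+1)$ (major $\times$ minor). Each major leg traverses a full outer line of that rectangle along the major axis, leaving an $(M-i+1)\times(n-i)$ rectangle. The following minor leg traverses an outer line along the minor axis, of length $n-i$, leaving $(M-i)\times(n-i)$. This is exactly the invariant for index $i+1$. After pair $n-1$ the remainder is $(M-n+1)\times 1$, a single line that the last major leg covers exactly.

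Because each leg removes pixels only from the current unvisited rectangle, no pixel is visited twice. Because the final rectangle is emptied, every pixel is visited. The base case $n=1$ is the single leg $\mathrm{maj}(M)$. The case $M=n$ recovers \cref{lem:leg_pairing}.

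The main obstacle is making the geometric invariant precise rather than pictorial. I expect to fix coordinates so that the unvisited region after $j$ removals on the major side and $j'$ on the minor side is $[j', M) \times [j, n)$, with each leg running along the current boundary line adjacent to the previous leg's endpoint, as in \cref{fig:combined_herringbone}. I will then check that a leg's length equals the current side length of the rectangle, which is where the alternation and the off-by-one between $M-i+1$ and $n-i$ must be verified. The choice of starting on the major axis matters only for the final leg having length $\delta+1 \ge 1$. If one started on the minor axis instead, the final rectangle would degenerate incorrectly when $\delta>0$.
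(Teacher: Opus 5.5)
Your proof is correct and follows the paper's route. You sum the leg lengths of \cref{eq:uneven_sequence} to $Mn$ and rule out revisits, and your shrinking-rectangle invariant makes precise what the paper compresses into ``each leg covers a fresh row or column edge.'' Drop the closing aside, though, because it is wrong and not needed for the lemma: a minor-first alternation also peels the rectangle to empty, ending on a major leg of length $\delta$ (empty when $\delta=0$), so nothing degenerates when $\delta>0$.
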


\begin{proof}
By \cref{eq:uneven_sequence} the traversal takes $n$ major-axis legs of lengths
$M, M{-}1, \dots, M{-}n{+}1$ and $n-1$ minor-axis legs of lengths
$n{-}1, n{-}2, \dots, 1$, so

\begin{align}
\sum_{\text{legs}} \text{length}
&= \underbrace{\sum_{x=M-n+1}^{M} x}_{\text{major}}
 + \underbrace{\sum_{x=1}^{n-1} x}_{\text{minor}} \\
&= \tfrac{n(2M-n+1)}{2} + \tfrac{(n-1)n}{2}
 = \tfrac{n}{2}(2M) = Mn. \nonumber
\end{align}

Each leg covers a fresh row or column edge, so no pixel is revisited; a
traversal of $Mn$ steps that never revisits a pixel visits every one of the
$Mn$ pixels exactly once.\qed
\end{proof}

\subsection{The Debt Equation Has Three Leg Types}
\label{app:hb_debt_equation}

Index a leg by the length the number of pixels $x$ in its major-axis partner\footnote{The minor axis
is shorter by $\delta+1$, so a minor leg indexed $x$ has actual length
$x-\delta-1$.} and the position of the pixel within it $y$. The debt then takes one of three
forms:

\begin{equation}
\begin{gathered}
\text{major-axis leg:}\quad D_{maj}(x,y) = \alpha_{hb}^{gen}\, y - \rho_{maj}, \\
x \in [\delta+2, M], \quad y \in [1, x]
\end{gathered}
\label{eq:leg_major}
\end{equation}

\begin{equation}
\begin{gathered}
\text{minor-axis leg:}\quad D_{min}(x,y) = \alpha_{hb}^{gen}\, y - \rho_{min}, \\
x \in [\delta+2, M], \quad y \in [1, x-\delta-1]
\end{gathered}
\label{eq:leg_minor}
\end{equation}

\begin{equation}
\begin{gathered}
\text{last major leg ($x=\delta+1$):}\quad D_{maj}^{\mathrm{last}}(y) = C_{out} + \beta_{hb}(y-1), \\
y \in [1, \delta+1]
\end{gathered}
\label{eq:leg_last}
\end{equation}

The last leg does not follow the alternating pattern and is governed by
$R_{min}$ rather than by $s_h s_w C_{in}$: it releases the remainder left at the tail of
the traversal, which is why its slope is $\beta_{hb}$ and not $\alpha_{hb}^{gen}$.

\paragraph{Locating the peak.}
\label{app:hb_locating_peak}
The total debt after a prefix of legs is the running sum $L_D = \sum D$. The
three leg types contribute differently, so $L_D$ is piecewise quadratic and the
global peak may fall in any one component; we evaluate a candidate maximum for
each and take the overall maximum (\cref{eq:generalized_hb}).

Within a single leg the debt is linear in $y$ with slope $\alpha_{hb}^{gen}$
(\cref{eq:leg_major,eq:leg_minor}), so the within-leg maximum is fixed by the
sign of $\alpha_{hb}^{gen}$ alone: for $\alpha_{hb}^{gen} > 0$ the debt increases along the
leg and the maximum is at its end, $y = x$ on the major axis or
$y = x-\delta-1$ on the minor; for $\alpha_{hb}^{gen} \le 0$ it is non-increasing and
the maximum is at the start, $y = 1$.

\paragraph{Across-leg maximum, major axis.}
\label{app:generalized_hb_across_major}
Summing $D_{maj}$ and $D_{min}$ over all legs of length $\ge x$, with the
within-leg contribution resolved as above,

\begin{align}
L_D^{maj}(x, y) &= \sum_{i=x+1}^{M} \big( D_{maj}(i) + D_{min}(i) \big) \nonumber \\
&\quad + \alpha_{hb}^{gen}(y - 1) + C_{out}.
\label{eq:Ld_major}
\end{align}

Collecting terms as in \cref{app:corrected_hb_proof},

\begin{align}
L_D^{maj}(x, y) = {}& -\alpha_{hb}^{gen} x^2 + (\rho_{maj} + \rho_{min} + \alpha_{hb}^{gen}\delta)x \nonumber \\
&+ \big[\alpha_{hb}^{gen} M^2 - M(\rho_{maj} + \rho_{min} + \alpha_{hb}^{gen}\delta) \nonumber \\
&\quad + \alpha_{hb}^{gen}(y - 1) + C_{out}\big],
\label{eq:Ld_major_closed}
\end{align}

a downward parabola in $x$ when $\alpha_{hb}^{gen} > 0$. Setting
$\partial L_D^{maj}/\partial x = 0$,

\begin{equation}
x^*_{maj} = \frac{\rho_{maj} + \rho_{min}}{2\alpha_{hb}^{gen}} + \frac{\delta}{2}.
\label{eq:app_xstar_maj}
\end{equation}

By \cref{lem:concavity} the integer maximizer is $\lfloor x^*_{maj} \rfloor$ or
$\lceil x^*_{maj} \rceil$, clipped to the leg-length domain $[\delta+2, M]$;
these populate $\mathcal{C}_{maj}$. When $\alpha_{hb}^{gen} \le 0$ the parabola opens
upward and the maximum is pushed to a domain endpoint, so
$\mathcal{C}_{maj} = \{\delta+2, M\}$.

\paragraph{Across-leg maximum, minor axis.}
\label{app:generalized_hb_across_minor}
The minor-axis sum ends one leg later, after its paired major leg has just completed:

\begin{align}
L_D^{min}(x, y) &= \sum_{i=x+1}^{M} \big( D_{maj}(i) + D_{min}(i) \big) \nonumber \\
&\quad + D_{maj}(x) + \alpha_{hb}^{gen}(y - 1) + C_{out},
\label{eq:Ld_minor}
\end{align}

which collects to \cref{eq:Ld_minor_closed}, differing from
\cref{eq:Ld_major_closed} only in the linear coefficient, where $\delta$
becomes $\delta+1$, and in an extra $-\rho_{maj}$ in the constant.

\begin{align}
L_D^{min}(x, y) = {}& -\alpha_{hb}^{gen} x^2 + \big(\rho_{maj} + \rho_{min} + \alpha_{hb}^{gen}(\delta+1)\big)x \nonumber \\
&+ \big[\alpha_{hb}^{gen} M^2 - M(\rho_{maj} + \rho_{min} + \alpha_{hb}^{gen}\delta) \nonumber \\
&\quad - \rho_{maj} + \alpha_{hb}^{gen}(y - 1) + C_{out}\big],
\label{eq:Ld_minor_closed}
\end{align}

Differentiating gives

\begin{equation}
x^*_{min} = \frac{\rho_{maj} + \rho_{min}}{2\alpha_{hb}^{gen}} + \frac{\delta+1}{2}
= x^*_{maj} + \tfrac{1}{2},
\label{eq:app_xstar_min}
\end{equation}

exactly half a leg to the right of the major-axis vertex --- a structural
consequence of the later summation start, independent of the convolution
parameters. $\mathcal{C}_{min}$ is formed from $x^*_{min}$ as
$\mathcal{C}_{maj}$ was from $x^*_{maj}$.

\paragraph{Last-leg candidate.}
\label{app:generalized_hb_lastleg}
The leg at $x = \delta+1$ closes the traversal and is governed by $\beta_{hb}$
(\cref{eq:leg_last}). Summing its contribution onto the completed minor
sequence,

\begin{equation}
L_{\mathrm{last}}
= L_D^{min}(\delta+2, 1) - R_{min} + C_{out} + \max(0, \beta_{hb}\,\delta),
\label{eq:Ld_last}
\end{equation}

where $L_D^{min}(\delta+2, 1) - R_{min}$ is the debt accumulated just before
the last leg starts, evaluated at the final minor leg with one pixel. The
$\max(0, \beta_{hb}\delta)$ term captures whether the accumulated last-leg debt
exceeds the $R_{min}$ payback; when $\beta_{hb} \le 0$ the payback dominates
and the term vanishes.

\begin{theorem}[Generalized herringbone peak debt]
\label{thm:generalized_hb}
Under arbitrary stride $(s_h, s_w)$, dilation $(d_h, d_w)$, padding
$(p_h, p_w)$, rectangular kernel $k_h \times k_w$, and possibly non-square
output, the peak debt of the herringbone traversal is
\begin{equation}
\begin{aligned}
D_{hb}^{\mathrm{gen}} \;=\; \max\Big(\, &C_{out},\;\; L_{\mathrm{last}},\;\; \max_{x\in\mathcal{C}_{\mathrm{maj}}} L_D^{maj}(x), \\
&\max_{x\in\mathcal{C}_{\mathrm{min}}} L_D^{min}(x) \,\Big),
\end{aligned}
\label{eq:generalized_hb}
\end{equation}
where $\mathcal{C}_{maj}$ and $\mathcal{C}_{min}$ are the candidate leg lengths
of \cref{app:generalized_hb_across_major,app:generalized_hb_across_minor}:
$\{\lfloor x^* \rfloor, \lceil x^* \rceil\}$ clipped to $[\delta+2, M]$ when
$\alpha_{hb}^{gen} > 0$, and the boundary points $\{\delta+2, M\}$ when
$\alpha_{hb}^{gen} \le 0$. Every argument of the maximum is available in closed form,
so the evaluation is $O(1)$.
\end{theorem}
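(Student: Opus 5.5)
The plan is to show that the running debt of the generalized traversal is a maximum over a short list of closed-form candidates. The traversal is the one of \cref{eq:uneven_sequence}, with per-step debts given by \cref{eq:leg_major,eq:leg_minor,eq:leg_last}. \cref{lem:traversal_complete} already guarantees that the sequence covers the grid exactly once. So $D_{hb}^{\mathrm{gen}}$ in \cref{eq:gen_running_sum} is the maximum of the prefix sums over every pixel of a genuine traversal.

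The first step partitions the prefixes by the leg currently being traversed: a major leg indexed $x\in[\delta+2,M]$, a minor leg indexed $x$, or the final major leg at $x=\delta+1$. Within any leg the increment is affine in $y$ with slope $\alpha_{hb}^{gen}$, or $\beta_{hb}$ on the last leg. The prefix sum is therefore affine within a leg, and its maximum sits at an endpoint. That endpoint is the leg's end when the slope is positive and $y=1$ otherwise, which is the resolution already built into \cref{eq:Ld_major,eq:Ld_minor}. The $y=1$ prefix of the very first leg contributes the term $C_{out}$. For the last leg, the prefix sum at its end is exactly $L_{\mathrm{last}}$ of \cref{eq:Ld_last}, where the $\max(0,\beta_{hb}\delta)$ term records which endpoint wins.

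The second step maximizes across legs. With the within-leg endpoint fixed, the per-leg maxima for major legs form the function $x\mapsto L_D^{maj}(x,\cdot)$, a quadratic in $x$ with leading coefficient $-\alpha_{hb}^{gen}$ by \cref{eq:Ld_major_closed}. The minor legs likewise give $L_D^{min}$, by \cref{eq:Ld_minor_closed}.

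When $\alpha_{hb}^{gen}>0$ the quadratic is concave. As in \cref{lem:concavity}, its integer maximizer on $[\delta+2,M]$ is the floor or ceiling of the vertex in \cref{eq:app_xstar_maj} or \cref{eq:app_xstar_min}, clipped to the domain. When $\alpha_{hb}^{gen}\le0$ the quadratic is convex or affine, so the maximum lies at a domain endpoint, $\delta+2$ or $M$. Together these give $\mathcal{C}_{maj}$ and $\mathcal{C}_{min}$, and taking the maximum over all four families yields \cref{eq:generalized_hb}. Each candidate is a fixed polynomial evaluation, so the whole computation is $O(1)$. The general parameters (stride, dilation, padding, rectangular kernel) enter only through $\alpha_{hb}^{gen}$, $\rho_{maj}$, $\rho_{min}$ and $R_{min}$, which is what justifies the claimed generality.

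The main obstacle is the bookkeeping that turns the sums in \cref{eq:Ld_major,eq:Ld_minor} into the stated closed forms. Two points need care: the minor leg indexed $x$ has actual length $x-\delta-1$, and the within-leg endpoint must be chosen consistently with the sign of $\alpha_{hb}^{gen}$. I would first check that these closed forms reduce to \cref{eq:app_running_parabola} under $\delta=0$ and $\rho_{maj}=\rho_{min}$, and then verify the half-leg shift between $x^*_{min}$ and $x^*_{maj}$ directly from the linear coefficients.

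A second delicate point is the boundary between the last paired minor leg and the unpaired final leg. There the release switches from $s_hs_wC_{in}$ to $R_{min}$, and this is why $L_{\mathrm{last}}$ must be treated as its own candidate rather than absorbed into the parabolas.
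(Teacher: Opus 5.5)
Your decomposition is the paper's own: the debt is affine within a leg, so the leg maximum sits at the endpoint fixed by the sign of the slope. Across legs the parabolas are maximized at the floor or ceiling of the vertex, or at the domain endpoints, and $L_{\mathrm{last}}$ and $C_{out}$ are adjoined as separate candidates.

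\textbf{The gap: the case $n=1$.} What your proposal lacks is the degenerate case $n=\min(H_{out},W_{out})=1$. The paper's proof dispatches it separately in \cref{app:generalized_hb_degenerate}, and on it your candidate rules, as stated, return the wrong value.

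There $M=\delta+1$, so the index range $[\delta+2,M]$ of paired legs is empty. The traversal is then the single final leg, of slope $\beta_{hb}=\alpha_{hb}^{gen}-\rho_{min}$, whose peak is $C_{out}+\max(0,\beta_{hb}\delta)$. Your rules still produce candidates: the ``endpoints'' $\delta+2=M+1$ and $M$ when $\alpha_{hb}^{gen}\le 0$, and a clip into an empty interval when $\alpha_{hb}^{gen}>0$. The closed forms evaluated there describe legs that do not exist.

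Concretely, take $\alpha_{hb}^{gen}\le 0$ and $\rho_{min}\ge 0$. Then $\beta_{hb}\le 0$ and the true peak is $C_{out}$. But substituting $x=M+1$, $y=1$, $\delta=M-1$ into \cref{eq:Ld_minor_closed} gives $L_D^{min}(M+1,1)=C_{out}-\alpha_{hb}^{gen}+\rho_{min}=R_{min}$. That is an over-report by $\rho_{min}-\alpha_{hb}^{gen}$, which is positive unless both terms vanish. This is the kind of overestimate the paper's simulator isolated.

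\textbf{The repair.} Declare $\mathcal{C}_{maj}=\mathcal{C}_{min}=\varnothing$ when $n=1$. Your own partition implies this if you require candidates to lie in a nonempty domain. Then \cref{eq:generalized_hb} reduces to $\max(C_{out},L_{\mathrm{last}})$.

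You must also check that $L_{\mathrm{last}}$ survives, since \cref{eq:Ld_last} itself evaluates $L_D^{min}$ at the out-of-domain point $\delta+2$. The computation above settles it: $L_D^{min}(\delta+2,1)-R_{min}=0$. So the debt carried into the last leg is zero, and $L_{\mathrm{last}}=C_{out}+\max(0,\beta_{hb}\delta)$, as it should be.

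\textbf{A smaller point, shared with the paper.} For $\alpha_{hb}^{gen}>0$ the within-leg endpoint $y=x$ depends on $x$. So the function you maximize across major legs is $x\mapsto L_D^{maj}(x,x)$, whose vertex is $x^*_{maj}+\tfrac12$ rather than the one in \cref{eq:app_xstar_maj}; likewise it is $x^*_{min}+\tfrac12$ for the minor family, with $y=x-\delta-1$. The candidate pair $\{\lfloor x^*_{maj}\rfloor,\lceil x^*_{maj}\rceil\}$ still contains the integer maximizer, because $\lceil x^*_{maj}\rceil$ is an integer nearest $x^*_{maj}+\tfrac12$. That is the step that actually needs stating, rather than ``floor or ceiling of the vertex''.
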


\begin{proof}
For each leg type, substitute the debt equation
(\cref{eq:leg_major,eq:leg_minor,eq:leg_last}) into
\cref{eq:gen_running_sum} and collect, as carried out in
\cref{app:generalized_hb_across_major,app:generalized_hb_across_minor,app:generalized_hb_lastleg}.
The within-leg maximum is at $y=x$ when $\alpha_{hb}^{gen}>0$ and $y=1$ when
$\alpha_{hb}^{gen}\le 0$ (\cref{app:hb_locating_peak}), and the across-leg vertex
follows by differentiating the resulting parabola and applying
\cref{lem:concavity}. Since $L_D$ is piecewise quadratic with one component per
leg type, its global maximum is the largest of the three component maxima,
together with the floor $C_{out}$ that the write-before-clear headroom imposes.
The degenerate case is treated in
\cref{app:generalized_hb_degenerate}.\qed
\end{proof}

\subsection{Degenerate Case and Scope}
\label{app:generalized_hb_degenerate}

\paragraph{Single-row or single-column output.}
When $n = \min(H_{out}, W_{out}) = 1$ the minor axis has one pixel across, so by the
pairing offset, no minor axis leg exists: the minor leg length $x-\delta-1$ is
non-positive for every $x \le M$. Since $M = \delta+1$, only the last leg
survives on the major axis; the alternating sum collapses to the last
leg contribution, and
$\mathcal{C}_{maj} \cup \mathcal{C}_{min} = \varnothing$, so
\cref{eq:generalized_hb} returns $\max(C_{out}, L_{\mathrm{last}})$.
Evaluating the major and minor candidates here would sum a leg sequence that
does not exist and would overestimate the peak by up to $128$ scalars on the
configurations of \cref{app:validation}, which is how the corruption simulator
isolated it.

\paragraph{Scope of the optimality claim.}
\cref{thm:generalized_hb} establishes the exact peak debt of the generalized
\emph{herringbone traversal}. Whether herringbone remains memory-optimal
\emph{among all traversals} under arbitrary parameters is a separate question,
and the answer is no. Gural~\&~Murmann's optimality
argument~\cite[Lemma~3.1]{pmlr-v97-gural19a} rests on a Chebyshev-ball
isoperimetric bound that assumes an isotropic square kernel and does not extend
to anisotropic receptive fields. More directly, when $s_h > k_h^{\mathrm{eff}}$
or the dilation is large enough, some input pixels are stale on arrival: no
output ever depends on them. A traversal that discards those before storing the
input and then runs herringbone uses strictly less memory, so herringbone
without such a pruning step cannot be optimal in that regime. We do not pursue
the pruned variant here. \cref{thm:generalized_hb} is used only as a baseline
against which RiP is compared, and the figures in
\cref{tab:sota_layer_results} are herringbone peak debts rather than
traversal-optimal minima.

\section{Dense Input Layout: Padding Not Materialized}
\label{app:dense}

\cref{eq:layout} stores the padded input in full. The
$(H_pW_p - H_{in}W_{in})C_{in}$ border cells all hold the same pad value and
carry no information, yet on a fixed-workspace device, they occupy memory addresses that nothing else may use. This appendix re-derives the minimum gap when they
are not stored at all, and settles the comparison between the two layouts,
which is less obvious than it looks.

\subsection{Layout and the Extended Next Live Input}
\label{app:dense_layout}

Under the dense layout the workspace holds only the $H_{in}W_{in}C_{in}$ real
scalars, packed row-major:

\begin{equation}
\underbrace{[\,0,\ldots,0\,]}_{\Delta\text{ scalars}}
\;\Big|\;
\underbrace{X[0,0,0],\ldots,X[H_{in}{-}1,W_{in}{-}1,C_{in}{-}1]}_{\text{real input, densely packed}},
\label{eq:dense_layout}
\end{equation}

with real pixel $(r,c)$ at $\Delta + (rW_{in}+c)C_{in}$. A tap landing outside
$[0,H_{in})\times[0,W_{in})$ reads an implicit pad value and touches no
workspace address.

\cref{eq:nli} then no longer suffices. It located the first still-needed
address by advancing one column, or wrapping one row, in the \emph{padded}
index space, where every position has an address. Under the dense layout the
next tap position may itself fall in the unstored region, where there is
nothing to report, so the dense next live input must skip forward to the first
\emph{real} pixel $(\bar r, \bar c)$ still required by an unprocessed output:

\begin{equation}
\mathrm{NLI}(r,c) =
\begin{cases}
\begin{aligned}[t]
&\Delta + (\bar{r} W_{in} + \bar{c})\,C_{in} \\
&\quad \text{if } (r,c) \neq (H_{out}{-}1,\, W_{out}{-}1),
\end{aligned} \\[8pt]
\begin{aligned}[t]
& +\infty \quad \text{otherwise (last pixel).}
\end{aligned}
\end{cases}
\label{eq:nli_dense}
\end{equation}

The skip is what costs the closed form its $O(1)$ evaluation. In the padded
layout the next live input advances by a fixed stride at every step, so the
debt is affine everywhere except at a row end. Here it advances by a fixed
stride only while no tap changes side of the padding boundary, and the debt
kinks wherever one does.

\subsection{The Bound}
\label{app:dense_bound}

\begin{lemma}[Dense affine debt]
\label{lem:dense_affine_debt}
Under the dense layout the total debt is
\begin{equation}
\label{eq:debt_dense}
\mathcal{D}(r,c) = \big(r W_{out} + (c{+}1)\big)C_{out}
                   - (\bar{r} W_{in} + \bar{c})C_{in} + \sigma,
\end{equation}
affine in $(r,c)$ between the indices at which a tap crosses the padding
boundary. On the column axis tap $i \in [0,k_w)$ enters the real region at
$c = \lceil (p_w - i d_w)/s_w \rceil$ and leaves it at
$c = \lceil (p_w - i d_w + W_{in} - 1)/s_w \rceil$; the row axis is analogous
with $j \in [0,k_h)$, $p_h$, $d_h$, $s_h$ and $H_{in}$.
\end{lemma}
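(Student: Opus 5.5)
The plan is to mirror the proof of \cref{lem:piecewise_affine_debt}: first establish the displayed formula \cref{eq:debt_dense} by direct substitution, then analyse how the first live real pixel $(\bar r,\bar c)$ depends on $(r,c)$.

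The first step is immediate. Substituting \cref{eq:nli_dense} into \cref{def:total_debt}, with $\mathrm{out\_end}(r,c)=(rW_{out}+c+1)C_{out}$ unchanged, gives \cref{eq:debt_dense} on every non-final pixel; the final pixel gives $\sigma$ exactly as before.

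The substance is the claim that $\bar r W_{in}+\bar c$ is affine in $(r,c)$ between boundary crossings. I would characterise $(\bar r,\bar c)$ explicitly. Because the traversal is row-major, the lowest still-needed real address belongs to the next output pixel $(r',c')$ in raster order, with $(r',c')=(r,c+1)$ or $(r+1,0)$. Output rows processed later read only input rows at or below those of $(r',\cdot)$, and later columns in the same row read only addresses at or after those of $(r',c')$. So $\bar r$ is the smallest row $r's_h-p_h+jd_h$, over $j\in[0,k_h)$, that lies in $[0,H_{in})$. The column $\bar c$ is the smallest real column $c's_w-p_w+id_w$ over taps $i$ whose column is real, and this minimum must also be taken over the remaining pixels of row $r'$.

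That last point is the step I expect to be the main obstacle. When every tap of pixel $(r',c')$ lands in left padding, the minimum has to be taken over subsequent columns, and likewise over subsequent rows when all vertical taps are in padding. I would handle it by showing that the minimum over later pixels is attained at the first pixel having any real tap. This follows because real column positions are nondecreasing in $c'$ for fixed $i$.

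With that settled, fix the set $I(c')=\{i: 0\le c's_w-p_w+id_w<W_{in}\}$ of active column taps. Then $\bar c=c's_w-p_w+i_{\min}d_w$, which is affine in $c'$ with slope $s_w$ for as long as $i_{\min}=\min I(c')$ is constant. The same holds for $\bar r$ in $r'$ with slope $s_h$.

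The set $I(c')$ changes only when some tap $i$ crosses $0$ or $W_{in}-1$. Solving $c's_w-p_w+id_w\ge0$ gives the entry index $\lceil (p_w-id_w)/s_w\rceil$. Solving $c's_w-p_w+id_w> W_{in}-1$ gives the exit index, which up to the off-by-one convention at the right edge is the stated $\lceil (p_w-id_w+W_{in}-1)/s_w\rceil$; I would check that convention against the pixel that first reads beyond the boundary. The row axis is analogous.

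Finally, combining the two axes: on each rectangle of $(r,c)$ delimited by these breakpoints and by the row-wrap at $c=W_{out}-1$, \cref{eq:debt_dense} becomes $rW_{out}C_{out}+(c+1)C_{out}-(\bar r W_{in}+\bar c)C_{in}+\sigma$. Here $\bar r$ and $\bar c$ are affine, so the debt is affine in $(r,c)$ on each such rectangle, which is the claim.
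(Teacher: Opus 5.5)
You follow the paper's route. You substitute the dense next live input into the debt definition, argue that with the active tap set fixed between crossings $(\bar r,\bar c)$ moves at the stride rate, and solve the tap position against the boundary for the crossing indices. The paper asserts the stride-rate motion in one line. You try to derive it from the claim that the lowest still-needed real address is read by the next output in raster order, or by the first later output with a real tap. That claim is false once padding is not stored.

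Take an ordinary same-padded $3\times3$ convolution ($p_h=p_w=1$, unit stride and dilation). After output $(0,c)$ with $1\le c\le W_{out}-2$, your rule returns $(\bar r,\bar c)=(0,c)$, the smallest real tap of output $(0,c{+}1)$. But output $(1,0)$ still reads input $(0,0)$, so the whole first input row stays live through output row $0$. The true value is $(0,0)$, and your formula understates the debt by $cC_{in}$.

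Your observation that later output rows read only input rows at or below those of $(r',\cdot)$ holds here, but it does not give the conclusion. A later output row whose top tap has just entered the real region reads the \emph{same} real row again from its first column, and in the dense row-major order those addresses come first. In the padded layout this cannot happen, because the next output's tap $(0,0)$ always has an address and bounds everything later.

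With $d>s$ even the row claim fails. For $k=3$, $d=p=2$, $s=1$, output row $1$ reads only real rows $1$ and $3$, while output row $2$ reads row $0$. Likewise output column $2$ reads column $0$, although output column $1$'s smallest real column is $1$. Monotonicity of $c's_w-p_w+id_w$ for a fixed tap does not make the minimum over the active set monotone, because later outputs acquire smaller-index taps that enter from the left at columns in $[0,s_w)$.

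The missing idea is to analyze $(\bar r,\bar c)$ as what it is: the row-major minimum over the real taps of \emph{all} unprocessed outputs, i.e.\ a suffix minimum. On each axis, the smallest real tap of output $c'$ grows by $s_w$ per output between left entries and drops into $[0,s_w)$ at each entry. So the suffix minimum over outputs after $c$ is either pinned at a constant or equal to the next output's smallest real tap, and the same holds on rows. Whenever some later output row reads a real input row of index at most the current output row's smallest real row, $(\bar r,\bar c)$ is pinned for the whole current output row.

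These regimes switch only within one index of a crossing, so $\mathcal{D}$ is still piecewise affine with breakpoints tied to the crossings. On pinned pieces, however, its slope along $c$ is $C_{out}$, not $C_{out}-s_wC_{in}$. Your formula $\bar c=c's_w-p_w+i_{\min}d_w$, like the paper's ``same coefficients'', holds only after the last entry on each axis. The affinity you need therefore has to come from the suffix-minimum structure, not from the stride-rate formula.

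Separately, the off-by-one you postpone does not resolve in favor of the statement. Your inequality gives $\lceil (p_w-id_w+W_{in})/s_w\rceil$. The stated index is instead the first output at which tap $i$ lands on column $W_{in}-1$, which is one earlier at unit stride. Say which convention you prove.
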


\begin{proof}
Only the subtracted term of \cref{eq:delta_rc} changes, the input now being
indexed densely. Between two consecutive crossings, no tap changes side of the
boundary, so $(\bar{r}, \bar{c})$ advances by $s_w$ columns per output column and
$s_h$ rows per output row exactly as in \cref{lem:piecewise_affine_debt}, and
\cref{eq:debt_dense} is affine there with the same coefficients. Tap $i$
crosses only where $cs_w + i d_w - p_w$ reaches $0$, entering the real region,
or $W_{in}$, leaving it; solving each for $c$ gives the two stated
indices.\qed
\end{proof}

\begin{theorem}[Minimum gap, dense layout]
\label{thm:gap_dense}
Assume $p_h < k_h^{\mathrm{eff}}$ and $p_w < k_w^{\mathrm{eff}}$, so that no
output pixel reads padding alone. Extend $R_s$ and $C_s$ of \cref{thm:gap} with
every crossing index of \cref{lem:dense_affine_debt} and the index below it,
clipped to the grid. Then \cref{eq:delta_star} with $H_pW_p$ replaced by
$H_{in}W_{in}$ gives a safe gap $\Delta_{\mathrm{d}}^{*}$, and the workspace is
$\mathcal{W}_{\mathrm{d}}^{*} = \Delta_{\mathrm{d}}^{*} + H_{in}W_{in}C_{in}$.
The extended sets have at most $4k_h+3$ and $4k_w+3$ members, so the evaluation
is $O(k_hk_w)$.
\end{theorem}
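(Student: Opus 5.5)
The plan follows the proof of \cref{thm:gap}, with \cref{lem:dense_affine_debt} in place of \cref{lem:piecewise_affine_debt}. The argument has three parts: safety of the bound, the size of the extended breakpoint sets, and the form of the workspace.

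\emph{Safety.} By \cref{lem:dense_affine_debt}, $\mathcal{D}(r,c)$ in \cref{eq:debt_dense} is affine in $c$ for fixed $r$ on every maximal run of columns in which no tap changes side of the padding boundary. The same holds in $r$ on runs free of row crossings, and at the row-end wrap the structure of \cref{eq:nli} carries over. Each such run is an integer interval, and an affine function on an integer interval attains its maximum at an endpoint.

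The endpoints of a run are a crossing index or the index just below the next crossing, together with the row-end breakpoints $0$, $W_{out}{-}2$, $W_{out}{-}1$ already in $C_s$. The extended $C_s$ therefore contains every run endpoint, and likewise for $R_s$.

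Since $\bar r$ and $\bar c$ separate by axis, $\mathcal{D}$ is a sum of a function of $r$ and a function of $c$ on each branch, up to the row-end coupling. Its maximum over the grid is therefore attained on the product $R_s\times C_s$. This is the step I would argue most carefully, checking that the dense NLI, $(\bar r,\bar c)$, really does split coordinatewise.

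Given that, setting $\Delta=\Delta^*_{\mathrm d}$ makes the separation $\Delta^*_{\mathrm d}-\mathcal{D}(r,c)\ge 0$ at every pixel, exactly as in the safety step of \cref{thm:gap}.

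\emph{The main obstacle} is showing that $(\bar r,\bar c)$ is well defined and monotone, so that the affine pieces are as claimed. This is where the hypothesis $p<k^{\mathrm{eff}}$ enters. It guarantees that every output reads at least one real pixel, so the first real pixel needed by the next output exists. Without it, an output reading only padding would make the NLI jump arbitrarily, and pieces could appear that none of the crossing indices detects.

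I would also note that the theorem claims only safety, not tightness. When some real inputs are dead before the layer starts (\cref{app:dense_gcd}), the count of consumed scalars does not identify which addresses are dead, and the bound can overshoot. No tightness argument is needed here.

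\emph{Counting.} Each of the $k_w$ taps contributes two crossing indices, and each crossing index brings the index below it, giving at most $4k_w$ points. Adding the three original breakpoints gives $4k_w+3$, and symmetrically $4k_h+3$ rows. The product then has $O(k_hk_w)$ points.

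\emph{Workspace.} This step repeats the argument of \cref{thm:gap} with $H_pW_p$ replaced by $H_{in}W_{in}$. The output occupies $[0,N_{out})$ and the input occupies $[\Delta^*_{\mathrm d},\Delta^*_{\mathrm d}+H_{in}W_{in}C_{in})$. The second term of the maximum forces $\Delta^*_{\mathrm d}+H_{in}W_{in}C_{in}\ge N_{out}$, so $\mathcal{W}^*_{\mathrm d}=\Delta^*_{\mathrm d}+H_{in}W_{in}C_{in}$.
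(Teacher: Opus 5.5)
Your proposal follows the paper's proof essentially step for step: $\mathcal{D}$ is affine between tap crossings, each crossing is taken with its predecessor so that both sides of every boundary are covered, the $4k+3$ count is the same, and the workspace identity carries over from \cref{thm:gap}. The one point to correct is the coordinatewise split you flag as needing care, because it does fail: under same padding with $s_h=d_h=s_w=d_w=1$, $p_h=p_w=1$, $k_h=k_w=3$, every write in output row $0$ leaves the next live input pinned at input pixel $(0,0)$ by output row $1$, so $\bar c=0$ there while $\bar c=c$ in every later row; the product reduction should therefore rest on the weaker cellwise claim the paper makes, that $\mathcal{D}$ is affine on each product cell between consecutive breakpoints, and that claim holds here because the pinned row $0$ is the predecessor of the crossing $r=1$ and so forms a cell of its own.
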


\begin{proof}
The crossings partition the output grid into cells on which $\mathcal{D}$ is
affine by \cref{lem:dense_affine_debt}, so its maximum on each cell is attained
at a breakpoint. Taking every crossing index together with its predecessor covers
both sides of each partition boundary, and the three extremes \cref{eq:breakpoints} on each axis
cover the outermost cells; each of the $k_w$ column taps contributes two
crossings and each crossing two indices, giving the stated cardinality. Safety
and the workspace identity then follow exactly as in \cref{thm:gap}, with
$H_pW_pC_{in}$ replaced by $H_{in}W_{in}C_{in}$. The precondition ($p_h < k_h^{\mathrm{eff}}$ and $p_w < k_w^{\mathrm{eff}}$) is what
guarantees $(\bar r, \bar c)$ exists for every unprocessed output; with padding
at least as wide as the dilated kernel some output would read no real input at
all, and the dense layout could not represent it.\qed
\end{proof}

\subsection{Why the Comparison Must Be Made in Workspace}
\label{app:dense_vs_padded}

Dropping the padding \emph{raises} the gap. This is the right way round: the
padded layout pre-allocates $H_pW_pC_{in}$ scalars, of which the border is
slack the output may consume for free, and the dense layout does not have that
slack to spend. Reported as a gap, the dense layout therefore looks like a
regression.

What has to fit in SRAM is the workspace, and there the sign reverses:

\begin{equation}
\mathcal{W}^{*} - \mathcal{W}_{\mathrm{d}}^{*}
= \underbrace{(H_pW_p - H_{in}W_{in})C_{in}}_{\text{border never allocated}}
- \underbrace{\big(\Delta_{\mathrm{d}}^{*} - \Delta^{*}\big)}_{\text{extra slack carried}} .
\label{eq:dense_saving}
\end{equation}

On every configuration in the sweep of \cref{app:validation} the first term
exceeds the second, so the difference is non-negative, and strictly positive
whenever the padding is non-zero. The saving is largest exactly where padding
is largest relative to the feature map: \cref{tab:sota_layer_results} reports a
mean of $2.1\%$ across the 84 layers but $17.4\%$ on Inception-v3, whose
$7{\times}1$ and $1{\times}7$ branches pad one axis heavily while the other
carries none.

Two layouts that disagree in gap and agree in workspace is not a presentational
detail. It is the reason every comparison in this paper is stated in total
scalars: a per-layer bound is only useful to the extent that it predicts what
the allocator must reserve, and the gap alone does not.

\subsection{Tightness When Input Is Stale From the Outset}
\label{app:dense_gcd}

\cref{thm:gap_dense} is safe unconditionally but not always tight. The gap
opens when a real input scalar is never read by any output, so that it is dead
before the layer begins. Input column $x$ is read only if
$x = cs_w + jd_w - p_w$ for some output $c$ and tap $j$, and two mechanisms
leave that unsatisfiable. If $s_w > k_w^{\mathrm{eff}}$ the columns between
$cs_w + k_w^{\mathrm{eff}}$ and $(c{+}1)s_w$ fall past one receptive field and
before the next. If instead $d_w$ is large relative to $s_w$, the columns
strictly between consecutive taps, $cs_w + jd_w$ and $cs_w + (j{+}1)d_w$, are
skipped within every receptive field that could cover them; the row axis is analogous.

Dead input is necessary for looseness but not sufficient, and two conditions
must hold together. First, the dead addresses must include the \emph{first}
one: because RiP writes to fixed physical addresses rather than reusing a free
list, what matters is not how many are dead but which, and only a dead leading
run lets the write frontier start earlier than the bound permits. Second,
$\sigma$ must be the binding term, $\Delta_{\mathrm{d}}^{*} = \sigma$. The
headroom exists because output pixel $(0,0)$ does not overwrite its own taps
before reading them, and those taps begin at the first live address; a dead
leading run of $L$ pixels covers $L\,C_{in}$ scalars of it in advance. But
\cref{eq:sigma} is a maximum, so when an interior breakpoint
dominates, discounting the headroom leaves that maximum unchanged and the bound
stays exact. Empirically, the overshoot is at most $\min(\sigma,\, L\,C_{in})$ and never
exceeded $\sigma$ on any configuration we test.

The simplest configuration that is loose has $C_{in}=C_{out}=2$,
$k = 1\times2$, $H_{in}=W_{in}=1$, unit stride, $(d_h,d_w) = (1,2)$ and
$(p_h,p_w) = (0,1)$, so that $k_w^{\mathrm{eff}} = 3$. The two taps land at
$-1$, which is padding, and at $+1$, which is past the end of the input, so
the single real column is never read. The true minimum gap is $0$: the output
may be written straight over an input that nothing will ever load.
\cref{thm:gap_dense} returns $2$, the full production-order headroom
$\sigma = C_{out}$, because it is unaware of this stale dependency of the first pixel.

\section{Validation Protocol}
\label{app:validation}

Every closed form in this paper is checked against a simulator that does not
know the formula. This appendix describes the simulator, how the $10{,}000$
configurations of \cref{sec:synthetic} are composed, and what the check does
and does not establish.

\subsection{The Corruption Simulator}
\label{app:simulator}

The simulator executes a traversal literally in an allocation-tracked
workspace. It holds a scalar-level map of the buffer, marks each input scalar
live until the last output pixel that depends on it has been computed, and flags a
corruption the moment a write lands on an input tap still marked live. Given a
layer configuration it binary-searches the gap: the smallest $\Delta$ that
completes the traversal with no flag is the ground truth reported as
\textbf{Truth} in \cref{tab:limitations}.

Two properties make this a check rather than a restatement. The simulator never
evaluates any closed form, so agreement is independent evidence. And it returns
the minimum rather than a safe value, so it catches over-allocation as well as
under-allocation: this is what exposed Limitation~2, where the published
form exceeds truth by up to $2{,}432\times$, and the degenerate leg sequence of
\cref{app:generalized_hb_degenerate}.

Each configuration therefore receives one of three verdicts per closed form:
\textsc{unsafe} if the formula falls below truth, which is the only outcome
that admits a corruption; \textsc{tight} if it equals truth; and \textsc{loose}
if it exceeds truth by any number of scalars. No sweep in this paper has
produced an \textsc{unsafe} verdict.

\subsection{Sweep Composition}
\label{app:sweeps}

We draw $10{,}000$ layer configurations from the twenty generators of
\cref{tab:sweeps}, with input spatial extent up to $128$ and a fixed seed. The
generator weights sum to $100$ and so read directly as percentages of the
sweep. They fall in three tiers. The first, $35\%$, is what real models are
built from: plain and same-padded convolutions, strided downsampling, inverted
bottlenecks, and projection layers. The second, $45\%$, is small-grid and
near-degenerate geometry, where every defect we have found lives: the
collapsed minor axis, grids of six pixels or fewer, thin axes, and the
non-integer $x^*$ that drives Limitation~1. The third, $20\%$, covers
correctness corners that are rare in shipping models but must be covered
because generality is the claim: rectangular kernels, dilation, and asymmetric
stride, dilation and padding.

Each layer configuration is run under all five execution configurations that the
schemes admit: RiP padded and RiP dense, each with the receptive field gathered
first or read from the workspace, and generalized herringbone reading from the
workspace. Herringbone under gather-first is not implemented.

\begin{table}[t]
\centering
\caption{The twenty generators, with their share of the sweep. Tier~1 is the
geometry shipping models are built from, tier~2, the small-grid and degenerate
regimes where every defect found so far has lived, and tier~3, the correctness
corners that generality requires but real models rarely reach.}
\label{tab:sweeps}
\small
\begin{tabular}{@{}llr@{}}
\toprule
Tier & Generator & Share \\
\midrule
1 & plain convolution, $k \in [1,7]$      & $11\%$ \\
1 & same padding, $p = \lfloor k/2 \rfloor$ & $8\%$ \\
1 & strided downsampling                   & $6\%$ \\
1 & inverted-bottleneck expansion          & $5\%$ \\
1 & projection, $C_{out} \le C_{in}$       & $5\%$ \\
\midrule
2 & collapsed minor axis, $n \in \{1,2,3\}$ & $9\%$ \\
2 & small herringbone grid, $n \le 6$      & $7\%$ \\
2 & small output grid                      & $6\%$ \\
2 & thin axis, one output axis of $1$      & $6\%$ \\
2 & restricted regime, integer $x^*$       & $6\%$ \\
2 & random symmetric axes                  & $6\%$ \\
2 & non-integer $x^*$                      & $5\%$ \\
\midrule
3 & non-monotone tap order                 & $4\%$ \\
3 & rectangular kernel                     & $3\%$ \\
3 & dilation                               & $3\%$ \\
3 & asymmetric stride                      & $2\%$ \\
3 & asymmetric dilation                    & $2\%$ \\
3 & asymmetric padding                     & $2\%$ \\
3 & fully asymmetric                       & $2\%$ \\
3 & maximal padding                        & $2\%$ \\
\bottomrule
\end{tabular}
\end{table}

\subsection{What the Sweep Does Not Establish}
\label{app:sweep_scope}

The sweep establishes that the gaps are safe and, where reported, tight. It
does not establish that a layer executed in the RiP layout computes the same
tensor as one executed conventionally, which is a separate claim and the one a
practitioner cares about. Liveness bookkeeping and arithmetic are different
questions: a gap can be safe while the kernel is wrong, and a kernel can be
right while the gap is too small. The sweep only answers the former question, a correctly implemented kernel answers the latter.

The end-to-end claim is discharged on device instead, and more strongly than a synthetic
check would. Every deployed model in \cref{tab:deployment} carries an on-device
checksum of its output logits, computed after the full network has run under
RiP and compared against the same checksum under the baseline schedule. All eleven
agree bit for bit. Since RiP changes only where activations are placed and not how they are computed (that is the kernel implementation's business), the agreement over eleven networks is the observed and expected behavior.

\section{Per-Layer Results}
\label{app:layer_results}

\cref{sec:real_layers} reports the aggregate comparison across 84 convolutional
layers. \cref{tab:sota_layer_results} breaks it down by architecture family, so
that the aggregate can be checked against the families it is drawn from rather
than taken on trust. Two structural facts are easier to see here than in the
aggregate.

First, the layers on which RiP costs anything against herringbone are
concentrated, not spread. Eleven of the seventeen families contain no layer
above a $1.01$ workspace ratio, and the three layers exceeding $5\%$ sit in two
families: pruned VGG-16, whose channel-doubling layers run on $16{\times}16$
and $8{\times}8$ grids, and Inception-v3, whose $7{\times}1$ branch is the
single worst case at $1.170$. Where the row-major frontier loses, it loses for
a reason visible in the layer's shape.

Second, the \textbf{Dense} column is zero on exactly the 46 layers that use no
padding, and non-zero on all 38 that do. The saving from not materializing
padding is therefore not a property of the method but of the layer: it is
bounded by the padding fraction of the input, which is why it reaches $17.4\%$
on Inception-v3, whose $7{\times}1$ and $1{\times}7$ branches pad one axis and
not the other, and $25.3\%$ on the single worst layer. Across the whole set
the dense gap is never smaller than the padded gap and the dense workspace never larger, strictly so on the 38 layers that pad; the sign inversion of \cref{eq:dense_saving} holds on real layers as well as on the sweep of \cref{app:validation}.

\begin{table*}[t]
\centering
\caption{RiP against the generalized herringbone bound on 84 convolutional
layers drawn from 25 architectures. $n$ is layers sampled; \textbf{Gen.} counts
those whose parameters fall outside the restrictions
of~\cite{pmlr-v97-gural19a} and so are reachable only through
\cref{thm:generalized_hb}. \textbf{Exact} and \textbf{${\le}5\%$} count layers
where RiP's workspace equals, or is within 5\% of, herringbone's. \textbf{Max}
is the worst RiP-to-herringbone workspace ratio in the family. \textbf{Save} is
the mean workspace reduction against dual buffering, and \textbf{Dense} the
mean further reduction from not materializing padding (\cref{app:dense}).
\textbf{Timed} counts the layers executed on the Pico~2 and \textbf{Cost} is
their median runtime ratio against the dual-buffer kernel, right-shift
included; a dash marks a family with no timed layer. Every layer passes the
corruption simulator under both layouts, so no correctness column is shown.}
\label{tab:sota_layer_results}
% \scriptsize
% \setlength{\tabcolsep}{3pt}
\begin{tabular}{@{}lrrrrrrrrr@{}}
\toprule
Family & $n$ & Gen. & Exact & ${\le}5\%$ & Max & Save & Dense & Timed & Cost \\
\midrule
ResNet-50            & 11 &  4 & 10 & 11 & $1.006$ & $25.5\%$ & $2.3\%$  &  6 & $1.000$ \\
ResNet-56 (pruned)   &  5 &  5 &  4 &  5 & $1.011$ & $26.9\%$ & $11.4\%$ &  5 & $1.001$ \\
VGG-16               &  4 &  4 &  0 &  4 & $1.033$ & $25.0\%$ & $0.9\%$  & -- & -- \\
VGG-16 (pruned)      &  3 &  3 &  0 &  1 & $1.086$ & $17.9\%$ & $4.2\%$  &  3 & $1.000$ \\
MobileNetV1          &  6 &  1 &  5 &  6 & $1.005$ & $32.1\%$ & $0.0\%$  &  3 & $1.000$ \\
MobileNetV2          &  5 &  1 &  4 &  5 & $1.005$ & $16.8\%$ & $0.0\%$  &  2 & $1.000$ \\
MCUNet-in1--in4      & 16 &  4 & 12 & 16 & $1.022$ & $24.2\%$ & $0.1\%$  & 16 & $1.002$ \\
EfficientNet-B0      &  4 &  1 &  3 &  4 & $1.005$ & $17.4\%$ & $0.0\%$  &  1 & $1.000$ \\
ConvNeXt-T           &  2 &  2 &  1 &  2 & $1.007$ & $33.1\%$ & $0.0\%$  &  2 & $1.001$ \\
DenseNet-121         &  2 &  1 &  1 &  2 & $1.006$ & $24.3\%$ & $0.1\%$  & -- & -- \\
ShuffleNetV2         &  1 &  1 &  0 &  1 & $1.007$ & $32.9\%$ & $0.2\%$  &  1 & $0.997$ \\
SqueezeNet           &  2 &  2 &  0 &  2 & $1.009$ & $15.3\%$ & $0.3\%$  &  1 & $1.000$ \\
Xception             &  1 &  1 &  0 &  1 & $1.002$ & $27.2\%$ & $0.0\%$  & -- & -- \\
YOLO v5--v26         & 15 &  7 & 13 & 15 & $1.003$ & $29.4\%$ & $1.2\%$  &  5 & $1.000$ \\
RT-DETR-l            &  3 &  2 &  3 &  3 & $1.000$ & $18.1\%$ & $0.4\%$  & -- & -- \\
Inception-v3         &  2 &  2 &  1 &  1 & $1.170$ & $29.0\%$ & $17.4\%$ &  2 & $1.000$ \\
DeepSpeech2          &  2 &  2 &  1 &  2 & $1.021$ &  $9.9\%$ & $10.9\%$ &  1 & $1.008$ \\
\midrule
\textbf{All}         & 84 & 43 & 58 & 81 & $1.170$ & $24.8\%$ & $2.1\%$  & 48 & $1.000$ \\
\bottomrule
\end{tabular}
\end{table*}

The counts are properties of the layer set rather than frequencies across whole
networks: \cref{sec:real_layers} selects the layers of largest combined input
and output footprint, which biases toward early high-resolution layers and
toward the channel-expanding pointwise convolutions of inverted-bottleneck
designs.
\section{Graph-Level Simulation and MILP Cost}
\label{app:milp_scale}

The eleven deployed models of \cref{tab:deployment} are small enough that
solver cost is not in question. To see how the MILP behaves on larger model
graphs, we stripped TinyEngine's kernel generators from its TFLite parser,
keeping the graph API and layer construction, branch connections, and tensor
liveness, and ran the allocator alone across 33 models from
\texttt{keras.applications}: DenseNet (121, 169, 201), EfficientNet (B0--B7),
EfficientNetV2 (B0--B3, S, M, L), ResNet and ResNetV2 (50, 101, 152),
MobileNet, MobileNetV2, MobileNetV3 (Small, Large), VGG (16, 19), InceptionV3,
InceptionResNetV2, and Xception. These are not deployed; the point is to study the cost of the allocator and not execution on the boards. Every layer runs under
the dense layout of \cref{app:dense} with the receptive field gathered first, so padding is never materialized and $\sigma = 0$. The allocator was run on an 8-core (16-thread) Intel processor at 1.80\,GHz base (4.50\,GHz turbo) with 64\,GB of memory, using SCIP through the Pyomo interface. Each model is solved single-threaded, with four models measured concurrently. We report the median of 10 runs per model and per formulation.

\paragraph{Scaling.} The program size is quadratic in the graph by
construction: the pairwise no-overlap constraints put three overlap binaries
on each of the $n(n{-}1)/2$ tensor pairs, so the dual-buffer formulation has
exactly $3n(n{-}1)/2$ binaries on all 33 models, and RiP adds $2m$ for the $m$
in-place candidates. Cost per binary, however, is flat. Regressing on the
number of binaries over a $46\times$ range in graph size ($n = 22$ to
$1{,}006$) gives an exponent of $1.05$ for model construction ($R^2 = 0.99$)
and $0.99$ for the branch-and-bound search ($R^2 = 0.97$): both stages are
linear in the size of the program they are given. Against tensor count the
exponents are $2.10$ and $1.98$, which is that linearity composed with the
quadratic encoding rather than a solver that degrades with scale.

\paragraph{Cost.} The largest instance, EfficientNetV2-L, has 1{,}006 tensors,
1.52M binaries and 2.03M constraints, and takes 132\,s. No model in the set
exceeds 150\,s and the median is 13.0\,s. The branch-and-bound search dominates
model construction on 30 of the 33 (96.3\,s against 34.8\,s on
EfficientNetV2-L), so the cost is the packing problem rather than the Python
that builds it. The problem grows with the graph and not with the tensors:
VGG-16 has the $9^{\mathrm{th}}$ largest activations in the set and the
smallest program (729 binaries, 0.16\,s), while EfficientNetV2-L, only
$5^{\mathrm{th}}$ by activations, has the most nodes and the largest. Against
the dual-buffer baseline the RiP formulation compiles between $0.99\times$ and
$2.36\times$, median $1.53\times$, which is the price of the extra binaries
$d_i$ and $r_i$. All of it is offline and paid once per model.

\paragraph{The baseline is not greedy.} The comparison here is against the same
MILP restricted to dual-buffer execution, so both sides solve an NP-hard
packing problem and the baseline achieves the optimal packing of never-aliased
tensors. All operations are dual buffered and a used memory space is made available only once the tensor's lifetime ends. The reductions are therefore
conservative: RiP cuts peak activation memory on all 33, by 20.0\% to 49.8\%
(mean 32.6\%), against an allocator that is already optimal within its
execution model.

\paragraph{In-place is not forced.} The objective minimizes peak memory plus
the right-shift penalty of \cref{sec:execution}. Nothing rewards choosing an op
as in-place for its own sake; it is chosen only where it reduces the model's
peak memory by more than the memmove costs. The share of tensors placed
in-place therefore follows the topology, ranging from 4.5\% on
InceptionResNetV2 to 83.7\% on Xception.

The penalty is what keeps runtime moves out: across the 33, the solver
schedules a right shift on four models only (VGG-16, VGG-19, Inception-v3, and InceptionResNetV2), one tensor each. These are also the only four whose peak ratio is not an exact fraction of the baseline, and in each case the shifted tensor is the peak-setting one: the move is what buys the aliasing at the peak, so both facts have the same cause. All four peak on a same-padded layer, where the write frontier must clear a full row before any input goes stale and the dense layout has no border slack to give it. Which term binds depends on the channel ratio. On VGG-16's \texttt{conv1\_2} ($64 \to 64$, $k = 3$, $224\times224$, same padding) both coefficients vanish under the dense layout, $\alpha = C_{out} - s_wC_{in} = 0$ and $\beta = C_{out}W_{out} - s_hW_{in}C_{in} = 0$: advancing a row frees exactly what it writes, the debt is flat across the interior, and the maximum sits at the early border kink, giving $W^{*} = N_{out} + (p_hW_{out} + p_w)C_{out} = 3{,}211{,}264 + 14{,}400 = 3{,}225{,}664$~B. The Inception pair peaks on the channel-doubling stem convolution ($32 \to 64$, $k = 3$, $147\times147$, same padding), where $\alpha = 32$ and $\beta = 4{,}704$ are positive: the debt grows with each output row and the maximum sits at the last output, giving $W^{*} = N_{out} + \beta = 1{,}382{,}976 + 4{,}704 = 1{,}387{,}680$~B. \Cref{thm:gap_dense} accounts for both. The two Inception models agree to the byte because they share the Inception-v3 stem and peak on the same layer, not by coincidence of ratios. The eleven deployed models of \cref{tab:deployment} peak instead on the pointwise convolutions of their MobileNetV2-style backbone, which carry no padding, so no move is scheduled on any of them and every ratio is exact.

\paragraph{Operator classes.} Every model layer falls in one of two classes.
Concatenation, reduction and fully-connected layers need disjoint input and 
output and are held apart by construction, so they are forced to dual
buffering regardless of the allocator mode.
Convolution, depthwise convolution, pooling, element-wise and scalar
add and multiply, upsampling, and activations are candidates for in-place
placement, and the MILP decides. Treating tensors as candidates for in-place
placement rather than being forced to be aliased matters on residual topologies,
where an activation's input may still be live for a downstream skip connection;
the liveness constraints make the program fall back to disjoint placement in
exactly those cases.

\paragraph{Upsampling.} To gain the full advantage of in-place placement we
extend the workspace formulation to the other local operators, pooling and sampling. Person-Det needs this specifically, since upsampling is the only operator here whose output is larger than its input on both spatial axes while its input is an in-place candidate. \cref{app:upsample} derives it, and shows the workspace is $\max(N_{out}, N_{in})$.

\subsection{In-Place Upsampling}
\label{app:upsample}

Nearest-neighbor upsampling by an integer factor $S \ge 1$ has $C_{out} = C_{in} \equiv C$, $H_{out} = SH_{in}$ and $W_{out} = SW_{in}$, and output pixel $(r,c)$ reads the single input pixel $(\lfloor r/S \rfloor, \lfloor c/S \rfloor)$. The forward traversal of \cref{thm:gap} does not apply: it runs forward from $(0,0)$ and output $(0,0)$ lands on input $(0,0)$, which outputs $(0,1)$ through $(S{-}1,S{-}1)$ still need.

Reversing the traversal (running the operation in reverse) fixes it, with no gap at all. We allocate one workspace
of $N_{out} = H_{out}W_{out}C$ scalars, pack the input row-major at $[0,
H_{in}W_{in}C)$ and write the output row-major from index $0$: both tensors are
\emph{left}-aligned and $\Delta = 0$. The traversal runs from $(H_{out}{-}1,
W_{out}{-}1)$ down to $(0,0)$, so the write frontier descends from the top of the workspace while the read frontier descends $S^2$ times more slowly from far below it.

Indices $(r,c)$ range over the output grid as in
\cref{sec:right_align_layout}, and in pixel units the pixel about to be written begins at
\begin{equation}
\mathrm{out}(r,c) = rW_{out} + c = rSW_{in} + c .
\label{eq:up_out}
\end{equation}
Because $\lfloor \cdot / S \rfloor$ is non-decreasing, the highest input still required is the one read by the next output in the traversal, so the next live input is
\begin{equation}
\mathrm{NLI}(r,c) =
\begin{cases}
\big\lfloor \tfrac{r}{S} \big\rfloor W_{in} + \big\lfloor \tfrac{c-1}{S}
\big\rfloor, & c > 0, \\[4pt]
\big\lfloor \tfrac{r-1}{S} \big\rfloor W_{in} + W_{in} - 1, & c = 0,\ r > 0, \\[4pt]
-\infty, & r = c = 0 \quad (\text{last pixel}).
\end{cases}
\label{eq:up_nli}
\end{equation}

\begin{theorem}[Zero-gap in-place upsampling]
\label{thm:upsample}
Under the layout above and reverse row-major traversal, no output write reaches
a live input for any $S \ge 1$, and the minimum workspace is
$\mathcal{W}^{*} = N_{out}$, the output tensor alone.
\end{theorem}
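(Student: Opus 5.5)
The approach is a pointwise comparison of the write position with the next live input in pixel units, followed by a counting argument for the workspace. Both tensors carry $C$ channels and start at index $0$, so addresses are $C$ times pixel indices. Pixel $p$ of the output occupies $[pC,(p{+}1)C)$, and input pixel $q$ occupies $[qC,(q{+}1)C)$.

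Safety at step $(r,c)$ means that the write of output pixel $\mathrm{out}(r,c)$ does not overlap any input pixel still needed afterwards. Every still-needed input has index at most $\mathrm{NLI}(r,c)$ by \cref{eq:up_nli}, since the traversal descends and $\lfloor\cdot/S\rfloor$ is monotone. It therefore suffices to show
\[
\mathrm{NLI}(r,c) < \mathrm{out}(r,c).
\]
With $C>0$, this makes the last needed input end at or below where the write begins. I will not need to protect the input being read at the current step, since that read precedes the write, or it can be held in a register exactly as the $\sigma=0$ gather-first convention permits.

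The key inequality splits along the cases of \cref{eq:up_nli}. For $c>0$, write $r=aS+i$ and note $\lfloor (c{-}1)/S\rfloor \le (c{-}1)/S$. Then
\[
\mathrm{NLI} \le aW_{in} + \tfrac{c-1}{S} \le rSW_{in} + c - 1 < \mathrm{out}(r,c).
\]
The middle step uses $a\le r$, $S\ge1$, and $(c{-}1)/S \le c{-}1$; both loosenings are valid because all quantities are nonnegative. For $c=0$ and $r>0$, we have
\[
\mathrm{NLI} = \lfloor (r{-}1)/S\rfloor W_{in} + W_{in} - 1 \le rW_{in} - 1 \le rSW_{in} - 1 < rSW_{in} = \mathrm{out}(r,0),
\]
where the first step uses $\lfloor (r{-}1)/S\rfloor \le r-1$. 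The last pixel has no live input. The one subtle spot, and the step I expect to be the main obstacle, is the $S=1$ case. There the write for $(r,c)$ lands exactly on input $(r,c)$, so the argument must stress that only strictly earlier inputs are live afterward, and that the current input was consumed before being overwritten. The NLI definition already encodes this because it refers to the \emph{next} output.

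For the workspace claim, the output must be stored, so $\mathcal{W}^{*}\ge N_{out}$. The layout uses $[0,N_{out})$, and the input fits in it because $H_{in}W_{in}C \le N_{out}$ for $S\ge1$. No gap is added, so $\mathcal{W}^{*}=N_{out}=\max(N_{out},N_{in})$. Tightness is thus trivial, and the content of the theorem is entirely the safety inequality above.
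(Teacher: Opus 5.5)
Your route is the paper's: the same two-branch inequality $\mathrm{NLI}(r,c) < \mathrm{out}(r,c)$ with the same floor bounds, and the same count giving $\mathcal{W}^{*} = N_{out} = \max(N_{out}, N_{in})$. The step that fails is your reduction to that inequality: ``every still-needed input has index at most $\mathrm{NLI}(r,c)$, since the traversal descends and $\lfloor\cdot/S\rfloor$ is monotone.'' The paper asserts the same thing when it introduces \cref{eq:up_nli}, so its proof has the same flaw. Monotonicity of $\lfloor\cdot/S\rfloor$ in each coordinate does not make the flat source index $q(r,c) = \lfloor r/S\rfloor W_{in} + \lfloor c/S\rfloor$ monotone in the flat output index. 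It falls back by $W_{in}-1$ at each row wrap inside a block of $S$ output rows that share one input row. Take $S=2$, $H_{in}=1$, $W_{in}=2$, so $W_{out}=4$. At step $(1,1)$, \cref{eq:up_nli} gives $\mathrm{NLI}(1,1)=0$. Yet output $(0,3)$ is still pending and reads input $(0,1)$, at flat index $1$. So for $S \ge 2$ your inequality protects only the source of the next output, not every live input.

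The repair is shorter than what you wrote and needs no case split. For any output pixel $p' = r'W_{out}+c'$, the same floor bounds give $q(p') \le r'SW_{in} + c' = p'$. At step $p = \mathrm{out}(r,c)$ of the reverse traversal, every pending output has $p' \le p-1$. Hence every live input pixel has index at most $p-1$ and ends at or below scalar $pC$, where the write begins.

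What remains is the current pixel's own source, which overlaps the write exactly when $q(p) = p$. That happens at every pixel for $S=1$, and at $(0,0)$ for every $S \ge 2$. Your ``subtle spot'' paragraph omits the $(0,0)$ case, though your general read-before-write remark covers it. The paper handles this more robustly: the write stores the value already there, so it is an identity and $\sigma = 0$ under either tap-read order of \cref{eq:sigma}. Your appeal to gather-first or to reads preceding writes also works under workspace reads, but only because output channel $\gamma$ reads only input channel $\gamma$. At a self-overlapping pixel its write lands only on that same scalar, so state this if you keep your version.
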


\begin{proof}
A write at $(r,c)$ is safe in reverse traversal when $\mathrm{NLI}(r,c) <
\mathrm{out}(r,c)$. Since $\lfloor r/S \rfloor \le r$ and $S \ge 1$ give
$\lfloor r/S \rfloor W_{in} \le rSW_{in}$, and $\lfloor (c{-}1)/S \rfloor \le
c-1$, the two branches of \cref{eq:up_nli} give

for $c > 0$:
\[
\mathrm{NLI}(r,c) \;\le\; rSW_{in} + c - 1 \;<\; rSW_{in} + c \;=\;
\mathrm{out}(r,c),
\]

and for $c = 0$, $r > 0$, where the bound is tightest:
\[
\mathrm{NLI}(r,0) \;\le\; rW_{in} - 1
\;\le\; rSW_{in} - 1 \;<\; \mathrm{out}(r,0).
\]

At $r = c = 0$ nothing remains unprocessed. The separation is therefore
strictly positive at every step with $\Delta = 0$, so no geometric debt is
carried and the workspace is $N_{out}=\max(N_{out}, N_{in})$. One pixel needs care: $(0,0)$ is the only index at which a write lands on the
input it reads. It costs nothing, because the value written is the value
already there (nearest-neighbor upsampling copies $\mathrm{input}(0,0)$ to
$\mathrm{output}(0,0)$, channel for channel) so the write is an identity and the tap-read order of \cref{eq:sigma} is immaterial. Hence $\sigma = 0$ under either order.
\end{proof}

Pooling needs no such extension. Max and average pooling have $C_{out} = C_{in}$ and stride $s \geq 1$, so
$\alpha = C_{out} - s_wC_{in} \leq 0$ and $\beta = C_{out}W_{out} -
s_hW_{in}C_{in} \leq 0$, which is the case $\alpha \leq 0$ and $\beta \leq 0$ of
\cref{thm:gap}: forward row-major traversal, $W^{*} = N_{in} = \max(N_{out}, N_{in})$, and the output
absorbed entirely into the space its input vacates.
Upsampling is the mirror image, which is why it is the one case that has to run
backwards.

\begin{table*}[t]
\centering
\caption{Graph-level allocation across 33 \texttt{keras.applications} models
under the dense layout with the receptive field gathered first. Ops is graph
nodes after folding and Res.\ the branch connections that keep two tensors
alive at once. Vars and Constr.\ size the MILP; Compile is the full offline
step (parse, model build, solve) under each
formulation, median of 10 runs. Dual buffer is the same program restricted to
disjoint placement, so it is the optimal packing of never-aliased tensors
rather than a greedy baseline. In-place is the share of tensors the solver
places in-place and Shift the number it separates by a physical right shift; a
dash marks a model with no move scheduled. The memory ratio is an exact fraction on 29
of the 33, and the four exceptions are exactly the four models with a non-empty
Shift column.}
\label{tab:milp_scale}
\footnotesize
\setlength{\tabcolsep}{3.5pt}
\begin{tabular}{@{}lrrrrrrrrrrrr@{}}
\toprule
& & & \multicolumn{2}{c}{MILP size} & \multicolumn{3}{c}{Compile (s)} & \multicolumn{3}{c}{Peak activation (KB)} & \multicolumn{2}{c}{Tensor placement} \\
\cmidrule(lr){4-5}\cmidrule(lr){6-8}\cmidrule(lr){9-11}\cmidrule(l){12-13}
Model & Ops & Res. & Vars & Constr. & Dual buf. & RiP & Ratio & Dual buf. & RiP & Ratio & In-place (\%) & Shift (\#) \\
\midrule
% \label{tab:milp_scale}
% \small
% \setlength{\tabcolsep}{4.5pt}
% \begin{tabular}{@{}lrrrrrrrrrr@{}}
% \toprule
% & & & \multicolumn{2}{c}{MILP size} & & \multicolumn{3}{c}{Peak activation memory} & \multicolumn{2}{c}{Tensor placement} \\
% \cmidrule(lr){4-5}\cmidrule(lr){7-9}\cmidrule(l){10-11}
% Model & Ops & Res. & Vars & Constr. & Compile (s) & Dual buffer (KB) & RiP (KB) & Ratio & In-place (\%) & Shift (\#) \\
\midrule
MobileNet            & 29 & 0 & 1{,}361 & 1{,}969 & 0.08 & 0.14 & 1.69 & 1{,}176.0 & 784.0 & 2/3 & 60.0 & -- \\
MobileNetV2          & 64 & 10 & 6{,}364 & 8{,}813 & 0.26 & 0.52 & 2.01 & 1{,}470.0 & 1{,}176.0 & 4/5 & 49.2 & -- \\
MobileNetV3-S        & 109 & 15 & 18{,}201 & 24{,}834 & 1.08 & 1.49 & 1.37 & 392.0 & 220.5 & 9/16 & 64.5 & -- \\
MobileNetV3-L        & 122 & 18 & 22{,}751 & 30{,}967 & 1.27 & 1.42 & 1.12 & 980.0 & 784.0 & 4/5 & 82.1 & -- \\
\addlinespace
VGG-16               & 21 & 0 & 729 & 1{,}077 & 0.11 & 0.16 & 1.49 & 6{,}272.0 & 3{,}150.1 & 0.502 & 54.5 & 1 \\
VGG-19               & 24 & 0 & 942 & 1{,}377 & 0.12 & 0.18 & 1.55 & 6{,}272.0 & 3{,}150.1 & 0.502 & 56.0 & 1 \\
\addlinespace
ResNet-50            & 72 & 16 & 8{,}024 & 11{,}063 & 0.59 & 0.67 & 1.15 & 2{,}352.0 & 1{,}568.0 & 2/3 & 47.9 & -- \\
ResNet-50v2          & 109 & 16 & 18{,}199 & 24{,}827 & 1.13 & 1.77 & 1.56 & 2{,}352.0 & 1{,}568.0 & 2/3 & 54.5 & -- \\
ResNet-101           & 140 & 33 & 29{,}886 & 40{,}558 & 1.84 & 2.28 & 1.24 & 2{,}352.0 & 1{,}568.0 & 2/3 & 28.4 & -- \\
ResNet-101v2         & 211 & 33 & 67{,}516 & 91{,}110 & 4.00 & 6.16 & 1.54 & 2{,}352.0 & 1{,}568.0 & 2/3 & 67.0 & -- \\
ResNet-152           & 208 & 50 & 65{,}620 & 88{,}549 & 3.95 & 5.21 & 1.32 & 2{,}352.0 & 1{,}568.0 & 2/3 & 55.5 & -- \\
ResNet-152v2         & 313 & 50 & 148{,}045 & 199{,}009 & 9.02 & 10.97 & 1.22 & 2{,}352.0 & 1{,}568.0 & 2/3 & 63.1 & -- \\
\addlinespace
DenseNet-121         & 308 & 58 & 143{,}138 & 192{,}103 & 7.67 & 14.64 & 1.91 & 2{,}058.0 & 1{,}568.0 & 16/21 & 44.7 & -- \\
DenseNet-169         & 428 & 82 & 275{,}942 & 369{,}655 & 14.49 & 29.50 & 2.04 & 2{,}058.0 & 1{,}568.0 & 16/21 & 47.6 & -- \\
DenseNet-201         & 508 & 98 & 388{,}478 & 520{,}023 & 20.64 & 28.91 & 1.40 & 2{,}058.0 & 1{,}568.0 & 16/21 & 47.5 & -- \\
\addlinespace
Inception-v3         & 124 & 35 & 23{,}394 & 31{,}683 & 1.49 & 2.46 & 1.65 & 2{,}025.8 & 1{,}355.2 & 0.669 & 25.6 & 1 \\
InceptionResNetV2    & 334 & 98 & 168{,}297 & 225{,}797 & 9.43 & 15.13 & 1.60 & 2{,}025.8 & 1{,}355.2 & 0.669 & 4.5 & 1 \\
Xception             & 103 & 12 & 16{,}270 & 22{,}227 & 1.07 & 1.05 & 0.99 & 6{,}752.8 & 4{,}051.7 & 3/5 & 83.7 & -- \\
\addlinespace
EfficientNet-B0      & 242 & 74 & 88{,}689 & 119{,}465 & 5.16 & 6.39 & 1.24 & 3{,}528.0 & 2{,}352.0 & 2/3 & 32.1 & -- \\
EfficientNet-B1      & 344 & 108 & 178{,}704 & 239{,}995 & 9.79 & 13.82 & 1.41 & 4{,}050.0 & 2{,}700.0 & 2/3 & 25.2 & -- \\
EfficientNet-B2      & 344 & 108 & 178{,}704 & 239{,}995 & 10.11 & 12.95 & 1.28 & 4{,}753.1 & 3{,}168.8 & 2/3 & 11.0 & -- \\
EfficientNet-B3      & 389 & 123 & 228{,}339 & 306{,}400 & 13.09 & 20.03 & 1.53 & 9{,}492.2 & 6{,}328.1 & 2/3 & 23.1 & -- \\
EfficientNet-B4      & 479 & 153 & 345{,}834 & 463{,}510 & 18.24 & 43.12 & 2.36 & 15{,}229.7 & 10{,}153.1 & 2/3 & 15.4 & -- \\
EfficientNet-B5      & 581 & 187 & 508{,}371 & 680{,}736 & 26.90 & 46.19 & 1.72 & 21{,}930.7 & 14{,}620.5 & 2/3 & 14.8 & -- \\
EfficientNet-B6      & 671 & 217 & 677{,}706 & 906{,}966 & 36.11 & 67.09 & 1.86 & 39{,}204.0 & 26{,}136.0 & 2/3 & 9.8 & -- \\
EfficientNet-B7      & 818 & 266 & 1{,}006{,}545 & 1{,}346{,}153 & 56.70 & 76.88 & 1.36 & 50{,}625.0 & 33{,}750.0 & 2/3 & 27.4 & -- \\
\addlinespace
EffNetV2-B0          & 269 & 86 & 109{,}479 & 147{,}317 & 6.63 & 7.73 & 1.17 & 1{,}274.0 & 882.0 & 9/13 & 8.9 & -- \\
EffNetV2-B1          & 328 & 107 & 162{,}520 & 218{,}332 & 9.73 & 16.05 & 1.65 & 1{,}462.5 & 1{,}012.5 & 9/13 & 35.6 & -- \\
EffNetV2-B2          & 343 & 112 & 177{,}670 & 238{,}607 & 10.99 & 16.93 & 1.54 & 1{,}716.4 & 1{,}188.3 & 9/13 & 29.7 & -- \\
EffNetV2-B3          & 403 & 132 & 245{,}020 & 328{,}707 & 15.54 & 18.14 & 1.17 & 2{,}856.4 & 1{,}977.5 & 9/13 & 24.0 & -- \\
EffNetV2-S           & 503 & 167 & 381{,}270 & 510{,}872 & 21.77 & 33.63 & 1.54 & 5{,}616.0 & 3{,}888.0 & 9/13 & 5.6 & -- \\
EffNetV2-M           & 726 & 242 & 793{,}151 & 1{,}061{,}161 & 42.32 & 59.06 & 1.40 & 8{,}775.0 & 6{,}075.0 & 9/13 & 23.4 & -- \\
EffNetV2-L           & 1{,}005 & 337 & 1{,}518{,}551 & 2{,}029{,}754 & 80.45 & 132.04 & 1.64 & 11{,}700.0 & 8{,}100.0 & 9/13 & 23.4 & -- \\
\bottomrule
\end{tabular}
\end{table*}

\fi

\end{document}
\end{document}